\def\eqref#1{equation~\ref{#1}}
\def\1{\bm{1}}
\DeclareMathAlphabet{\mathsfit}{\encodingdefault}{\sfdefault}{m}{sl}
\SetMathAlphabet{\mathsfit}{bold}{\encodingdefault}{\sfdefault}{bx}{n}
\DeclareMathOperator*{\argmin}{arg\,min}
\DeclareMathOperator{\diag}{diag}
\DeclareMathOperator{\rank}{rank}
\DeclareMathOperator{\codim}{codim}
\DeclareMathOperator{\poly}{poly}
\newcommand{\Mat}[1]{{\boldsymbol{#1}}}
\newcommand{\Set}[1]{{\mathcal{#1}}}
\newcommand{\real}{\mathbb{R}}
\newcommand{\complex}{\mathbb{C}}
\newcommand{\range}{\mathcal{R}}
\newcommand{\setdist}{d_{\Pi}}
\newcommand{\cmark}{\ding{51}}%
\newcommand{\xmark}{\ding{55}}%
\newcommand{\pushright}[1]{\ifmeasuring@#1\else\omit\hfill$\displaystyle#1$\fi\ignorespaces}
\newcommand{\pushleft}[1]{\ifmeasuring@#1\else\omit$\displaystyle#1$\hfill\fi\ignorespaces}
\theoremstyle{plain}
\newtheorem{theorem}{Theorem}[section]
\newtheorem{proposition}[theorem]{Proposition}
\newtheorem{lemma}[theorem]{Lemma}
\newtheorem{corollary}[theorem]{Corollary}
\newtheorem{claim}[theorem]{Claim}
\theoremstyle{definition}
\newtheorem{definition}[theorem]{Definition}
\theoremstyle{remark}
\newtheorem{remark}[theorem]{Remark}
\title{Polynomial Width is Sufficient for Set Representation with High-dimensional Features}
\author{
Peihao Wang\textsuperscript{1}, Shenghao Yang \textsuperscript{3},
Shu Li \textsuperscript{4}, Zhangyang Wang \textsuperscript{1}, Pan Li\textsuperscript{2,4} \\
\textsuperscript{1}University of Texas at Austin,
\textsuperscript{2}Georgia Tech,
\textsuperscript{3}University of Waterloo,
\textsuperscript{4}Purdue University \\
\small{\texttt{\{peihaowang,atlaswang\}@utexas.edu,}}
\small{\texttt{panli@gatech.edu,}} \\
\small{\texttt{shenghao.yang@uwaterloo.ca,}}
\small{\texttt{shuli@purdue.edu,}}
}
\begin{document}

\maketitle

\begin{abstract}
Set representation has become ubiquitous in deep learning for modeling the inductive bias of neural networks that are insensitive to the input order.
DeepSets is the most widely used neural network architecture for set representation. It involves embedding each set element into a latent space with dimension $L$, followed by a sum pooling to obtain a whole-set embedding, and finally mapping the whole-set embedding to the output. In this work, we investigate the impact of the dimension $L$ on the expressive power of DeepSets.
Previous analyses either oversimplified high-dimensional features to be one-dimensional features or were limited to complex analytic activations, thereby diverging from practical use or resulting in $L$ that grows exponentially with the set size $N$ and feature dimension $D$.
To investigate the minimal value of $L$ that achieves sufficient expressive power, we present two set-element embedding layers: (a) linear + power activation (LP) and (b) linear + exponential activations (LE).
We demonstrate that $L$ being $\poly(N, D)$ is sufficient for set representation using both embedding layers. We also provide a lower bound of $L$ for the LP embedding layer. Furthermore, we extend our results to permutation-equivariant set functions and the complex field.
\end{abstract}

\section{Introduction}
\label{sec:intro}

Enforcing invariance into neural network architectures has become a widely-used principle to design deep learning models~\citep{lecun1995convolutional,cohen2016group,bronstein2017geometric,kondor2018generalization, maron2018invariant,bogatskiy2020lorentz,wang2022equivariant}. In particular, when a task is to learn a function with a set as the input, the architecture enforces permutation invariance that asks the output to be invariant to the permutation of the input set elements~\citep{qi2017pointnet,zaheer2017deepset}. Neural networks to learn a set function have found a variety of applications in particle physics~\citep{mikuni2021point,qu2020jet}, computer vision~\citep{zhao2021point,lee2019set} and population statistics~\citep{zhang2019deep,zhang2020fspool,grover2020stochastic}, and have recently become a fundamental module (the aggregation operation of neighbors' features in a graph~\citep{morris2019weisfeiler,xu2018how,corso2020principal}) in graph neural networks (GNNs)~\citep{scarselli2008graph,hamilton2017inductive} that show even broader applications. %

Previous works have studied the expressive power of neural network architectures to represent set functions~\citep{qi2017pointnet,zaheer2017deepset,maron2019universality,wagstaff2019limitations,wagstaff2022universal,segol2020universal,zweig2022exponential}. Formally, a set with $N$ elements can be represented as $\mathcal{S} =\{\Mat{x}^{(1)}, \cdots, \Mat{x}^{(N)}\}$ where $\Mat{x}^{(i)}$ is in a feature space $\mathcal{X}$, typically $\mathcal{X}=\real^D$. To represent a set function that takes $\mathcal{S}$ and outputs a real value, the most widely used architecture DeepSets~ \citep{zaheer2017deepset} follows Eq.~\ref{eqn:deepset}.
\begin{align} \label{eqn:deepset}
f(\mathcal{S}) = \rho\left(\sum_{i=1}^{N} \phi(\Mat{x}^{(i)})\right),\text{where $\phi: \mathcal{X} \rightarrow \real^L$ and $\rho: \real^{L} \rightarrow \real$ are continuous functions.}
\end{align}
DeepSets encodes each set element individually via $\phi$, and then maps the encoded vectors after sum pooling to the output via $\rho$. The continuity of $\phi$ and $\rho$ ensure that they can be well approximated by fully-connected neural networks~\citep{cybenko1989approximation,hornik1989multilayer}, which has practical implications. DeepSets enforces permutation invariance because of the sum pooling, as shuffling the order of $\Mat{x}^{(i)}$ does not change the output. However, the sum pooling compresses the whole set into an $L$-dimension vector, which places an information bottleneck in the middle of the architecture. Therefore, a core question on using DeepSets for set function representation is that given the input feature dimension $D$ and the set size $N$, what the minimal $L$ is needed so that the architecture Eq.~\ref{eqn:deepset} can represent/universally approximate any continuous set functions. %
The question has attracted attention in many previous works~\citep{zaheer2017deepset,wagstaff2019limitations,wagstaff2022universal,segol2020universal,zweig2022exponential} and is the focus of the present work.

\begin{table}[tb!]
\centering
\caption{A comprehensive comparison among all prior works on expressiveness analysis with $L$. Our results achieve the tightest bound on $L$ while being able to analyze high-dimensional set features and extend to the equivariance case.}
\label{tab:comparison}
\resizebox{0.9\textwidth}{!}{
\begin{tabular}{l c c c c}
\toprule
\textbf{Prior Arts} & \textbf{$L$} & \textbf{$D > 1$} & \textbf{Exact Rep.} & \textbf{Equivariance} \\
\hline
DeepSets \citep{zaheer2017deepset} & $N+1$ & \xmark & \cmark & \cmark \\
Wagstaff et al. \citep{wagstaff2019limitations} & $N$ & \xmark & \cmark & \cmark \\
\hline
Segol et al. \citep{segol2020universal} & ${N+D \choose N} - 1$ & \cmark & \xmark & \cmark \\
Zweig \& Bruna \citep{zweig2022exponential} & $\exp(\min\{\sqrt{N}, D\})$  & \cmark & \xmark & \xmark \\
\hline
Our results & $\poly(N, D)$  & \cmark & \cmark & \cmark \\
\bottomrule
\end{tabular}}
\vspace{-1.5em}
\end{table}

An extensive understanding has been achieved for the case with one-dimensional features ($D=1$). Zaheer et al.~\citep{zaheer2017deepset} proved that this architecture with bottleneck dimension $L=N$ suffices to \emph{accurately} represent any continuous set functions when $D=1$. Later, Wagstaff et al. proved that accurate representations cannot be achieved when $L<N$~\citep{wagstaff2019limitations} and further strengthened the statement to \emph{a failure in approximation} to arbitrary precision in the infinity norm when $L<N$~\citep{wagstaff2022universal}.      

However, for the case with high-dimensional features ($D>1$), the characterization of the minimal possible $L$ is still missing. Most of previous works~\citep{zaheer2017deepset,segol2020universal,gui2021pine} proposed to generate multi-symmetric polynomials to approximate permutation invariant functions~\citep{bourbaki2007elements}. As the algebraic basis of multi-symmetric polynomials is of size $L^*={N+D \choose N} - 1$~\citep{rydh2007minimal} (exponential in $\min\{D,N\}$), these works by default claim that if $L\geq L^*$, $f$ in Eq.~\ref{eqn:deepset} can approximate any continuous set functions, while they do not check the possibility of using a smaller $L$. \citet{zweig2022exponential} constructed a set function that $f$ requires bottleneck dimension $L > N^{-2}\exp(O(\min\{D, \sqrt{N}\}))$ (still exponential in $\min\{D, \sqrt{N}\}$) to approximate while it relies on the condition that $\phi,\rho$ only adopt \textit{complex analytic activation functions}. This condition is overly strict, as many practical neural networks work on real numbers~\footnote{Note that for $x,y\in\mathbb{R}$, the function in the complex domain $f_1(x+y\sqrt{-1})=x$ is not complex analytic, while the function with real variables $f_2([x,y])=x$ can be simply and accurately implemented by practical neural networks. Moreover, complex analytic functions are not dense in the space of complex continuous functions, while polynomials (and thus real analytic functions) are dense in the space of real continuous functions. So, the assumption considered in~\citep{zweig2022exponential} substantially limits the space of the functions that can be approximated.} and even allow the use of non-analytic activations, such as ReLU. 
\citeauthor{zweig2022exponential} thus left an open question \textit{whether the exponential dependence on $N$ or $D$ of $L$ is still necessary if $\phi,\rho$ work in the real domain and allow using non-analytic activations.} 

\textbf{Our Contribution.} The main contribution of this work is to confirm a negative response to the above question. Specifically, we present the first theoretical justification that $L$ being \emph{polynomial} in $N$ and $D$ is sufficient for DeepSets (Eq.~\ref{eqn:deepset}) like architecture to represent any real/complex \emph{continuous} set functions with \emph{high-dimensional} features ($D>1$). To mitigate the gap to the practical use, we consider two architectures to implement $\phi$ (in Eq. \ref{eqn:deepset}) and specify the bounds on $L$ accordingly:
\begin{itemize}[leftmargin=3mm]
\item $\phi$ adopts \textit{a linear layer with power mapping}: The minimal $L$ holds a lower bound and an upper bound, which is $N(D+1) \leq L < N^5D^2$. 
\item $\phi$ adopts \textit{a linear layer plus an exponential activation function}:  The minimal $L$ holds a tighter upper bound $L \leq N^4D^2$.
\end{itemize}
We start from the real domain and prove that if the function $\rho$ could be any continuous function, the above two architectures reproduce the precise construction of any set functions for high-dimensional features $D>1$, akin to the result in \citet{zaheer2017deepset} for $D=1$. This result contrasts with \citet{segol2020universal, zweig2022exponential} which only present approximating representations. If $\rho$ adopts a fully-connected neural network that allows approximation of any real continuous functions on a bounded input space~\citep{cybenko1989approximation,hornik1989multilayer}, then the DeepSets architecture $f(\cdot)$ can approximate any set functions universally on that bounded input space. We extend our theory to permutation-equivariant functions and set functions in the complex field, where the minimal $L$ shares the same bounds up to some multiplicative constants. 

Another comment on our contributions is that~\citet{zweig2022exponential} leverage difference in the needed dimension $L$, albeit with the complex analytic assumption, to illustrate the gap between DeepSets~\citep{zaheer2017deepset} and Relational Network~\citep{santoro2017simple} in their expressive powers, where the latter encodes set elements in a pairwise manner rather than in an element-wise separate manner. The gap well explains the empirical observation that Relational Network achieves better expressive power with smaller $L$~\citep{murphy2018janossy,wagstaff2019limitations}. Our theory does not violate such an observation while it shows that without the above strict assumption,  the gap can be reduced from an exponential order in $N$ and $D$ to a polynomial order.

\paragraph{Practical Implications.} Many real-world applications have computation constraints where only DeepSets instead of Relational Network can be used, e.g., the neighbor aggregation operation in GNN being applied to large networks~\citep{hamilton2017inductive}, and hypergraph neural diffusion operations in hypergraph neural networks~\citep{wang2022equivariant}. Our theory points out that in this case, it is sufficient to use polynomial $L$ dimension to embed each element, while one needs to adopt a decoder network $\rho$ with non-analytic activations.

\section{Preliminaries}
\label{sec:prelim}

\subsection{Notations and Problem Setup}
We are interested in the approximation and representation of functions defined over sets \footnote{In fact, we allow repeating elements in $\Set{S}$, therefore, $\Set{S}$ should be more precisely called multiset. With a slight abuse of terminology, we interchangeably use terms multiset and set throughout the whole paper.}. We start with the real field and then extend the result. 
In convention,
an $N$-sized set $\Set{S} = \{ \Mat{x}^{(1)}, \cdots, \Mat{x}^{(N)} \}$, where $\Mat{x}^{(i)} \in \real^D, \forall i \in [N](\triangleq\{1,2,...,N\})$, can be denoted by a data matrix $\Mat{X} = \begin{bmatrix} \Mat{x}^{(1)} & \cdots & \Mat{x}^{(N)} \end{bmatrix}^\top \in \real^{N \times D}$. Note that we use the superscript ${(i)}$ to denote the $i$-th set element and the subscript $i$ to denote the $i$-th column/feature channel of $\Mat{X}$, i.e., $\Mat{x}_i=\begin{bmatrix}x_i^{(1)} & \cdots & x_i^{(N)}\end{bmatrix}^\top$. Let $\Pi(N)$ denote the set of all $N$-by-$N$ permutation matrices. To characterize the unorderedness of a set, we define an equivalence class over $\real^{N \times D}$:
\begin{definition}[Equivalence Class] \label{def:equi_class}
If matrices $\Mat{X}, \Mat{X'} \in \real^{N \times D}$ represent the same set $\Set{X}$, then they are called equivalent up a row permutation, denoted as $\Mat{X} \sim \Mat{X'}$. Or equivalently, $\Mat{X} \sim \Mat{X'}$ if and only if there exists a matrix $\Mat{P} \in \Pi(N)$ such that $\Mat{X} = \Mat{P} \Mat{X'}$.
\end{definition}

Set functions can be in general considered as permutation-invariant or permutation-equivariant functions, which process the input matrices regardless of the order by which rows are organized.
The formal definitions of permutation-invariant/equivariant functions are presented as below:
\begin{definition} \label{def:permute_invariance}
(Permutation Invariance) A function $f: \real^{N \times D} \rightarrow \real^{D'}$ is called permutation-invariant if $f(\Mat{P}\Mat{X}) = f(\Mat{X})$ for any $\Mat{P} \in \Pi(N)$.
\end{definition}

\begin{definition} \label{def:permute_equivariance}
(Permutation Equivariance) A function $f: \real^{N \times D} \rightarrow \real^{N
\times D'}$ is called permutation-equivariant if $f(\Mat{P}\Mat{X}) = \Mat{P} f(\Mat{X})$ for any $\Mat{P} \in \Pi(N)$.
\end{definition}

In this paper, we investigate the approach to designing a neural network architecture with permutation invariance/equivariance. 
Below we will first focus on permutation-invariant functions $f: \real^{N \times D} \rightarrow \real$.
Then, in Sec.~\ref{sec:extension}, we show that we can easily extend the established results to permutation-equivariant functions through the results provided in \citet{sannai2019universal, wang2022equivariant} and to the complex field. The obtained results for $D' = 1$ can also be easily extended to $D'>1$ as otherwise $f$ can be written as $\begin{bmatrix} f_1 & \cdots & f_{D'} \end{bmatrix}^\top$ and each $f_i$ has single output feature channel.

\subsection{DeepSets and The Proof for the One-Dimensional Case ($D=1$)}
\label{sec:deepsets}

The seminal work \citet{zaheer2017deepset} establishes the following result which induces a neural network architecture for permutation-invariant functions.
\begin{theorem}[DeepSets \citep{zaheer2017deepset}, $D=1$] \label{thm:deep_set}
A continuous function $f: \real^{N} \rightarrow \real$ is permutation-invariant (i.e., a set function) if and only if there exists continuous functions $\phi: \real \rightarrow \real^{L}$ and $\rho: \real^L \rightarrow \real$ such that $f(\Mat{X}) = \rho\left( \sum_{i = 1}^{N} \phi(x^{(i)}) \right)$, where $L$ can be as small as $N$. Note that, here $x^{(i)}\in\real$ is a scalar.
\end{theorem}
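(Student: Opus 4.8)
The plan is to prove the two directions of the equivalence separately, with essentially all the work in the ``only if'' direction. The ``if'' direction is immediate: if $f(\Mat{X}) = \rho(\sum_{i=1}^{N} \phi(x^{(i)}))$ with $\phi,\rho$ continuous, then $\sum_{i=1}^{N}\phi(x^{(i)})$ is invariant under any reordering of the entries $x^{(1)},\dots,x^{(N)}$, so $f$ is permutation-invariant, and continuity is inherited from composition. So the substance is to show that any continuous permutation-invariant $f:\real^N\to\real$ admits such a representation with $L=N$.

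For the ``only if'' direction, I would take $\phi:\real\to\real^N$ to be the power map $\phi(x) = (x,\,x^2,\,\dots,\,x^N)^\top$, so that $E(\Mat{X}) := \sum_{i=1}^N \phi(x^{(i)})$ is the vector of the first $N$ power sums $p_k(\Mat{X}) = \sum_{i=1}^{N}(x^{(i)})^k$, $k=1,\dots,N$. The first key step is algebraic: by Newton's identities, the tuple $(p_1,\dots,p_N)$ determines and is determined by the elementary symmetric polynomials $(e_1,\dots,e_N)$, which are (up to sign) the coefficients of the monic polynomial $\prod_{i=1}^N(z-x^{(i)})$. Hence $E(\Mat{X}) = E(\Mat{X}')$ if and only if $\Mat{X}$ and $\Mat{X}'$ have the same multiset of entries, i.e.\ $\Mat{X}\sim\Mat{X}'$ in the sense of Definition~\ref{def:equi_class}. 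Thus $E$ descends to a well-defined injection $\bar{E}$ on the quotient $\real^N/\!\sim$, and since $f$ is permutation-invariant and continuous it factors as $f = \bar f\circ q$ through the quotient map $q:\real^N\to\real^N/\!\sim$ with $\bar f$ continuous. The natural candidate is then $\rho_0 := \bar f\circ\bar E^{-1}$, defined on the image $\mathcal{I} := E(\real^N)\subseteq\real^N$, which by construction satisfies $\rho_0(E(\Mat{X})) = f(\Mat{X})$.

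The second and main step is topological: to turn $\rho_0$ into a continuous function on all of $\real^N$, I would first show $E$ is \emph{proper}. If $\|E(\Mat{X})\|$ stays bounded then in particular $p_2(\Mat{X}) = \sum_i (x^{(i)})^2$ is bounded, which forces each $|x^{(i)}|$ to be bounded; hence preimages of compact sets under $E$ are compact. Consequently $\mathcal{I}$ is closed in $\real^N$, and $\bar E:\real^N/\!\sim\ \to\mathcal{I}$ is a continuous proper bijection between locally compact Hausdorff spaces, hence a homeomorphism; in particular $\bar E^{-1}$ is continuous, so $\rho_0 = \bar f\circ\bar E^{-1}$ is continuous on the closed set $\mathcal{I}$. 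Finally, by the Tietze extension theorem, extend $\rho_0$ to a continuous $\rho:\real^N\to\real$; then $\rho(\sum_{i=1}^N\phi(x^{(i)})) = \rho_0(E(\Mat{X})) = f(\Mat{X})$ for every $\Mat{X}$, which gives the desired representation with $L=N$.

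I expect the topological step to be the main obstacle, specifically establishing that $\bar E^{-1}$ is continuous, i.e.\ that $\bar E$ is a homeomorphism onto its image rather than merely a continuous bijection: this is exactly where one must rule out sequences of multisets whose power-sum vectors converge but whose sorted representatives do not, and the properness observation via $p_2$ is the crucial ingredient. The algebraic part (Newton's identities) and the final extension (Tietze) are standard. One may also remark that $L=N$ is optimal for general $f$, matching the lower bound of Wagstaff et al.\ cited in Section~\ref{sec:intro}, though that is a separate statement not needed here.
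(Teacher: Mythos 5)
Your proposal is correct and follows essentially the same route as the paper: the power mapping $\psi_N$ for $\phi$, injectivity of the sum-of-power map via Newton's identities, and continuity of the inverse established through a compactness argument (your properness observation via $p_2$ is exactly what underlies condition \textbf{(c)} of Lemma~\ref{lem:cont_lem}, which the paper imports from \cite{curgus2006roots} as Lemma~\ref{lem:power_sum}). You additionally handle, via the Tietze extension, the extension of $\rho$ from the closed range $E(\real^N)$ to all of $\real^L$ --- a point the paper's proof sidesteps by simply restricting the codomain of $\Psi_N$ to its range --- which is a legitimate refinement rather than a departure.
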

\begin{remark}
The original result presented in \citet{zaheer2017deepset} states the latent dimension should be as large as $N + 1$.
\citet{wagstaff2019limitations} tighten this dimension to exactly $N$.
\end{remark}
Theorem \ref{thm:deep_set} implies that as long as the latent space dimension $L \ge N$, any permutation-invariant functions can be implemented in a unified manner as DeepSets (Eq.\ref{eqn:deepset}). %
Furthermore, DeepSets suggests a useful architecture for $\phi$ at the analysis convenience and empirical utility, which is formally defined below (in DeepSets, $\phi$ is set as  $\psi_{L}$):
\begin{definition}[Power mapping] \label{def:power_mapping}
A power mapping of degree $K$ is a function $\psi_K: \real \rightarrow \real^K$ which transforms a scalar to a power series: $\psi_K(z) = \begin{bmatrix} z & z^2 & \cdots & z^K \end{bmatrix}^\top$.
\end{definition}

However, DeepSets \citep{zaheer2017deepset} focuses on the case that the feature dimension of  each set element is one (i.e., $D=1$). %
To demonstrate the difficulty of extending Theorem \ref{thm:deep_set} to high-dimensional features, we reproduce the proof next, which simultaneously reveals its significance and limitation.
Some intermediate results and mathematical tools will be recalled later in our proof.

We begin by defining sum-of-power mapping (of degree $K$) $\Psi_K(\Mat{X}) = \sum_{i=1}^{N} \psi_K(x^{(i)})$, where $\psi_K$ is the power mapping following Definition \ref{def:power_mapping}.
Afterward, we reveal that sum-of-power mapping $\Psi_K$ has a continuous inverse.
Before stating the formal argument, we formally define the injectivity of permutation-invariant mappings:
\begin{definition}[Injectivity] \label{def:injectivity}
A set function $f: \real^{N \times D} \rightarrow \real^{L}$ is said to be injective if and only if $\forall \Mat{X}, \Mat{X'} \in \real^{N \times D}$, $f(\Mat{X}) = f(\Mat{X'})$ implies $\Mat{X} \sim \Mat{X'}$.
\end{definition}

As summarized in the following lemma shown by \citet{zaheer2017deepset} and improved by \citet{wagstaff2019limitations}, $\Psi_N$ (i.e., when $K = N$) is an injective mapping.
If we further constrain the image space to be the range of $\Psi_N$: $\Set{Z} = \{\Psi_N(\Mat{X}) : \forall \Mat{X} \in \real^{N}\} \subseteq \real^N$, then $\Psi_N$ becomes surjective and is shown to have a continuous inverse.
This result comes from homeomorphism between roots and coefficients of monic polynomials~\citep{curgus2006roots}. 
\begin{lemma}[Existence of Continuous Inverse of Sum-of-Power \citep{zaheer2017deepset, wagstaff2019limitations}] \label{lem:power_sum}
$\Psi_N: \real^{N} \rightarrow \Set{Z}$ is injective, thus there exists $\Psi_N^{-1}: \Set{Z} \rightarrow \real^N$ such that $\Psi_N^{-1} \circ \Psi_N(\Mat{X}) \sim \Mat{X}$. Moreover, $\Psi_N^{-1}$ is continuous.
\end{lemma}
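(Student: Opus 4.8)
The plan is to factor $\Psi_N$ through the classical chain of correspondences ``numbers $\to$ power sums $\to$ elementary symmetric polynomials $\to$ coefficients of a monic polynomial $\to$ multiset of roots'', and then to transport continuity of the last map back to $\Psi_N^{-1}$. Observe first that the $k$-th coordinate of $\Psi_N(\Mat{X})$ is precisely the power sum $p_k = \sum_{i=1}^N (x^{(i)})^k$. Newton's identities $p_k = \sum_{j=1}^{k-1} (-1)^{j-1} e_j\, p_{k-j} + (-1)^{k-1} k\, e_k$, valid for $1 \le k \le N$, can be solved recursively for $e_k$ since we are in characteristic zero and $k \le N$ is invertible; this produces a polynomial map $\mathcal{N} : (p_1,\dots,p_N) \mapsto (e_1,\dots,e_N)$ whose Jacobian is lower triangular with nonzero diagonal, hence a polynomial automorphism of $\real^N$ with polynomial inverse. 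Consequently $\Psi_N = \mathcal{N}^{-1} \circ \mathcal{E}$, where $\mathcal{E}(\Mat{X}) = (e_1(\Mat{X}),\dots,e_N(\Mat{X}))$ collects the elementary symmetric polynomials of the entries of $\Mat{X}$, and by Vieta's formulas $\mathcal{E}(\Mat{X})$ is, up to fixed signs, the coefficient vector of $p_{\Mat{X}}(t) = \prod_{i=1}^N (t - x^{(i)})$.

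Injectivity is then purely algebraic: if $\Psi_N(\Mat{X}) = \Psi_N(\Mat{X'})$ all power sums agree, hence by the recursion above all elementary symmetric polynomials agree, hence $p_{\Mat{X}} = p_{\Mat{X'}}$ as monic polynomials; since such a polynomial splits into linear factors over $\real$ and $\real[t]$ is a unique factorization domain, $\Mat{X}$ and $\Mat{X'}$ have the same multiset of entries, i.e.\ $\Mat{X} \sim \Mat{X'}$. Surjectivity onto $\Set{Z}$ holds by definition of $\Set{Z}$ as the range of $\Psi_N$, so $\Psi_N^{-1} : \Set{Z} \to \real^N$ exists as a map returning, for each $\Mat{z}$, some representative of the preimage class; it remains only to choose the representative so that the resulting map is continuous.

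For continuity I would write $\Psi_N^{-1}$ as a composition of three continuous maps: (i) the polynomial map $\mathcal{N}$ taking $\Mat{z} \in \Set{Z}$ to the coefficient vector of the real-rooted monic polynomial $p$ with power sums $\Mat{z}$; (ii) the map sending such a coefficient vector to the multiset of roots of $p$, which is continuous by the homeomorphism between monic polynomials and their unordered root sets~\cite{curgus2006roots}; and (iii) a fixed continuous section of the quotient $\real^N \to \real^N/{\sim}$, for instance sorting the $N$ roots into nondecreasing order (which is $1$-Lipschitz in $\ell_\infty$). Each factor is continuous, so $\Psi_N^{-1}$ is continuous, and by construction $\Psi_N^{-1} \circ \Psi_N(\Mat{X}) \sim \Mat{X}$. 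Alternatively, one can sidestep the explicit citation: $\mathcal{E}$ descends to a continuous bijection from $\real^N/{\sim}$ onto its image, which is proper because bounded coefficients force bounded roots, and a proper continuous bijection between locally compact Hausdorff spaces is a homeomorphism.

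The main obstacle is precisely step (ii): roots depend continuously on coefficients only as an \emph{unordered} tuple, and there is in general no globally continuous choice of ``the $i$-th root'', so one must be disciplined about working in the quotient $\real^N/{\sim}$ (equivalently, with a symmetric or sorted representative) and about restricting to the set of coefficient vectors that actually occur, namely those of fully real-rooted monic polynomials, on which the cited homeomorphism (or the properness argument) applies. The remaining ingredients --- Newton's identities, Vieta's formulas, and the injectivity argument --- are elementary algebra.
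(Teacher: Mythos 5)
Your proposal is correct and follows essentially the same route the paper takes: the paper attributes this lemma to the homeomorphism between coefficients and root multisets of monic polynomials from~\cite{curgus2006roots}, and its appendix machinery (the Newton--Girard passage from power sums to elementary symmetric coefficients, followed by the root--coefficient correspondence) is exactly the chain you spell out. Your explicit handling of the quotient via a sorted representative and the alternative properness argument are both consistent with the paper's use of Theorem~1.2 of~\cite{curgus2006roots}, so there is nothing to correct.
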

Now we are ready to prove necessity in Theorem \ref{thm:deep_set} as sufficiency is easy to check. By choosing $\phi = \psi_N: \real \rightarrow \real^N$ to be the power mapping (cf. Definition \ref{def:power_mapping}), and $\rho = f \circ \Psi_N^{-1}$. For any scalar-valued set $\Mat{X} = \begin{bmatrix} x^{(1)} & \cdots & x^{(N)} \end{bmatrix}^\top$, $\rho\left( \sum_{i = 1}^{N} \phi(x^{(i)}) \right) = f \circ \Psi_N^{-1} \circ \Psi_N (\Mat{x}) = f(\Mat{P} \Mat{X}) = f(\Mat{X})$ for some $\Mat{P} \in \Pi(N)$. The existence and continuity of $\Psi_N^{-1}$  are due to Lemma \ref{lem:power_sum}. 

Theorem \ref{thm:deep_set} gives the \textit{exact decomposable form} \citep{wagstaff2019limitations} for permutation-invariant functions, which is stricter than approximation error based expressiveness analysis.
In summary, the key idea is to establish a mapping $\phi$ whose element-wise sum-pooling has a continuous inverse.

\subsection{Curse of High-dimensional Features ($D \ge 2$)}
\label{sec:curse_dim}
We argue that the proof of Theorem \ref{thm:deep_set} is not applicable to high-dimensional set features ($D \ge 2$).
The main reason is that power mapping defined in Definition~\ref{def:power_mapping} only receives scalar input.
It remains elusive how to extend it to a multivariate version that admits injectivity and a continuous inverse.
A plausible idea seems to be applying power mapping for each channel $\Mat{x}_i$ independently, and due to the injectivity of sum-of-power mapping $\Psi_N$, each channel can be uniquely recovered individually via the inverse $\Psi_N^{-1}$.
However, we point out that each recovered feature channel $\Mat{x'}_i \sim \Mat{x}_i$, $\forall i\in[D]$, does not imply $\begin{bmatrix} \Mat{x'}_1 & \cdots & \Mat{x'}_D \end{bmatrix}\sim \Mat{X}$, where the alignment of features across channels gets lost. 
Hence, channel-wise power encoding no more composes an injective mapping. \citet{zaheer2017deepset} proposed to adopt multivariate polynomials as $\phi$ for high-dimensional case, which leverages the fact that multivariate symmetric polynomials are dense in the space of permutation invariant functions (akin to Stone-Wasserstein theorem)~\citep{bourbaki2007elements}. This idea later got formalized in the work of \citet{segol2020universal} by setting $\phi(\Mat{x}^{(i)})=\begin{bmatrix} 
\cdots & \prod_{j\in[D]}(x_j^{(i)})^{\alpha_j} & \cdots
\end{bmatrix}$ where $\Mat{\alpha} \in \mathbb{N}^{D}$ traverses all $\sum_{j\in[D]}\alpha_j\leq n$ and extended to permutation equivariant functions.
Nevertheless, the dimension $L={N+D \choose D}$, i.e., exponential in $\min\{N,D\}$ in this case, and  unlike DeepSets \citep{zaheer2017deepset} which exactly recovers $f$ for $D=1$, the architecture in \citet{zaheer2017deepset,segol2020universal} can only approximate the desired function. %

\begin{figure}[!t]
\centering
\vspace{-6mm}
\includegraphics[width=0.75\linewidth]{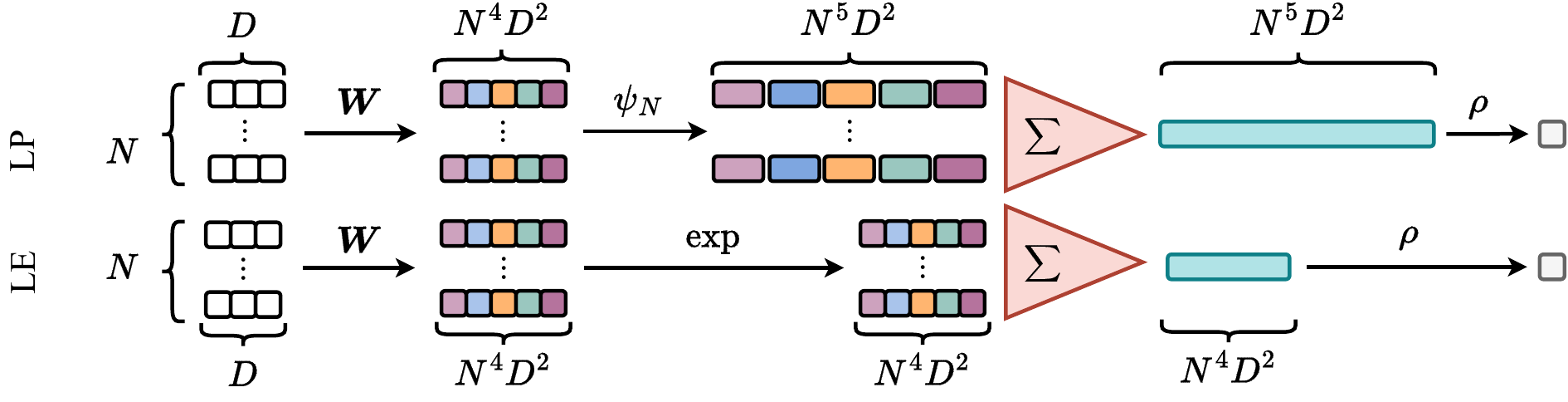}
\vspace{-2mm}
\caption{Illustration of the proposed linear + power mapping embedding layer (LP) and linear + exponential activation embedding layer (LE). 
}
\label{fig:archs}
\vspace{-1em}
\end{figure}
\vspace{-1mm}
\section{Main Results} \label{sec:main_res}
\vspace{-1mm}
In this section, we present our main result which extends Theorem \ref{thm:deep_set} to high-dimensional features.
Our conclusion is that to universally represent a set function on sets of length $N$ and feature dimension $D$ with the DeepSets architecture \citep{zaheer2017deepset} (Eq. \ref{eqn:deepset}), the \textit{minimal} $L$ needed for expressing the intermediate embedding space is \textit{at most} polynomial in $N$ and $D$.

Formally, we summarize our main result in the following theorem.
\begin{theorem}[The main result] \label{thm:main}
Suppose $D\geq 2$. For any continuous permutation-invariant function $f: \real^{N \times D} \rightarrow \real$, there exists two continuous mappings $\phi: \real^{D} \rightarrow \real^{L}$ and $\rho: \Set{Z} \rightarrow \real$ such that for every $\Mat{X} \in \real^{N \times D}$, $f(\Mat{X}) = \rho\left(\sum_{i=1}^{N} \phi(\Mat{x}^{(i)}) \right)$, where $\Set{Z} = \left \{\sum_{i=1}^{N} \phi(\Mat{x}^{(i)}) : \forall \Mat{X} \in \real^{N \times D} \right\} \subset \real^{L}$ is the image of the sum-pooling and
\begin{itemize}[leftmargin=3mm]
\item $L\in [N(D+1),N^5D^2]$ when $\phi$ admits \textbf{linear layer + power mapping (LP)} architecture:
\begin{align} \label{eqn:arch_lp}
   \phi(\Mat{x}) = \begin{bmatrix}
\psi_N(\Mat{w}_1^\top \Mat{x})^{\top} & \cdots & \psi_N(\Mat{w}_{K}^\top \Mat{x})^{\top}
\end{bmatrix}
\end{align}
for some $\Mat{w}_1, \cdots, \Mat{w}_{K} \in \real^{D}$, and $K = L / N$.
\item $L\in [ND,N^4D^2]$ when $\phi$ admits \textbf{linear layer + exponential activation (LE)} architecture:
\begin{align} \label{eqn:arch_le}
 \phi(\Mat{x}) = \begin{bmatrix}
    \exp(\Mat{v}_1^\top \Mat{x}) & \cdots & \exp(\Mat{v}_L^\top \Mat{x})
\end{bmatrix}
\end{align}
for some $\Mat{v}_1, \cdots, \Mat{v}_L \in \real^{D}$.
\end{itemize}
\end{theorem}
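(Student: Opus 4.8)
\emph{Overall plan.} The ``if'' direction is immediate because summation is permutation-invariant, so the content is the ``only if'': given a continuous permutation-invariant $f$, I would build $\phi$ of the prescribed (LP or LE) form together with a continuous $\rho$. Mirroring the $D=1$ argument recalled in Section~\ref{sec:deepsets}, the goal is to choose the parameters of $\phi$ so that the pooled map $\Phi(\Mat{X}) := \sum_{i=1}^{N}\phi(\Mat{x}^{(i)})$ is \emph{injective up to row permutation} and descends to a \emph{homeomorphism onto its image} from the quotient $\real^{N\times D}/\Pi(N)$ (which is locally compact Hausdorff). Then set $\rho := f\circ\Phi^{-1}$ on $\mathrm{Im}\,\Phi$ — well defined since $f$ is permutation-invariant and $\Phi^{-1}$ recovers $\Mat{X}$ up to $\sim$, and continuous — and extend it from the (closed) set $\mathrm{Im}\,\Phi$ to all of $\real^{L}$ by the Tietze extension theorem. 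This reproduces $f$ exactly, matching the ``exact decomposable form'' of Theorem~\ref{thm:deep_set} rather than a mere approximation.

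\emph{Reducing both architectures to a common core.} For the LP form, the block of $\Phi(\Mat{X})$ indexed by $\Mat{w}_k$ is exactly $\Psi_N(\Mat{w}_k^\top\Mat{X})$ with $\Mat{w}_k^\top\Mat{X}=(\Mat{w}_k^\top\Mat{x}^{(1)},\dots,\Mat{w}_k^\top\Mat{x}^{(N)})^\top$, so by Lemma~\ref{lem:power_sum} the map $\Psi_N^{-1}$ continuously recovers the multiset $M_k:=\{\Mat{w}_k^\top\Mat{x}^{(i)}:i\in[N]\}$. For the LE form, I would take the vectors to be the integer multiples $\{m\,\Mat{u}_k: m\in[N],\ k\in[K]\}$ (so $L=KN$); the corresponding coordinates of $\Phi(\Mat{X})$ are $\sum_i\exp(m\,\Mat{u}_k^\top\Mat{x}^{(i)})=\sum_i r_{k,i}^{\,m}$ with $r_{k,i}=\exp(\Mat{u}_k^\top\Mat{x}^{(i)})>0$, i.e.\ the degree $1,\dots,N$ power sums of the positive reals $\{r_{k,i}\}_i$; inverting them (Newton's identities, again via the homeomorphism of Lemma~\ref{lem:power_sum}) and taking logarithms recovers the same multisets $M_k$. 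Hence for both architectures the crux is: find fixed directions $\Mat{w}_1,\dots,\Mat{w}_K\in\real^{D}$ with $K=\poly(N,D)$ such that $\Mat{X}\bmod\Pi(N)\mapsto(M_1,\dots,M_K)$ is injective, and control its continuity.

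\emph{Injectivity from generic directions.} Fix $\Mat{X}\not\sim\Mat{Y}$. Then $M_k(\Mat{X})=M_k(\Mat{Y})$ iff some $\pi$ satisfies $\Mat{w}_k^\top(\Mat{x}^{(i)}-\Mat{y}^{(\pi(i))})=0$ for all $i$, i.e.\ iff $\Mat{w}_k\in\bigcup_{\pi\in\Pi(N)}H_\pi(\Mat{X},\Mat{Y})$, a finite union of linear subspaces each of which is \emph{proper} (because $\Mat{X}\not\sim\Mat{Y}$ forces $\Mat{x}^{(i)}\neq\Mat{y}^{(\pi(i))}$ for at least one $i$, for every $\pi$), so $\bigcup_\pi H_\pi$ has codimension $\ge 1$ in $\real^{D}$. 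Consider the incidence set of tuples $(\Mat{X},\Mat{Y},\Mat{w}_1,\dots,\Mat{w}_K)$ with $\Mat{X}\not\sim\Mat{Y}$ and $M_k(\Mat{X})=M_k(\Mat{Y})$ for all $k$: its fibre over a fixed pair lies in $\bigl(\bigcup_\pi H_\pi\bigr)^{K}$ of dimension $\le K(D-1)$, while the pairs span dimension $\le 2ND$, so the incidence set has dimension $\le 2ND+K(D-1)$, which is $<KD$ once $K>2ND$. Thus for $K=\poly(N,D)$ the incidence set fails to dominate $(\real^{D})^{K}$, so a generic choice of directions kills every colliding pair and makes $\Phi$ injective up to permutation. (Equivalently one may route this through the multisymmetric power sums of degree $\le N$, which separate $\Pi(N)$-orbits in characteristic zero \cite{rydh2007minimal,bourbaki2007elements}; the count above is looser than the paper's $N^{5}D^{2}$ and $N^{4}D^{2}$, the slack absorbing the real-versus-complex passage and the continuity bookkeeping below.)

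\emph{Continuity, the lower bound, and the main obstacle.} For the homeomorphism-onto-image property I would show $\Phi$ is \emph{proper}: if the $\Mat{w}_k$ (resp.\ the $\pm\Mat{u}_k$, adding the negatives) include $D$ linearly independent directions, the quadratic coordinates $\sum_i(\Mat{w}_k^\top\Mat{x}^{(i)})^{2}$ (resp.\ $\sum_i(e^{\Mat{u}_k^\top\Mat{x}^{(i)}}+e^{-\Mat{u}_k^\top\Mat{x}^{(i)}})$) dominate a positive constant times $\|\Mat{X}\|_F^{2}$, so $\|\Phi(\Mat{X})\|\to\infty$ as $\|\Mat{X}\|_F\to\infty$; a continuous proper injection from the locally compact Hausdorff quotient is a homeomorphism onto its (closed) image, completing the construction. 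For the LP \emph{lower} bound $L\ge N(D+1)$: if $K=L/N\le D$, the forms $\Mat{w}_1^\top\cdot,\dots,\Mat{w}_K^\top\cdot$ span a space of dimension at most $D$, so after a linear change of coordinates $\phi$ factors through the channel-wise power encoding $\Mat{x}\mapsto(\psi_N(x_1),\dots,\psi_N(x_D))$, which Section~\ref{sec:curse_dim} shows is not injective up to permutation; hence $K\ge D+1$. I expect the main obstacle to be making the generic-directions dimension count rigorous over $\real$ (working over $\complex$ and descending, and handling degenerate configurations with repeated rows) \emph{simultaneously} with the properness/continuity requirement: these two constraints pull against each other and are precisely what inflates the admissible $L$ from the $\Theta(N^{2}D)$ suggested by the naive count up to the stated polynomial.
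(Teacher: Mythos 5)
Your proposal is correct in its overall skeleton and that skeleton coincides with the paper's: make the pooled map $\Phi(\Mat{X})=\sum_i\phi(\Mat{x}^{(i)})$ injective up to row permutation, show its inverse is continuous on the range, and set $\rho=f\circ\Phi^{-1}$. But each of the three technical pillars is argued by a genuinely different route. For injectivity, the paper gives a structured construction (canonical-basis channels, an \emph{anchor} channel obtained from $K_1=O(N^2D)$ weights in general position via a pigeonhole argument, and explicit coupling weights --- linear coupling with $K_2=O(N^2)$ ratios for LP, bivariate monomials of exponentials for LE --- followed by the union-alignment lemma), whereas you use an incidence-set dimension count over generic directions. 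Your count is sound in principle: the incidence set is semialgebraic (a finite union over tuples of permutations), its fibres over a non-equivalent pair lie in a product of proper subspaces, so its projection to weight space has dimension at most $2ND+K(D-1)<KD$ once $K>2ND$; this is exactly the oversampling mechanism that upgrades the unlabeled-sensing-style ``injective on a dense subset of inputs'' result the paper proves in its appendix (with $K=D+1$) to ``injective on all inputs.'' If written out with the standard semialgebraic fibre-dimension and projection lemmas, it would in fact yield $K=O(ND)$, i.e.\ $L=O(N^2D)$, tighter than the paper's $N^5D^2$ and $N^4D^2$; the price is that the weights are only generic rather than (semi-)explicit, and one loses the anchor machinery that the paper reuses elsewhere (e.g.\ to diagnose the error in prior work it discusses in the appendix). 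For continuity, your properness/coercivity argument (degree-two power sums dominate $\|\Mat{X}\|_F^2$ for LP; doubling the LE weights with their negatives to get $\cosh$-type coercivity) is essentially an equivalent repackaging of the paper's verification of condition (c) in its Lemma on continuous inverses, and is arguably cleaner --- in particular it sidesteps the logarithm-does-not-preserve-boundedness complication the paper has to work around for LE, at the cost of modifying the LE weight set. For the lower bound, your reduction ``$K\le D$ linearly independent directions $\Rightarrow$ channel-wise power encoding after a coordinate change'' is more elementary than the paper's eigenspace-codimension argument and works, but note that Section 2.3 of the paper only asserts informally that channel-wise encoding loses alignment; you must exhibit an explicit pair (e.g.\ $\{(0,0),(1,1)\}$ versus $\{(0,1),(1,0)\}$ padded to size $N$ and dimension $D$) and observe that equivalence classes are preserved under the invertible column transformation when pulling the pair back to the original coordinates.

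Two small points to tighten. First, the failure of injectivity must still be converted into the failure of \emph{representation}: for a colliding pair $\Mat{X}\not\sim\Mat{X'}$ take $f(\cdot)=\setdist(\cdot,\Mat{X})$, which is continuous, permutation-invariant, and separates the pair, so no $\rho$ can exist. Second, the Tietze extension of $\rho$ from $\mathrm{Im}\,\Phi$ to $\real^L$ does require $\mathrm{Im}\,\Phi$ to be closed, which your properness argument supplies; the paper leaves this extension implicit. Neither point is a gap in the approach, only in the write-up.
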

The bounds of $L$ depend on the choice of the architecture of $\phi$, which are illustrated in Fig. \ref{fig:archs}.
In the LP setting, we adopt a linear layer that maps each set element into $K$ dimension.
Then we apply a channel-wise power mapping that separately transforms each value in the feature vector into an $N$-order power series, and concatenates all the activations together, resulting in a $KN$ dimension feature.
The LP architecture is closer to DeepSets \citep{zaheer2017deepset} as they share the power mapping as the main component. 
Theorem \ref{thm:main} guarantees the existence of $\rho$ and $\phi$ (in the form of Eq. \ref{eqn:arch_lp}) which satisfy Eq. \ref{eqn:deepset} without the need to set $K$ larger than $N^4D^2$ while $K \geq D+1$ is necessary.
Therefore, the total embedding size $L=KN$ is bounded by $N^5D^2$ above and $N(D+1)$ below. Note that this lower bound is not trivial as $ND$ is the degree of freedom of the input $\Mat{X}$. No matter how $\Mat{w}_1,...,\Mat{w}_K$ are adopted, one cannot achieve an injective mapping by just using $ND$ dimension.

In the LE architecture, we investigate the utilization of the exponential activation in set representation, which is also a valid activation function to build deep neural networks \citep{barron1993universal, clevert2015fast}.
Each set entry will be linearly lifted into an $L$-dimensional space via a group of weights and then transformed by an element-wise exponential activation. 
The advantage of the exponential function is that the upper bound of $L$ is improved to be $N^4D^2$. The lower bound $ND$ for the LE architecture is a trivial bound due to the degree of freedom of the inputs. %
Essentially, a linear layer followed by an exponential function is equivalent to applying monomials onto exponential activations.
If monomial activations are allowed as used in \citet{segol2020universal}, we can also replace the exponential function with a series of monomial mappings while yielding the same upper bound.
However, in contrast to \citet{segol2020universal}, where exponentially many monomials are required, our construction of the linear weights enables a mere reliance on bivariate monomials of degree $D$, thus reducing the number of needed monomials to $O(N^2D)$.

\begin{remark}
Unlike $\phi$, the form of $\rho$ cannot be explicitly specified, as it depends on the desired function $f$. The complexity of $\rho$ remains unexplored in this paper, which may be high in practice.
\end{remark}

\paragraph{Empirical Validation.}
In Appendix~\ref{apd:exp}, we run numerical experiments to verify our argument.
Fig. \ref{fig:exprs} demonstrates the polynomial dependence between the set size, feature dimension, and the minimal latent embedding dimension to achieve a small approximation error.
See more details in Appendix~\ref{apd:exp}.

\paragraph{Importance of Continuity.}
We argue that the requirements of continuity on $\rho$ and $\phi$ are essential for our discussion. First, practical neural networks can only provably approximate continuous functions~ \citep{cybenko1989approximation, hornik1989multilayer}.
Moreover, set representation without such requirements can be straightforward (but likely meaningless in practice).
It is known that there exists a discontinuous bijective mapping $r: \real^D \rightarrow \real$ if $D \ge 2$.
If we let $r$ map the high-dimensional features to scalars, then its inverse exists and the same proof of Theorem \ref{thm:deep_set} goes through, i.e. let $\phi = \psi_N \circ r$ and $\rho = f \circ r^{-1} \circ \Psi_N^{-1}$.
However, we note both $\rho$ and $\phi$ lose continuity.

\paragraph{Comparison with Prior Results.}
Below we highlight the significance of Theorem \ref{thm:main}.
A quick overview has been listed in Table \ref{tab:comparison} for illustration.
The lower bound in Theorem \ref{thm:main} corrects a natural misconception that the degree of freedom (i.e., $L = ND$ for multi-channel cases) is not enough for representing the embedding space \citep{wagstaff2022universal}.
Compared with \citeauthor{zweig2022exponential}'s finding, our result significantly improves this bound on $L$ from exponential to polynomial by allowing continuous activations that may not be complex analytic. %
Proof-wise, \citeauthor{zweig2022exponential}'s proof idea is hard to extend to the real domain, while ours applies to both real and complex domains and equivariant functions.
\citet{dym2024low, amir2024neural} present significant results that $L$ can be as small as $2ND + 1$.
However, the continuity of decoder $\rho$ is not guaranteed when the domain of $f$ is an open set.

\section{Proof Sketch} \label{sec:proof_sketch}

In this section, we introduce the proof techniques of Theorem \ref{thm:main}, while deferring a full version and all missing proofs to the appendix.
The proof is constructive and mainly consists of three steps below:
\begin{enumerate}[leftmargin=3mm]
\item For the LP architecture, we construct a group of $K$ linear weights $\Mat{w}_1 \cdots, \Mat{w}_{K} \in \real^D$ with $K \le N^4D^2$, while for the LE architecture, we construct a group of $L$ linear weights $\Mat{v}_1 \cdots, \Mat{v}_{L} \in \real^D$ with $L \le N^4D^2$, such that the summation over the associated embeddings $\Phi(\Mat{X}) = \sum_{i=1}^{N} \phi(\Mat{x}^{(i)})$ is \textit{injective}.
Moreover, if $K \le D$ for LP layer or trivially $L < ND$ for LE layer, such weights do not exist, which induces the lower bounds.
\item Given the injectivity of both LP and LE layers, we constrain the image spaces to be their ranges $\{\Phi(\Mat{X}) : \Mat{X} \in \real^{N \times D}\}$, respectively, and thus, the inverse of the sum-pooling $\Phi^{-1}$ exists. Furthermore, we show that $\Phi^{-1}$ is \textit{continuous}. 
\item Then the proof of upper bounds can be concluded for both settings by letting $\rho = f \circ \Phi^{-1}$ since $\rho\left( \sum_{i=1}^{N} \phi(\Mat{x}^{(i)}) \right) = f \circ \Phi^{-1} \circ \Phi(\Mat{X}) = f(\Mat{P} \Mat{X}) = f(\Mat{X})$ for some $\Mat{P} \in \Pi(N)$.
\end{enumerate} 
Next, we elaborate on the construction idea which yields injectivity for both embedding layers in Sec. \ref{sec:inject} with the notion of anchor.
In Sec. \ref{sec:cont_lem}, we prove the continuity of the inverse map for LP and LE via arguments similar to \citet{curgus2006roots}.

\subsection{Injectivity} \label{sec:inject}

The high-level ideas of construction and proofs are illustrated in Fig. \ref{fig:proof_idea}, in which we first construct an \textit{anchor}, a mathematical device introduced in Sec. \ref{sec:anchor} to induce injectivity, and then mix each feature channel with the anchor through coupling schemes specified by LP (Sec. \ref{sec:inject_lp}) and LE (Sec. \ref{sec:inject_le}) layers, respectively.

\subsubsection{Anchor} \label{sec:anchor}

Constructing an anchor stands at the core of our proof. Formally, we define \textit{anchor} as below:

\begin{definition}[Anchor] \label{def:anchor}
Consider the data matrix $\Mat{X} \in \real^{N \times D}$, then $\Mat{a} \in \real^{N}$ is called an anchor of $\Mat{X}$ if $\Mat{a}_i \neq \Mat{a}_j$ for any $i, j \in [N]$ such that $\Mat{x}^{(i)} \neq \Mat{x}^{(j)}$.
\end{definition}

In plain language, by Definition \ref{def:anchor}, two entries in the anchor must be distinctive if the set elements at the corresponding indices are not equal.
As a result, the union alignment property can be derived:
\begin{lemma}[Union Alignment] \label{lem:union_align}
Consider two data matrices $\Mat{X}, \Mat{X'} \in \real^{N \times D}$, $\Mat{a} \in \real^{N}$ is an anchor of $\Mat{X}$ and $\Mat{a'} \in \real^N$ is an arbitrary vector. If $\begin{bmatrix} \Mat{a} & \Mat{x}_i \end{bmatrix} \sim \begin{bmatrix} \Mat{a'} & \Mat{x'}_i \end{bmatrix}$ for every $i \in [D]$, then $\Mat{X} \sim \Mat{X'}$.
\end{lemma}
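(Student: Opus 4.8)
The plan is to prove the Union Alignment Lemma by leveraging Lemma \ref{lem:anchor}, which says that any permutation fixing the anchor $\Mat{a}$ also fixes every feature column $\Mat{x}_i$. The hypothesis gives us, for each $i \in [D]$, a permutation $\Mat{P}_i \in \Pi(N)$ with $\begin{bmatrix} \Mat{a} & \Mat{x}_i \end{bmatrix} = \Mat{P}_i \begin{bmatrix} \Mat{a'} & \Mat{x'}_i \end{bmatrix}$, i.e.\ $\Mat{a} = \Mat{P}_i \Mat{a'}$ and $\Mat{x}_i = \Mat{P}_i \Mat{x'}_i$ simultaneously. The key difficulty is that these $\Mat{P}_i$ may differ across channels $i$, so we cannot immediately conclude $\Mat{X} = \Mat{P}\Mat{X'}$ for a single $\Mat{P}$. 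The anchor is precisely the device that synchronizes them: since every $\Mat{P}_i$ sends $\Mat{a'}$ to the same vector $\Mat{a}$, any two of them, say $\Mat{P}_1$ and $\Mat{P}_i$, satisfy $\Mat{P}_1 \Mat{a'} = \Mat{P}_i \Mat{a'} = \Mat{a}$, hence $\Mat{Q}_i := \Mat{P}_1^{-1} \Mat{P}_i$ fixes $\Mat{a}$: $\Mat{Q}_i \Mat{a} = \Mat{P}_1^{-1}\Mat{a} $. Wait — more carefully, $\Mat{P}_i \Mat{a'} = \Mat{a}$ means $\Mat{a'} = \Mat{P}_i^{-1}\Mat{a}$, so $\Mat{P}_1 \Mat{P}_i^{-1} \Mat{a} = \Mat{P}_1 \Mat{a'} = \Mat{a}$; thus $\Mat{R}_i := \Mat{P}_1 \Mat{P}_i^{-1} \in \Pi(N)$ fixes the anchor $\Mat{a}$ of $\Mat{X}$.

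By Lemma \ref{lem:anchor} applied to $\Mat{R}_i$, we get $\Mat{R}_i \Mat{x}_j = \Mat{x}_j$ for every $j \in [D]$; in particular $\Mat{R}_i \Mat{x}_i = \Mat{x}_i$. Now compute $\Mat{P}_1 \Mat{x'}_i = \Mat{P}_1 \Mat{P}_i^{-1} (\Mat{P}_i \Mat{x'}_i) = \Mat{R}_i \Mat{x}_i = \Mat{x}_i$. Therefore the single permutation $\Mat{P} := \Mat{P}_1$ satisfies $\Mat{P}\Mat{x'}_i = \Mat{x}_i$ for every $i \in [D]$, which is exactly $\Mat{P}\Mat{X'} = \Mat{X}$, i.e.\ $\Mat{X} \sim \Mat{X'}$. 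I would present this as: fix the channel-$1$ permutation as the global candidate, then use the anchor-fixing property (Lemma \ref{lem:anchor}) to show it also aligns every other channel.

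I expect the main obstacle to be purely expository rather than mathematical: being careful with the direction of the permutations (whether $\Mat{X} = \Mat{P}\Mat{X'}$ or $\Mat{X'} = \Mat{P}\Mat{X}$, and correspondingly inverses), since $\Pi(N)$ is a group but the bookkeeping of which matrix acts on which side is easy to get backwards. One subtlety worth a sentence: $\Mat{a'}$ is only assumed to be an arbitrary vector, not an anchor, and indeed we never need it to be one — the argument only uses that $\Mat{a}$ is an anchor of $\Mat{X}$ and that all the $\Mat{P}_i$ agree on where they send $\Mat{a'}$. It is also worth noting explicitly that $\Mat{R}_i \Mat{a} = \Mat{a}$ uses $\Mat{P}_i \Mat{a'} = \Mat{a}$ for the specific index $i$ together with $\Mat{P}_1 \Mat{a'} = \Mat{a}$, so the anchor equality from channel $1$ is what glues channel $i$ into place. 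With these points flagged, the proof is a short chain of identities in the permutation group followed by one invocation of Lemma \ref{lem:anchor}.
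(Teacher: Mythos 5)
Your proof is correct and uses essentially the same mechanism as the paper's: form the composite permutation $\Mat{R}_i = \Mat{P}_1\Mat{P}_i^{-1}$ that fixes the anchor $\Mat{a}$, invoke Lemma \ref{lem:anchor} to conclude it fixes every column, and deduce that the single reference permutation aligns all channels. The only cosmetic difference is that you take the channel-$1$ permutation as the global candidate, whereas the paper introduces a separate permutation sending $\Mat{a}$ to $\Mat{a'}$; the argument is otherwise identical.
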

The same anchor $\Mat{a}$ will be concatenated with all channels forming a series of two-column matrices.
Once the permutation orbits of each coupled pair intersect, the permutation orbits of two data matrices also intersect.
Our strategy to generate an anchor is through a point-wise linear combination:
\begin{lemma} [Anchor Construction] \label{lem:anchor_con}
There exists a set of weights $\Mat{\alpha}_1, \cdots, \Mat{\alpha}_{K_1}$ in general positions of $\real^D$ where $K_1 = N(N-1)(D-1)/2+1$ such that for every data matrix $\Mat{X} \in \real^{N \times D}$, there exists $j \in [K_1]$, $\Mat{X}\Mat{\alpha}_j$ is an anchor of $\Mat{X}$.
\end{lemma}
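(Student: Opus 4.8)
The plan is to construct the weights $\Mat{w}_1,\dots,\Mat{w}_{K_1}$ explicitly so that they do not depend on $\Mat{X}$, and to argue that for every $\Mat{X}$ at least one of them yields an anchor. First I would reduce the problem to a counting argument over "bad" directions. Fix any data matrix $\Mat{X}$. A weight vector $\Mat{w}\in\real^D$ fails to produce an anchor precisely when there exist indices $i,j\in[N]$ with $\Mat{x}^{(i)}\neq\Mat{x}^{(j)}$ but $\Mat{w}^\top\Mat{x}^{(i)}=\Mat{w}^\top\Mat{x}^{(j)}$, i.e. $\Mat{w}^\top(\Mat{x}^{(i)}-\Mat{x}^{(j)})=0$. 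Since $\Mat{x}^{(i)}-\Mat{x}^{(j)}\neq\Mat{0}$, the set of such $\Mat{w}$ is a union over the at most $\binom{N}{2}$ relevant pairs of hyperplanes $H_{ij}=\{\Mat{w}: \Mat{w}^\top(\Mat{x}^{(i)}-\Mat{x}^{(j)})=0\}$ through the origin in $\real^D$. So $\Mat{w}$ gives an anchor of $\Mat{X}$ as long as $\Mat{w}\notin\bigcup_{i<j,\,\Mat{x}^{(i)}\neq\Mat{x}^{(j)}}H_{ij}$, a union of at most $\binom{N}{2}$ codimension-$1$ subspaces.

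The key step is then to exhibit a \emph{single fixed} finite family $\{\Mat{w}_1,\dots,\Mat{w}_{K_1}\}$ of size $K_1=N(N-1)(D-1)/2+1$ that cannot be simultaneously swallowed by any such union of $\binom{N}{2}$ hyperplanes, regardless of $\Mat{X}$. The natural choice is to take the $\Mat{w}_j$ in "general position": concretely, pick them on the moment curve, e.g. $\Mat{w}_j=(1,t_j,t_j^2,\dots,t_j^{D-1})^\top$ for $K_1$ distinct reals $t_1,\dots,t_{K_1}$. Any hyperplane through the origin in $\real^D$ is $\{\Mat{w}:\Mat{c}^\top\Mat{w}=0\}$ for some $\Mat{c}\neq\Mat{0}$, and $\Mat{c}^\top\Mat{w}_j=\sum_{\ell=0}^{D-1}c_\ell t_j^\ell$ is a nonzero polynomial in $t_j$ of degree at most $D-1$, hence vanishes for at most $D-1$ of the $t_j$'s. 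Therefore each of the $\binom{N}{2}$ hyperplanes $H_{ij}$ contains at most $D-1$ of our $K_1$ vectors, so the union contains at most $\binom{N}{2}(D-1)=N(N-1)(D-1)/2 = K_1-1$ of them. Since we chose $K_1$ vectors, at least one $\Mat{w}_j$ avoids the bad union and is therefore an anchor of $\Mat{X}$. As $\Mat{X}$ was arbitrary, the same family works for all data matrices, which is exactly the claim.

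The main obstacle — really the only subtlety — is making sure the family of weights is chosen \emph{once and for all}, independent of $\Mat{X}$, rather than $\Mat{X}$-dependent; the pigeonhole bound $\binom{N}{2}(D-1)$ is then what forces the value $K_1=\binom{N}{2}(D-1)+1$. One should double-check the degenerate edge cases: when $D=1$ the bound gives $K_1=1$ and indeed a single generic scalar weight works (this matches the $D=1$ theory), and when all rows of $\Mat{X}$ are equal every $\Mat{w}$ trivially gives an anchor, so there is nothing to prove. It is also worth noting the argument only needs the $\Mat{w}_j$ to satisfy the property "no nonzero linear functional on $\real^D$ vanishes on more than $D-1$ of them," which is generic; the moment-curve construction is just a clean concrete witness, and the count of relevant pairs can be taken as $\binom{N}{2}$ (pairs with $\Mat{x}^{(i)}=\Mat{x}^{(j)}$ impose no constraint), matching the stated $K_1$ exactly.
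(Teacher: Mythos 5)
Your proof is correct and follows essentially the same route as the paper's: reduce to the observation that each "bad" pair $(i,j)$ with $\Mat{x}^{(i)}\neq\Mat{x}^{(j)}$ defines a hyperplane that can contain at most $D-1$ weights in general position, then apply pigeonhole over the at most $\binom{N}{2}$ pairs to get $K_1=(D-1)\binom{N}{2}+1$. The only cosmetic difference is that you exhibit the general-position family explicitly via the moment curve (a Vandermonde argument), whereas the paper states the condition as "every $D$-subset is linearly independent" and witnesses it with i.i.d.\ Gaussian vectors; these are equivalent.
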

From a geometric perspective, if there are enough weights $\{\Mat{\alpha}_j : j \in [K_1]\}$ in general positions, at least one of them will not be orthogonal to the difference between any two columns.

\subsubsection{Injectivity of LP} \label{sec:inject_lp}

In this section, we specify $\phi$ following the definition in Eq. \ref{eqn:arch_lp}. 
Suppose sum-of-power mapping $\Psi_N(\Mat{X}\Mat{w}_i) = \Psi_N(\Mat{X'}\Mat{w}_i)$ for all $i \in [K]$, Lemma \ref{lem:power_sum} guarantees $\Mat{X}\Mat{w}_i \sim \Mat{X'}\Mat{w}_i$  for all $i \in [K]$.
The main technical challenge is to ensure the alignment among all feature columns.
This step combines the construction of anchors and the following linear coupling scheme that ensures alignments between all pairwise stackings of feature channels and anchors.
\begin{lemma}[Linear Coupling] \label{lem:lin_coupling}
There exists a group of coefficients $\gamma_1, \cdots, \gamma_{K_2}$ where $K_2 = N(N-1) + 1$ such that the following statement holds:
Given any $\Mat{x}, \Mat{x'}, \Mat{y}, \Mat{y'} \in \real^{N}$ such that $\Mat{x} \sim \Mat{x'}$ and $\Mat{y} \sim \Mat{y'}$, if $(\Mat{x} - \gamma_k \Mat{y}) \sim (\Mat{x'} - \gamma_k \Mat{y'})$ for every $k \in [K_2]$, then $\begin{bmatrix} \Mat{x} & \Mat{y} \end{bmatrix} \sim \begin{bmatrix} \Mat{x'} & \Mat{y'} \end{bmatrix}$.
\end{lemma}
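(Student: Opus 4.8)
The plan is to reduce the claim to a statement about matching *values* across two columns: $[\Mat{x}\ \Mat{y}] \sim [\Mat{x'}\ \Mat{y'}]$ precisely when, for every value pair $(a,b)\in\real^2$, the number of rows $i$ with $(x^{(i)},y^{(i)})=(a,b)$ equals the number of rows with $(x'^{(i)},y'^{(i)})=(a,b)$. First I would record the discrete data: let $m_{ab}$ and $m'_{ab}$ be these multiplicities (finitely many nonzero, since each vector has $N$ entries). The hypothesis $\Mat{x}\sim\Mat{x'}$ says $\sum_b m_{ab}=\sum_b m'_{ab}$ for all $a$, i.e. the $\Mat{x}$-marginals agree; similarly $\Mat{y}\sim\Mat{y'}$ gives agreement of $\Mat{y}$-marginals. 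The hypothesis that $(\Mat{x}-\gamma_k\Mat{y})\sim(\Mat{x'}-\gamma_k\Mat{y'})$ says that for each slope $\gamma_k$ and each level $c$, $\sum_{a-\gamma_k b=c} m_{ab}=\sum_{a-\gamma_k b=c} m'_{ab}$: the multiplicities summed along each line of slope $\gamma_k$ agree. So I must choose $K_2=N(N-1)+1$ coefficients $\gamma_k$ so that agreement of the line-sums along these slopes (plus the two axis marginals, which correspond to "slopes" $0$ and $\infty$) forces $m_{ab}=m'_{ab}$ for every $(a,b)$.

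The key combinatorial step: the support of $m$ has at most $N$ points and likewise for $m'$; their union $S$ has at most $2N$ points, hence at most $\binom{2N}{2}<2N^2$ distinct connecting directions — but I should be more careful and get exactly the stated bound. The relevant directions to "kill" are those joining two points that could be confused, and a cleaner count is: the difference measure $\mu:=m-m'$ is supported on a set of at most... here I'd instead argue directly. Consider the real trigonometric/rational-slope separation: pick $\gamma_1,\dots,\gamma_{K_2}$ to be any $K_2$ distinct reals. Suppose for contradiction $\mu\neq 0$. Since all $\Mat{x}$-marginals and $\Mat{y}$-marginals of $\mu$ vanish and all slope-$\gamma_k$ line-sums of $\mu$ vanish, $\mu$ is a nonzero signed measure on a grid whose projections along $K_2+2$ distinct directions all vanish. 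The total support of $\mu$ has size at most $2N$, so the number of distinct pairwise-connecting directions among support points is at most $\binom{2N}{2}=N(2N-1)$; hmm, that exceeds $N(N-1)+1$, so a naive "more directions than connecting lines" argument does not immediately close. Instead I would use the standard *moment/Vandermonde* trick: encode $n_{ab}:=m_{ab}$ for $(a,b)$ in the common support (size $\le 2N-1$ after using marginals to eliminate, or similar), and observe that $\sum_{a-\gamma b=c}$ vanishing for $K_2$ values of $\gamma$, together with the marginals, gives a homogeneous linear system whose coefficient matrix is (generalized) Vandermonde of full column rank once $K_2$ is large enough relative to the support size; choosing $\gamma_k$ generically makes this matrix injective, forcing $\mu=0$. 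The bound $K_2=N(N-1)+1$ should come out of bounding the number of *bad* $\gamma$ (slopes along which two distinct support points align, creating a nontrivial kernel), which is at most the number of lines through pairs of points in a set of size $\le 2N$ that have finite nonzero slope, and a careful count using that each of $\Mat{x},\Mat{x'}$ contributes at most $N$ values tightens this to $N(N-1)$; one extra coefficient then guarantees a "good" configuration — or, more robustly, one shows that *any* $N(N-1)+1$ distinct nonzero slopes already suffice because a nonzero $\mu$ with vanishing marginals must have support projecting to $\ge 2$ points on each axis, limiting the number of exceptional slopes.

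Once the discrete claim $m_{ab}=m'_{ab}$ is established, the conclusion $[\Mat{x}\ \Mat{y}]\sim[\Mat{x'}\ \Mat{y'}]$ is immediate: equal joint multiplicity vectors of length $N$ means one is a row permutation of the other. I would then state the $\gamma_k$ explicitly (e.g. distinct positive reals, or $1,2,\dots,K_2$, avoiding a finite bad set determined by the input — but since the lemma demands a *single fixed* family working for *all* inputs, I must pick them to avoid every possible bad slope, and the counting above certifies that $N(N-1)+1$ generic choices do this). The main obstacle is precisely this uniform choice of $\gamma_k$: proving that a fixed list of $N(N-1)+1$ reals simultaneously separates *all* pairs of $N$-entry value-configurations, which is where the $N(N-1)$ count — number of possible "collision slopes" between a configuration and its competitor — must be pinned down exactly rather than merely bounded by $\binom{2N}{2}$. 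I expect the resolution to exploit that a collision slope is determined by two rows $i\neq j$ via $\gamma=(x^{(i)}-x^{(j)})/(y^{(i)}-y^{(j)})$, giving at most $\binom{N}{2}$ from within $\Mat{x}$ and similarly across, and a bookkeeping argument collapsing this to $N(N-1)$, after which the extra "$+1$" forces at least one coefficient to be collision-free in the sense needed to run an injectivity/induction argument channel by channel.
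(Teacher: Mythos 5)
Your reduction to multiplicity measures and vanishing line-sums is a faithful reformulation of the problem, and you have correctly located the crux: producing one \emph{fixed} list of $N(N-1)+1$ coefficients that works for \emph{every} input. But the proposal never closes that step, and the two routes you sketch for closing it do not work. The ``more directions than connecting lines'' count, even refined to directions joining a point where $\mu>0$ to a point where $\mu<0$, gives up to $N\cdot N=N^2$ exceptional slopes, which exceeds $N(N-1)+3$ for $N\ge 3$ --- you notice this yourself and do not repair it. The fallback of ``picking $\gamma_k$ generically to avoid every possible bad slope'' is structurally impossible: the collision slopes $(x^{(i)}-x^{(j)})/(y^{(i)}-y^{(j)})$ range over all of $\real\setminus\{0\}$ as the input varies, so no finite fixed set avoids them all. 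The Vandermonde/moment idea is left entirely undeveloped. So as written this is a plan with the decisive combinatorial step missing.

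The paper resolves it differently, and the mechanism is worth internalizing. First, the coefficients are not merely distinct: they are required to be nonzero, pairwise distinct, and to satisfy the arithmetic condition that $\gamma_i/\gamma_j\ne\gamma_k/\gamma_l$ for any nontrivial $4$-tuple (prime numbers work). Second, the contradiction is obtained by double-counting edges of bipartite graphs built not on the support points of $\mu$ but on the sets of nonzero pairwise \emph{differences} $\Set{D}_x^{(p)}=\{x_p-x_q: x_p\ne x_q\}$ and $\Set{D}_y^{(p)}$, with an edge $(\alpha,\beta)$ whenever $\alpha=\gamma_k\beta$ for some $k$. The ratio condition forces any two nodes on the $x$-side to share at most one common neighbor, which caps the total edge count at $2N(N-1)$. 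On the other hand, if a coefficient $\gamma_k$ fails to realize the simultaneous alignment, it must contribute at least \emph{two} edges, because the entries of $(\Mat{I}-\Mat{Q}_k^\top\Mat{P}_x)\Mat{x}$ sum to zero and hence cannot have exactly one nonzero coordinate. If all $K_2=N(N-1)+1$ coefficients were bad, one would need at least $2N(N-1)+2$ edges --- a contradiction, so some $\gamma_{k^*}$ yields $\Mat{Q}_{k^*}^\top$ aligning $\Mat{x}$ with $\Mat{x'}$ and $\Mat{y}$ with $\Mat{y'}$ simultaneously. This is a pigeonhole on coefficients versus edges of a difference graph, not on slopes versus connecting lines of support points, and it is precisely what lets a single fixed $\Gamma$ serve all inputs. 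Your ``more robust'' suggestion that \emph{any} $N(N-1)+1$ distinct nonzero slopes suffice is therefore unsupported: without the ratio condition the at-most-one-common-neighbor bound, and with it the whole edge count, breaks down.
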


\paragraph{Construction.} Our construction divides the weights $\{\Mat{w}_i, i \in [K]\}$ into three groups: $\{\Mat{e}_i: i \in [D]\}$, $\{\Mat{\alpha}_j: j \in [K_1] \}$, and $\{\Mat{\Gamma}_{i,j,k} : i \in [D], j \in [K_1], k \in [K_2] \}$.
Each block is illustrated in Fig. \ref{fig:lp_construct} and outlined as below:
\begin{enumerate}[leftmargin=5mm]
\item Let the first group of weights $\Mat{e}_1, \cdots, \Mat{e}_{D} \in \real^D$ buffer the original features, where $\Mat{e}_i$ is the $i$-th canonical basis.
\item Design the second group of linear weights, $\Mat{\alpha}_{1}, \cdots, \Mat{\alpha}_{K_1} \in \real^D$ for $K_1$ as large as $N(N-1)(D-1)/2+1$, following the specifications in Lemma \ref{lem:anchor_con}. Then,  we know at least one of $\Mat{X} \Mat{\alpha}_{j}, j \in [K_1]$ forms an anchor of $\Mat{X}$.
\item Design a group of weights $\Mat{\Gamma}_{i,j,k}$ for $i \in [D], j \in [K_1], k \in [K_2]$ with $K_2 = N(N-1)+1$ that mixes each original channel $\Mat{x}_i$ with each $\Mat{X} \Mat{\alpha}_{j}, j \in [K_1]$ by $\Mat{\Gamma}_{i,j,k} = \Mat{e}_i - \gamma_k \Mat{\alpha}_{j}$, where $\gamma_k, \forall k \in [K_2]$ is the coefficient defined in Lemma \ref{lem:lin_coupling}.
\end{enumerate}

\paragraph{Injectivity.}  With such configuration, injectivity can be shown by the following steps: First recalling the injectivity of power mapping (cf. Lemma \ref{lem:power_sum}), we have: 
\begin{align} \label{eqn:channelwise_inverse}
\sum_{n=1}^{N} \phi(\Mat{x}^{(n)}) = \sum_{n=1}^{N} \phi(\Mat{x'}^{(n)}) \Rightarrow \Mat{X}\Mat{w}_i \sim \Mat{X'}\Mat{w}_i, \forall i \in [K].
\end{align}
It is equivalent to expand the RHS of Eq. \ref{eqn:channelwise_inverse} as: $\Mat{x}_i \sim \Mat{x'}_i$, $\Mat{X}\Mat{\alpha}_{j} \sim \Mat{X'}\Mat{\alpha}_{j}$, and $\Mat{X}\Mat{\Gamma}_{i,j,k} = (\Mat{x}_i - \gamma_k \Mat{X}\Mat{\alpha}_{j}) \sim \Mat{X'}\Mat{\Gamma}_{i,j^{*},k} = (\Mat{x'}_i - \gamma_k \Mat{X'}\Mat{\alpha}_{j})$ for every $i \in [D], j \in [K_1], k \in [K_2]$.
By Lemma \ref{lem:lin_coupling}, we can further induce:
\begin{align} \label{eqn:lin_coupling}
\Mat{X} \Mat{w}_{i}\sim  \Mat{X'} \Mat{w}_{i}, \forall i\in[K]\Rightarrow  \begin{bmatrix} \Mat{X} \Mat{\alpha}_{j} & \Mat{x}_i \end{bmatrix} \sim \begin{bmatrix} \Mat{X'} \Mat{\alpha}_{j} & \Mat{x'}_i \end{bmatrix}, \forall i \in [D], j\in[K_1]
\end{align}
According to Lemma \ref{lem:anchor_con}, there must be $j^{*} \in [K_1]$ such that $\Mat{X}\Mat{\alpha}_{j^{*}}$ is an anchor of $\Mat{X}$.
Then by Lemma \ref{lem:union_align}, Eq. \ref{eqn:lin_coupling} implies:
\begin{align} \label{eqn:union_align}
\begin{bmatrix} \Mat{X} \Mat{\alpha}_{j^{*}} & \Mat{x}_i \end{bmatrix} \sim \begin{bmatrix} \Mat{X'} \Mat{\alpha}_{j^{*}} & \Mat{x'}_i \end{bmatrix}, \forall i \in [D] \Rightarrow \Mat{X} \sim \Mat{X'}.
\end{align}
The total required number of weights $K = D + K_1 + D K_1 K_2 \le N^4D^2$, and the embedding length $L = NK \le N^5D^2$ as desired.

For completeness, we add the following lemma which implies LP-induced sum-pooling is injective only if $K \geq  D+1$, when $D \ge 2$. 
\begin{theorem}[Lower Bound] \label{thm:lp_lower_bound}
Consider data matrices $\Mat{X} \in \real^{N \times D}$ where $D \ge 2$. If $K \le D$, then for every $\Mat{w}_1, \cdots, \Mat{w}_K$, there exists $\Mat{X'} \in \real^{N \times D}$ such that $\Mat{X} \not\sim \Mat{X'}$ but $\Mat{X} \Mat{w}_i \sim \Mat{X'} \Mat{w}_i$, $\forall i \in [K]$.
\end{theorem}
\begin{remark}
Theorem \ref{thm:lp_lower_bound} is significant in that with high-dimensional features, the injectivity is provably not satisfied when the embedding space has a dimension equal to the degree of freedom.
\end{remark}
\subsubsection{Injectivity of LE} \label{sec:inject_le}

In this section, we consider $\phi$ follows the definition in Eq. \ref{eqn:arch_le}.
Our first observation is that instead of applying univariate monomials to each linearly mixed channel individually, we can directly employ bivariate monomials to pair channels with anchors and yield the same alignment results as in LP.
\begin{lemma}[Monomial Coupling] \label{lem:mono_coupling}
For any pair of vectors $\Mat{x}, \Mat{y}, \Mat{x'}, \Mat{y'} \in \real^{N}$, if $\sum_{n \in [N]} \Mat{x}_{n}^{l - k} \Mat{y}_{n}^{k} = \sum_{n \in [N]} \Mat{x'}_{n}^{l - k} \Mat{y'}_{n}^{k}$ for every $l \in [N]$, $0 \le k \le l$, then $\begin{bmatrix} \Mat{x} & \Mat{y} \end{bmatrix} \sim \begin{bmatrix} \Mat{x'} & \Mat{y'} \end{bmatrix}$.
\end{lemma}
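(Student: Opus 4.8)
\textbf{Proof proposal for the Monomial Coupling Lemma (Lemma~\ref{lem:mono_coupling}).}
The plan is to reduce the two-variable power-sum identities to a single-variable polynomial identity by a clever substitution, then invoke the injectivity of the one-dimensional sum-of-power mapping (Lemma~\ref{lem:power_sum}). The hypothesis gives us all bivariate power sums $p_{l,k}(\Mat{x},\Mat{y}) := \sum_{n\in[N]} \Mat{x}_n^{l-k}\Mat{y}_n^k$ for $1\le l\le N$ and $0\le k\le l$, matching those of $\Mat{x}',\Mat{y}'$. Introduce a fresh scalar indeterminate $t$ and form $\sum_{n\in[N]} (\Mat{x}_n + t\,\Mat{y}_n)^l = \sum_{k=0}^{l}\binom{l}{k} t^k \sum_n \Mat{x}_n^{l-k}\Mat{y}_n^k$, which by hypothesis equals $\sum_{n\in[N]} (\Mat{x}'_n + t\,\Mat{y}'_n)^l$ as polynomials in $t$, for every $l\in[N]$. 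Thus for each fixed numerical value of $t$, the $N$-vector $\Mat{x} + t\Mat{y}$ has the same first $N$ power sums as $\Mat{x}' + t\Mat{y}'$, so by Lemma~\ref{lem:power_sum} (injectivity of $\Psi_N$) we get $\Mat{x} + t\Mat{y} \sim \Mat{x}' + t\Mat{y}'$ for every $t\in\real$.

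From here the task is to upgrade a family of ``for every $t$, the multisets $\{\Mat{x}_n + t\Mat{y}_n\}$ and $\{\Mat{x}'_n + t\Mat{y}'_n\}$ coincide'' into the joint statement $\begin{bmatrix}\Mat{x} & \Mat{y}\end{bmatrix}\sim\begin{bmatrix}\Mat{x}' & \Mat{y}'\end{bmatrix}$. I would argue as follows. For each $t$ there is a permutation $\Mat{P}_t\in\Pi(N)$ with $\Mat{x}+t\Mat{y} = \Mat{P}_t(\Mat{x}'+t\Mat{y}')$; since $\Pi(N)$ is finite, some permutation $\Mat{P}$ occurs for infinitely many values of $t$, in particular for $N$ or more distinct values. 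Fix that $\Mat{P}$: then $\Mat{x}+t\Mat{y} - \Mat{P}\Mat{x}' - t\,\Mat{P}\Mat{y}' = 0$ holds for $N+1$ distinct values of $t$ (indeed for more), and the left side is an affine (degree-$1$) function of $t$ valued in $\real^N$; an affine map vanishing at two points is identically zero, so $\Mat{x} = \Mat{P}\Mat{x}'$ and $\Mat{y} = \Mat{P}\Mat{y}'$ simultaneously. This gives $\begin{bmatrix}\Mat{x} & \Mat{y}\end{bmatrix} = \Mat{P}\begin{bmatrix}\Mat{x}' & \Mat{y}'\end{bmatrix}$, i.e.\ the desired row equivalence.

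I expect the main obstacle to be the bookkeeping around \emph{which} power sums the hypothesis actually supplies versus which ones the binomial expansion of $(\Mat{x}_n+t\Mat{y}_n)^l$ requires. The expansion of $\sum_n(\Mat{x}_n+t\Mat{y}_n)^l$ needs exactly the terms $\sum_n \Mat{x}_n^{l-k}\Mat{y}_n^k$ for $0\le k\le l$, which is precisely the range quantified in the lemma's hypothesis, so the match is exact for $l\in[N]$ — but one should double-check the edge cases $k=0$ (pure $\Mat{x}$ power sums, needed for Lemma~\ref{lem:power_sum}) and $k=l$, and confirm that degrees $1$ through $N$ (not $0$) suffice, which they do since $\Psi_N$ uses powers $1,\dots,N$. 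A secondary point to handle carefully is the pigeonhole/finiteness step: one must ensure the chosen $\Mat{P}$ works for at least two distinct $t$-values to kill the affine function, which is immediate once it works for infinitely many. No continuity or genericity assumptions on $\Mat{x},\Mat{y}$ are needed, so the argument applies to arbitrary vectors as stated.
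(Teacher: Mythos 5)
Your proof is correct, but it takes a genuinely different route from the paper's. The paper packs the pair into a single complex vector: it observes that $\sum_{n}(\Mat{x}_n+\Mat{y}_n\sqrt{-1})^l$ is a fixed linear combination of the given bivariate power sums, so the hypothesis forces $\Psi_N(\Mat{x}+\Mat{y}\sqrt{-1})=\Psi_N(\Mat{x}'+\Mat{y}'\sqrt{-1})$, and then one application of the \emph{complex} sum-of-power injectivity (Lemma~\ref{lem:complex_power_sum}, from \cite{curgus2006roots}) immediately yields $\begin{bmatrix}\Mat{x} & \Mat{y}\end{bmatrix}\sim\begin{bmatrix}\Mat{x}' & \Mat{y}'\end{bmatrix}$, since a permutation matching the complex multisets matches real and imaginary parts simultaneously. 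You instead deform along a real parameter $t$, apply the real injectivity of $\Psi_N$ for each $t$, and then recover a single common permutation by pigeonhole plus the vanishing of an affine function at two distinct points; every step checks out (two values of $t$ already suffice, so your count of $N+1$ is harmless overkill), and your bookkeeping of which power sums are supplied versus needed is exactly right. What each approach buys: yours is more elementary in that it never leaves $\real$ and needs only the one-dimensional real lemma, at the cost of invoking injectivity across an infinite family of $t$; the paper's is a one-shot argument, and its complexification is not incidental — the same identification $\Psi_N(\Mat{x}+\Mat{y}\sqrt{-1})=\Mat{\Upsilon}\widehat{\Phi}(\Mat{X})$ is reused later to prove continuity of the inverse (Lemma~\ref{lem:cont_le_hat}) via boundedness of the complex root-finding map, which your real-parameter argument does not by itself set up.
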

The second observation is that each term in the RHS of Eq. \ref{eqn:arch_le} can be rewritten as a monomial of an exponential function:
\begin{align} \label{eqn:le_to_mono}
\exp(\Mat{v}^\top \Mat{x}) = \exp(\Mat{u}^\top \log(\exp(\Mat{\Omega}^\top \Mat{x}))) = \prod_{k = 1}^{K_1+D} \exp(\Mat{\Omega}\Mat{x})_k^{\Mat{u}_{k}},
\end{align}
where the exponential and the logarithm are taken element-wisely, $\Mat{v} = \Mat{\Omega} \Mat{u}$ for some $\Mat{\Omega} \in \real^{D \times (K_1+D)}$, and $\Mat{u} \in \real^{K_1+D}$. Recall that $K_1$ is the needed dimension to construct an anchor as shown in Lemma~\ref{lem:anchor_con}.
Then, the assignment of $\Mat{v}_1, \cdots, \Mat{v}_L$ amounts to specifying the exponents for $D$ power functions within the product. Specifically, we choose $\Mat{v}$, $\Mat{\Omega}$, $\Mat{u}$ as follows.

\paragraph{Construction.} 
We first reindex and rewrite $\{\Mat{v}_i : i \in [L]\}$ as $\{\Mat{v}_{i,j,p,q} = \Mat{\Omega} \Mat{u}_{i,j,p,q} : i \in [D], j \in [K_1], p \in [N], q \in [p+1] \}$, where $\Mat{\Omega} = \begin{bmatrix} \Mat{e}_1 & \cdots & \Mat{e}_{D} & \Mat{\alpha}_1 & \cdots & \Mat{\alpha}_{K_1} \end{bmatrix} \in \real^{D \times (D+K_1)}$ and $\Mat{u}_{i,j,p,q} \in \real^{D+K_1}$ are specified as below. In Fig. \ref{fig:le_construct}, we depict the forward pass of an LE layer.
\begin{enumerate}[leftmargin=5mm]
\item The choice of weights $\Mat{\Omega}$ follows from the construction of the LP layer, i.e.,  $\Mat{e}_i \in \real^D, \forall i \in [D]$ are canonical basis and $\{\Mat{\alpha}_{j} : \forall j \in [K_1]\}$ with $K_1 = N(N-1)(D-1)/2+1$ are drawn according to Lemma \ref{lem:anchor_con} so that an anchor is guaranteed to be produced.
\item Design a group of weights $\Mat{U} = \begin{bmatrix} \cdots & \Mat{u}_{i,j,p,q} & \cdots \end{bmatrix} \in \real^{(D+K_1) \times DK_1N(N+3)/2}$ for $i \in [D], j \in [K_1], p \in [N], q \in [p+1]$ such that $\Mat{u}_{i,j,p,q} = (q-1) \Mat{e}_i + (p - q + 1) \Mat{e}_{D+j}$.
\end{enumerate}

\paragraph{Injectivity.}
Plugging $\Mat{\Omega}$ and $\Mat{U}$ into Eq. \ref{eqn:le_to_mono}, we can examine each output dimension of the embedding layer: $
\left[\sum_{n=1}^{N} \phi(\Mat{x}^{(n)})\right]_{i,j,p,q} = \sum_{n=1}^{N} \exp(\Mat{x}_i)_n^{q-1} \exp(\Mat{X} \Mat{\alpha}_j)_n^{p-q+1}
$.
Then by Lemma \ref{lem:mono_coupling}:
\begin{align} \label{eqn:mono_coupling}
\sum_{n=1}^{N} \phi(\Mat{x}^{(n)}) = \sum_{n=1}^{N} \phi(\Mat{x'}^{(n)}) \Rightarrow & \begin{bmatrix} \exp(\Mat{x}_i) & \exp(\Mat{X} \Mat{\alpha}_j) \end{bmatrix} \sim \begin{bmatrix} \exp(\Mat{x'}_i) & \exp(\Mat{X'} \Mat{\alpha}_j) \end{bmatrix},
\end{align}
$\forall i \in [D], j \in [K_2]$. With above implication, the proof can be concluded via the following steps: Lemma \ref{lem:anchor_con} guarantees the existence of $j^{*} \in [K_2]$ such that $\Mat{X}\Mat{w}_{j^{*}}$ is an anchor of $\Mat{X}$, and so is $\exp(\Mat{X}\Mat{w}_{j^{*}})$ due to the strict monotonicity of $\exp(\cdot)$; Lemma \ref{lem:union_align} and Eq. \ref{eqn:mono_coupling} together imply:
\begin{align} \label{eqn:union_align_exp}
\begin{bmatrix} \exp(\Mat{x}_i) & \exp(\Mat{X} \Mat{\alpha}_{j^{*}}) \end{bmatrix} \sim \begin{bmatrix} \exp(\Mat{x'}_i) & \exp(\Mat{X'} \Mat{\alpha}_{j^{*}}) \end{bmatrix} \forall i \in [D] \Rightarrow \exp(\Mat{X}) \sim \exp(\Mat{X'}).
\end{align}
And finally, notice that an element-wise function does not affect equivalence under permutation. 
The total number of required linear weights is $L = DK_1(N+3)N/2 \le N^4D^2$, as desired.

\subsection{Continuity} \label{sec:cont_lem}
\vspace{-2mm}
In this section, we show that the LP and LE induced sum-pooling are both homeomorphic.
We note that it is intractable to obtain the closed form of their inverse maps.
Notably, the following remarkable result can get rid of inversing a function explicitly by merely examining the topological relationship between the domain and image space.
\begin{lemma}(Theorem 1.2 \citep{curgus2006roots}) \label{lem:cont_lem}
Let $(\Set{X}, d_{\Set{X}})$ and $(\Set{Z}, d_{\Set{Z}})$ be two metric spaces and $f: \Set{X} \rightarrow \Set{Z}$ is a bijection such that \textbf{(a)} each bounded and closed subset of $\Set{X}$ is compact, \textbf{(b)} $f$ is continuous, \textbf{(c)} $f^{-1}$ maps each bounded set in $\Set{Z}$ into a bounded set in $\Set{X}$. Then $f^{-1}$ is continuous.
\end{lemma}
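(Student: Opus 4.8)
The plan is to establish continuity of $f^{-1}: \Set{Z} \to \Set{X}$ through the sequential criterion, which is legitimate because $(\Set{Z}, d_{\Set{Z}})$ is a metric space. First I would fix an arbitrary sequence $z_n \to z$ in $\Set{Z}$, set $x_n := f^{-1}(z_n)$ and $x := f^{-1}(z)$, and reduce the goal to showing $x_n \to x$ in $(\Set{X}, d_{\Set{X}})$. The difficulty is that nothing yet prevents $\{x_n\}$ from escaping to infinity, so a direct limit argument is not available; the point of hypotheses (a) and (c) is precisely to rule this out.

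The heart of the argument is therefore to confine the preimages to a compact set. A convergent sequence together with its limit is bounded, so $B := \{z_n : n \ge 1\} \cup \{z\}$ is bounded in $\Set{Z}$; by hypothesis (c), $f^{-1}(B) = \{x_n : n \ge 1\} \cup \{x\}$ is bounded in $\Set{X}$. Its closure is bounded and closed, hence compact by hypothesis (a); call it $C$. Thus every $x_n$, and also $x$, lies in $C$.

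Finally I would run a subsequence argument by contradiction. If $x_n \not\to x$, there are $\varepsilon > 0$ and indices $n_1 < n_2 < \cdots$ with $d_{\Set{X}}(x_{n_k}, x) \ge \varepsilon$ for all $k$. Since compactness coincides with sequential compactness in metric spaces, $C$ contains a further subsequence $x_{n_{k_j}} \to x^\star$ for some $x^\star \in C$, and letting $j \to \infty$ in the inequality gives $d_{\Set{X}}(x^\star, x) \ge \varepsilon$, so $x^\star \ne x$. On the other hand, continuity of $f$ (hypothesis (b)) gives $f(x_{n_{k_j}}) \to f(x^\star)$, while $f(x_{n_{k_j}}) = z_{n_{k_j}} \to z = f(x)$; uniqueness of limits in $\Set{Z}$ forces $f(x^\star) = f(x)$, and injectivity of $f$ then yields $x^\star = x$, a contradiction. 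Hence $x_n \to x$, and $f^{-1}$ is continuous. There is no serious technical obstacle here; the only subtlety worth flagging is that neither (a) nor (c) suffices alone — it is their conjunction that traps the preimages inside a compact set — and that one must pass through a subsequence of a subsequence rather than taking a limit of $x_n$ directly, since $x_n$ is not a priori known to converge.
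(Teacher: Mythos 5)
Your proof is correct. The paper does not prove this lemma itself — it cites it as Theorem 1.2 of Curgus and Mascioni — and your sequential-compactness argument (bound the preimages via (c), trap them in a compact set via (a), then use a subsequence-of-a-subsequence plus continuity and injectivity of $f$ to force the contradiction) is precisely the standard proof of that cited result.
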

Subsequently, we show the continuity in an informal but more intuitive way while deferring a rigorous version to the supplementary materials. Denote $\Phi(\Mat{X}) = \sum_{i \in [N]} \phi(\Mat{x}^{(i)})$. %
To begin with, we set $\mathcal{X} = \real^{N \times D}/\sim$ with metric $d_{\mathcal{X}}(\Mat{X},\Mat{X}') = \min_{\Mat{P}\in\Pi(N)}\|\Mat{X}-\Mat{P}\Mat{X}'\|_{\infty,\infty}$ and $\Set{Z}=\{\Phi(\Mat{X})|\Mat{X}\in\Set{X}\}\subseteq \mathbb{R}^L$ with metric $d_{\Set{Z}}(\Mat{z},\Mat{z}')=\lVert\Mat{z}-\Mat{z}'\rVert_{\infty}$. It is easy to show that $\Set{X}$ satisfies the conditions \textbf{(a)} and $\Phi(\Mat{X})$ satisfies \textbf{(b)} for both LP and LE embedding layers. Then it remains to conclude the proof by verifying the condition \textbf{(c)} for the mapping $\Set{Z} \rightarrow \Set{X}$, i.e., the inverse of $\Phi(\Mat{X})$.
We visualize this mapping following the chain of implication to show injectivity:
\begin{align}
\begin{array}{lccccc}
(LP)  & \Phi(\Mat{X}) & \xrightarrow{\text{Eq. \ref{eqn:channelwise_inverse}}} & \begin{bmatrix} \cdots
 & \Mat{P}_i\Mat{X}\Mat{w}_i & \cdots
\end{bmatrix}, i \in [K] & \xrightarrow{\text{Eqs. \ref{eqn:lin_coupling} + \ref{eqn:union_align}}} & \Mat{P}\Mat{X} \\
(LE) & \underbrace{\Phi(\Mat{X})}_{\Set{Z}} & \xrightarrow{\text{Eq. \ref{eqn:mono_coupling}}} & \underbrace{\exp\begin{bmatrix} \Mat{Q}_{i,j}\Mat{x}_{i} & \Mat{Q}_{i,j} \Mat{a}_j
\end{bmatrix}, i \in [D], j \in [K_1]}_{\Set{R}} & \xrightarrow{\text{Eq. \ref{eqn:union_align_exp}}} & \underbrace{\Mat{Q}\Mat{X}}_{\Set{X}}
\end{array},\nonumber
\end{align}
for some $\Mat{X}$ dependent $\Mat{P}$, $\Mat{Q} \in \Pi(N)$. Here, $\Mat{P}_i \in \Pi(N), i \in [K]$ and $\Mat{Q}_{i,j} \in \Pi(N)$, $\Mat{a}_j = \Mat{X} \Mat{\alpha}_{j}, i \in [D], j \in [K_1]$. All the weights have been specified as in Sec. \ref{sec:inject}.
According to homeomorphism between polynomial coefficients and roots (Corollary 3.2 in \citet{curgus2006roots}), any bounded set in $\Set{Z}$ will be mapped into a bounded set in $\Set{R}$.
Moreover, since elements in $\Set{R}$ contain all the columns of $\Set{X}$ (up to some changes of the entry orders), a bounded set in $\Set{R}$ also corresponds to a bounded set in $\Set{X}$. Through this line of arguments, we conclude the proof.
\vspace{-2mm}
\section{Extensions} \label{sec:extension}
\vspace{-2mm}

\textbf{Permutation Equivariance.}
Permutation-equivariant functions (cf. Definition \ref{def:permute_equivariance}) are considered as a more general family of set functions.
Our main result does not lose generality to this class of functions.
By Lemma 2 of \citet{wang2022equivariant},
Theorem \ref{thm:main} can be directly extended to permutation-equivariant functions with \textit{the same lower and upper bounds}, stated as follows:
\begin{theorem}[Extension to Equivariance] \label{thm:equiv}
For any permutation-equivariant function $f: \real^{N \times D} \rightarrow \real^N$, there exists continuous functions $\phi: \real^{D} \rightarrow \real^{L}$ and $\rho: \real^{D} \times \real^{L} \rightarrow \real$ such that $f(\Mat{X})_j = \rho\left(\Mat{x}^{(j)}, \sum_{i \in [N]} \phi(\Mat{x}^{(i)})\right)$ for every $j \in [N]$, where $L \in [N(D+1), N^5 D^2]$ when $\phi$ admits LP architecture, and $L \in [ND, N^4D^2]$ when $\phi$ admits LE architecture.
\end{theorem}
\textbf{Complex Domain.}
The upper bounds in Theorem \ref{thm:main} is also true to complex features up to a constant scale. When features are defined over $\complex^{N \times D}$, our primary idea is to divide each channel into two real feature vectors, and recall Theorem \ref{thm:main} to conclude the arguments on an $\real^{N \times 2D}$ input. All of our proof strategies are still applied. 
This result directly contrasts to \citeauthor{zweig2022exponential}'s work  whose main arguments were established on complex numbers.
We show that even moving to the complex domain, polynomial length of $L$ is still sufficient for the DeepSets architecture \citep{zaheer2017deepset}. We state a formal version of the theorem in Appendix \ref{sec:ext_complex}.

\vspace{-2mm}
\section{Conclusion}\vspace{-1mm}
This work investigates how many neurons are needed to model the embedding space for set representation learning with the DeepSets architecture \citep{zaheer2017deepset}.
Our paper provides an affirmative answer that polynomial many neurons in the set size and feature dimension are sufficient.
Compared with prior arts, our theory takes high-dimensional features into consideration while significantly advancing the state-of-the-art results from exponential to polynomial.
\vspace{-2mm}
\paragraph{Limitations.}
The tightness of our bounds is not examined in this paper, and the complexity of $\rho$ is uninvestigated and left for future exploration.

\subsubsection*{Acknowledgments}
We would like to thank Dr. Yusu Wang and Dr. Puoya Tabaghi for a meaningful discussion. We also express our gratitude to Dr. Manolis C. Tsakiris for pointing out useful results in the topics of unlabeled sensing. P. Li is supported by NSF awards PHY-2117997, IIS-2239565. Z. Wang is in part supported by US Army Research Office Young Investigator Award W911NF2010240 and the NSF AI Institute for Foundations of Machine Learning (IFML).

\bibliography{iclr2024_conference}
\bibliographystyle{iclr2024_conference}

\appendix

\section{Numerical Experiments}\label{apd:exp}

To verify our theoretical claim, we conducted proof-of-concept experiments.
Similar to \citet{wagstaff2019limitations}, we train a DeepSets with $\phi$ and $\rho$ parameterized by neural networks to fit a function that takes the \textit{median} over a vector-valued set according to the lexicographical order.
Specifically, the input features are sampled from a uniform distribution, $\phi$ is chosen as one linear layer followed by a SiLU activation function \citep{elfwing2018sigmoid}, and $\rho$ is a two-layer fully-connected network with ReLU activation.
During the experiment, we vary the input size, dimension, and hidden dimension of $\phi$, and record the final training error (RMSE) after the network converges.
The critical width $L^*$ is taken at the point where RMSE first reaches below 10\% above the minimum value for this set size.
The relationship between $L^*$ and $N, D$ is plotted in Fig. \ref{fig:exprs}. We observe $\log(L^*)$ grows linearly with $\log(N)$ and $\log(D)$ instead of exponentially, which validates our theoretical claim.

\begin{figure}[!h]
\centering
\includegraphics[width=0.48\linewidth]{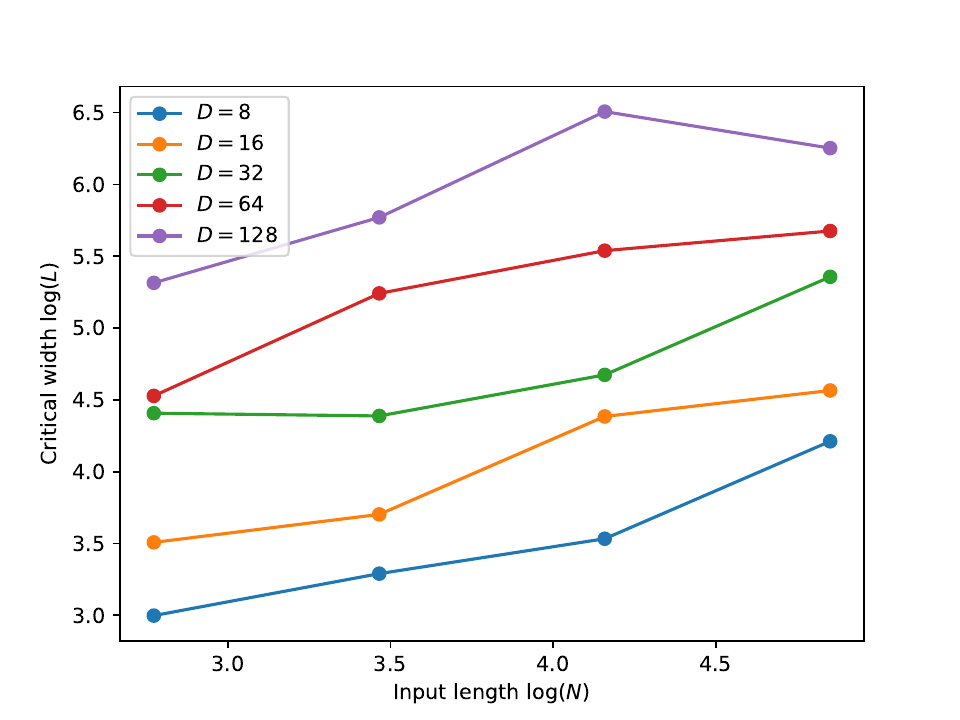}
\includegraphics[width=0.48\linewidth]{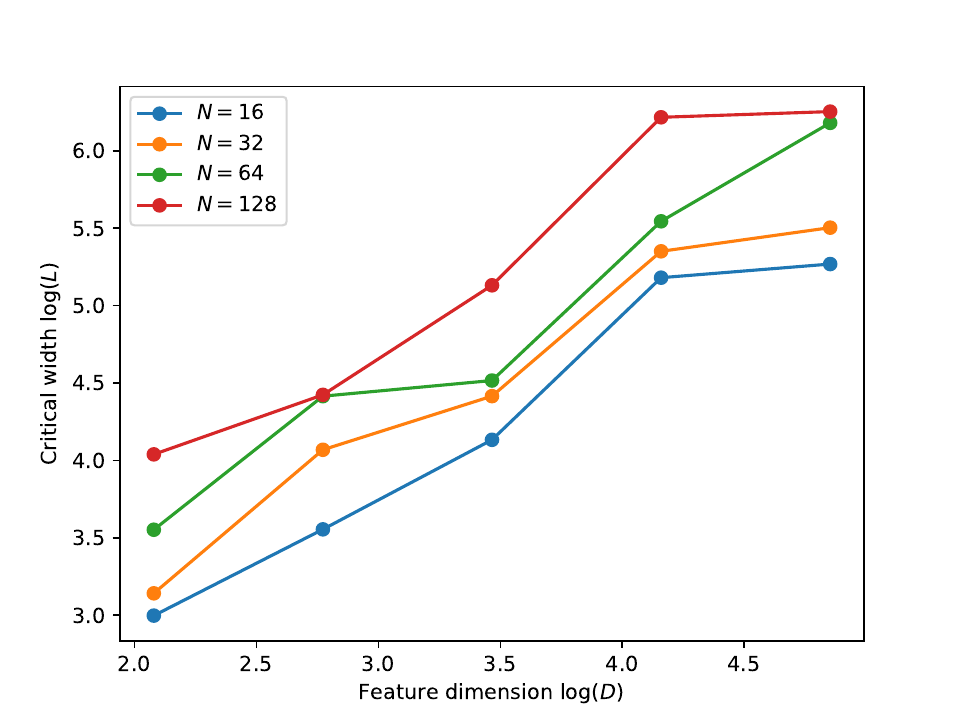}
\caption{The relationship among the critical width $L$, set size $N$, and feature dimension $D$. The phenomenon that $\log(L)$ scales linearly with $\log(N)$ and $\log(D)$ validates our theory. }
\label{fig:exprs}
\vspace{-1em}
\end{figure}
\section{Other Related Work} \label{sec:related_work}
Most works on neural networks to represent set functions have been discussed extensively in Sec.~\ref{sec:intro} and \ref{sec:main_res}. 
Here, we highlight a few concurrent works.
One breakthrough by \citet{dym2024low, amir2024neural} provides both upper bound $2ND + 1$ and lower bound $N(D + 1)$ on the embedding dimension for set representation.
In their construction, $\phi$ is chosen as a linear transformation followed by an arbitrary elementwise analytic function. 
Their proof of moment injectivity is through extending the finite-witness theorem to $\sigma$-subanalytic functions defined over $\sigma$-subanalytic sets.
Despite generalization to a broader scope of invariant structures, their approach fails to show the continuity of $\rho$ when the targeted function is defined over an open set.
\citet{tabaghi2023universal} has recently improved the upper bound of $L$ from $2ND + 1$ to $2ND$ considering the input feature space is compact.
Whereas, our main claim (Theorem \ref{thm:main}) guarantees the exact representation over the entire ambient space. 
The difficulty of mirroring our proof via Lemma \ref{lem:cont_lem} perhaps arises from the lack of explicit form of $\phi$ - verifying the boundedness of $\Phi^{-1}$ becomes less tractable.
Complementary to this line of work on symmetric functions, \citet{chen2023exact} demonstrates polynomial reliance on $N$ to exactly represent anti-symmetric functions.

We also review other related works on the expressive power analysis of neural networks.
Early works studied the expressive power of feed-forward neural networks with different activations~\citep{hornik1989multilayer,cybenko1989approximation}. Recent works focused on characterizing the benefits of the expressive power of deep architectures to explain their empirical success~\citep{yarotsky2017error,liang2017deep,kileel2019expressive,cohen2016expressive,raghu2017expressive}. Modern neural networks often enforce some invariance properties into their architectures such as CNNs that capture spatial translation invariance. The expressive power of invariant neural networks has been analyzed recently in \citet{yarotsky2022universal,maron2019universality,zhou2020universality}. 

The architectures studied in the above works allow universal approximation of continuous functions defined on their inputs. However, the family of practically useful architectures that enforce permutation invariance often fail in achieving universal approximation. Graph Neural Networks (GNNs) enforce permutation invariance and can be viewed as an extension of set neural networks to encode a set of pair-wise relations instead of a set of individual elements~\citep{scarselli2008graph,gilmer2017neural,kipf2016semi,hamilton2017inductive}. GNNs suffer from limited expressive power~\citep{xu2018how,morris2019weisfeiler,maron2018invariant} unless they adopt exponential-order tensors~\citep{keriven2019universal}. Hence, previous studies often characterized GNNs' expressive power based on their capability of distinguishing non-isomorphic graphs. Only a few works have ever discussed the function approximation property of GNNs~\citep{chen2019equivalence,chen2020can,azizian2021expressive} while these works still miss characterizing such dependence on the depth and width of the architectures~\citep{loukas2019graph}. As practical GNNs commonly adopt the architectures that combine feed-forward neural networks with set operations (neighborhood aggregation), we believe the characterization of the needed size for set function approximation studied in \citet{zweig2022exponential} and this work may provide useful tools to study finer-grained characterizations of the expressive power of GNNs.

\section{Some Preliminary Definitions and Statements}

In this section, we begin by stating basic metric spaces, which will be used in the proofs later.

\begin{definition}[Standard Metric] \
Define $(\Set{K}^D, d_{\infty})$ as the standard metric space, where $d_{\infty}: \Set{K}^{D} \times \Set{K}^{D} \rightarrow \real_{\ge 0}$ is the $\ell_{\infty}$-norm induced distance metric over $\Set{K}^D$:
\begin{align}
d_{\infty}(\Mat{z}, \Mat{z'}) = \max_{i \in [D]} | \Mat{z}_i - \Mat{z'}_i |.
\end{align}
\end{definition}

\begin{definition}[Product Metric] \label{def:prod_metric}
Consider a metric space $(\Set{X}, d_{\Set{X}})$. We denote the induced product metric over the product space $(\Set{X}^K, d_{\Set{X}}^K)$ %
as $d_{\Set{X}}^K: \Set{X}^K \times \Set{X}^K \rightarrow \real_{\ge 0}$:
\begin{align}
d_{\Set{X}}^K(\Mat{Z}, \Mat{Z'}) = \max_{i \in [K]} d_{\Set{X}}(\Mat{z}_i, \Mat{z'}_i),
\end{align}
where $\Mat{Z} = \begin{bmatrix} \Mat{z}_1 & \cdots & \Mat{z}_K \end{bmatrix} \in \Set{X}^K, \Mat{z}_i \in \Set{X}, \forall i \in [K]$.
\end{definition}

We provide rigorous definitions to specify the topology of the input space of permutation-invariant functions.

\begin{definition}[Set Metric] \label{def:set_metric}
Equipped $\Set{K}^{N \times D}$ with the equivalence relation $\sim$ (cf. Definition \ref{def:equi_class}), we define metric space $(\Set{K}^{N \times D}/\sim, \setdist)$, where $\setdist: (\Set{K}^{N \times D}/\sim) \times (\Set{K}^{N \times D}/\sim) \rightarrow \real_{\ge 0}$ is the optimal transport distance:
\begin{align}
\setdist(\Mat{X}, \Mat{X'}) = \min_{\Mat{P} \in \Pi(N)} \left\lVert \Mat{P} \Mat{X} - \Mat{X'} \right\rVert_{\infty, \infty},
\end{align}
and $\Set{K}$ can be either $\real$ or $\complex$.
\end{definition}
\begin{remark}
The $\left\lVert \cdot \right\rVert_{\infty, \infty}$ norm takes the absolute value of the maximal entry: $\max_{i \in [N], j \in [D]} |\Mat{X}_{i,j}|$. Other topologically equivalent matrix norms also apply.
\end{remark}

\begin{lemma}
The function $\setdist: (\Set{K}^{N \times D}/\sim) \times (\Set{K}^{N \times D}/\sim) \rightarrow \real_{\ge 0}$ is a distance metric on $\Set{K}^{N \times D}/\sim$.
\end{lemma}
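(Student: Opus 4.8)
The goal is to verify that $\setdist$ is a genuine metric on the quotient space $\Set{K}^{N\times D}/\!\sim$. I would proceed by checking the three metric axioms in order: well-definedness together with non-negativity and the identity of indiscernibles, symmetry, and the triangle inequality. The only slightly delicate point is that $\setdist$ is defined on equivalence classes via representatives, so I must first confirm the value does not depend on the choice of representatives; this is immediate because precomposing either argument with a permutation $\Mat{Q}\in\Pi(N)$ only reparametrizes the minimization set $\Pi(N)$ (which is closed under the group operations), so $\min_{\Mat{P}}\lVert \Mat{P}\Mat{Q}_1\Mat{X}-\Mat{Q}_2\Mat{X'}\rVert_{\infty,\infty}=\min_{\Mat{P}}\lVert \Mat{P}\Mat{X}-\Mat{X'}\rVert_{\infty,\infty}$. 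The minimum is attained since $\Pi(N)$ is finite, so $\setdist$ is a well-defined real number.

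For non-negativity, each term $\lVert\Mat{P}\Mat{X}-\Mat{X'}\rVert_{\infty,\infty}\ge 0$, hence so is the minimum. For the identity of indiscernibles: if $\setdist(\Mat{X},\Mat{X'})=0$ then some $\Mat{P}\in\Pi(N)$ achieves $\lVert\Mat{P}\Mat{X}-\Mat{X'}\rVert_{\infty,\infty}=0$, i.e.\ $\Mat{X'}=\Mat{P}\Mat{X}$, so $\Mat{X}\sim\Mat{X'}$ and the two classes coincide; conversely if the classes coincide then $\Mat{X'}=\Mat{P}\Mat{X}$ for some permutation and that term in the minimum is $0$, forcing $\setdist=0$. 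For symmetry, I would use that $\Pi(N)$ is a group: $\lVert\Mat{P}\Mat{X}-\Mat{X'}\rVert_{\infty,\infty}=\lVert\Mat{P}^{-1}(\Mat{P}\Mat{X}-\Mat{X'})\cdot(-1)\rVert$ up to relabeling — more cleanly, $\lVert\Mat{P}\Mat{X}-\Mat{X'}\rVert_{\infty,\infty}=\lVert\Mat{X}-\Mat{P}^{-1}\Mat{X'}\rVert_{\infty,\infty}=\lVert\Mat{P}^{-1}\Mat{X'}-\Mat{X}\rVert_{\infty,\infty}$ since $\lVert\cdot\rVert_{\infty,\infty}$ is a norm and $\Mat{P}^{-1}$ is again a permutation matrix, so as $\Mat{P}$ ranges over $\Pi(N)$ so does $\Mat{P}^{-1}$, and the two minima agree.

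The triangle inequality is the one nontrivial step. Given classes of $\Mat{X},\Mat{Y},\Mat{Z}$, let $\Mat{P}$ attain $\setdist(\Mat{X},\Mat{Y})=\lVert\Mat{P}\Mat{X}-\Mat{Y}\rVert_{\infty,\infty}$ and $\Mat{Q}$ attain $\setdist(\Mat{Y},\Mat{Z})=\lVert\Mat{Q}\Mat{Y}-\Mat{Z}\rVert_{\infty,\infty}$. Then $\Mat{Q}\Mat{P}\in\Pi(N)$, and by the triangle inequality for the matrix norm $\lVert\cdot\rVert_{\infty,\infty}$,
\begin{align}
\setdist(\Mat{X},\Mat{Z})\le \lVert\Mat{Q}\Mat{P}\Mat{X}-\Mat{Z}\rVert_{\infty,\infty}\le \lVert\Mat{Q}\Mat{P}\Mat{X}-\Mat{Q}\Mat{Y}\rVert_{\infty,\infty}+\lVert\Mat{Q}\Mat{Y}-\Mat{Z}\rVert_{\infty,\infty}.\nonumber
\end{align}
The second summand is exactly $\setdist(\Mat{Y},\Mat{Z})$, and the first equals $\lVert\Mat{Q}(\Mat{P}\Mat{X}-\Mat{Y})\rVert_{\infty,\infty}=\lVert\Mat{P}\Mat{X}-\Mat{Y}\rVert_{\infty,\infty}=\setdist(\Mat{X},\Mat{Y})$, using that left multiplication by a permutation matrix merely permutes rows and hence preserves $\lVert\cdot\rVert_{\infty,\infty}$. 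Combining gives $\setdist(\Mat{X},\Mat{Z})\le\setdist(\Mat{X},\Mat{Y})+\setdist(\Mat{Y},\Mat{Z})$. The main (mild) obstacle is simply being careful that permutation invariance of the $\lVert\cdot\rVert_{\infty,\infty}$ norm under row permutations is what lets the attained optimizers compose; everything else is routine. I would also remark that the same argument works verbatim for any matrix norm invariant under row permutations, justifying the remark after Definition~\ref{def:set_metric} that topologically equivalent norms may be substituted.
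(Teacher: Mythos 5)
Your proposal is correct and follows essentially the same route as the paper: the first three axioms are routine, and the triangle inequality is obtained by composing the optimal permutations for the two sub-distances and using that left-multiplication by a permutation matrix preserves $\lVert\cdot\rVert_{\infty,\infty}$. You are somewhat more explicit than the paper about well-definedness on equivalence classes and about why the optimizers compose, which is a welcome addition rather than a deviation.
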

\begin{proof}
Identity, positivity, and symmetry trivially hold for $\setdist$. It remains to show the triangle inequality as below: for arbitrary $\Mat{X}, \Mat{X'}, \Mat{X''} \in (\Set{K}^{N \times D}/\sim, \setdist)$,
\begin{align}
\setdist(\Mat{X}, \Mat{X''}) &= \min_{\Mat{P} \in \Pi(N)} \left\lVert \Mat{P} \Mat{X} - \Mat{X''} \right\rVert_{\infty, \infty} \le \min_{\Mat{P} \in \Pi(N)} \left( \left\lVert \Mat{P} \Mat{X} - \Mat{Q}^{*}\Mat{X'} \right\rVert_{\infty, \infty} + \left\lVert \Mat{Q}^{*} \Mat{X'} - \Mat{X''} \right\rVert_{\infty, \infty} \right) \nonumber \\
& = \min_{\Mat{P} \in \Pi(N)} \left\lVert \Mat{P} \Mat{X} -  \Mat{Q}^{*}\Mat{X'} \right\rVert_{\infty, \infty} + \left\lVert \Mat{Q}^{*} \Mat{X'} - \Mat{X''} \right\rVert_{\infty, \infty} \nonumber \\
&= \setdist(\Mat{X}, \Mat{X'}) + \setdist(\Mat{X}, \Mat{X''}), \nonumber
\end{align}
where $\Mat{Q}^{*} \in \argmin_{\Mat{Q} \in \Pi(N)} \left\lVert \Mat{Q} \Mat{X'} - \Mat{X''}\right\rVert_{\infty, \infty}$.
\end{proof}

Also we reveal a topological property for $(\Set{K}^{N \times D}/\sim, \setdist)$ which is essential to show continuity later.
\begin{lemma} \label{lem:compactness}
Each bounded and closed subset of $(\Set{K}^{N \times D}/\sim, \setdist)$ is compact.
\end{lemma}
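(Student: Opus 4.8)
The plan is to show that a bounded, closed subset of $(\Set{K}^{N \times D}/\sim, \setdist)$ is sequentially compact, which for a metric space is equivalent to compactness. Let $\Set{B} \subseteq \Set{K}^{N \times D}/\sim$ be bounded and closed, and let $(\Mat{X}_m)_{m \ge 1}$ be a sequence of equivalence classes in $\Set{B}$. For each $m$ pick an arbitrary representative matrix, still denoted $\Mat{X}_m \in \Set{K}^{N \times D}$.

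First I would observe that boundedness of $\Set{B}$ in $\setdist$ translates into a uniform entrywise bound on suitably chosen representatives: fixing a reference point $\Mat{X}_0$, every $\Mat{X}_m$ has some permutation $\Mat{P}_m$ with $\lVert \Mat{P}_m \Mat{X}_m - \Mat{X}_0 \rVert_{\infty,\infty} \le R$, so replacing $\Mat{X}_m$ by $\Mat{P}_m \Mat{X}_m$ (which represents the same class) we may assume all $\Mat{X}_m$ lie in a fixed closed ball $\Set{C} = \{\Mat{Y} \in \Set{K}^{N \times D} : \lVert \Mat{Y} - \Mat{X}_0 \rVert_{\infty,\infty} \le R\}$. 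Since $\Set{K}$ is $\real$ or $\complex$, $\Set{C}$ is a closed bounded subset of a finite-dimensional space, hence compact by Heine--Borel; therefore a subsequence $\Mat{X}_{m_k}$ converges to some $\Mat{X}^* \in \Set{C}$ in $\lVert \cdot \rVert_{\infty,\infty}$.

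Next I would transfer this convergence to the quotient metric. The quotient map $\pi: (\Set{K}^{N \times D}, \lVert\cdot\rVert_{\infty,\infty}) \to (\Set{K}^{N \times D}/\sim, \setdist)$ is $1$-Lipschitz, since $\setdist(\pi(\Mat{Y}), \pi(\Mat{Y}')) = \min_{\Mat{P}} \lVert \Mat{P}\Mat{Y} - \Mat{Y}'\rVert_{\infty,\infty} \le \lVert \Mat{Y} - \Mat{Y}'\rVert_{\infty,\infty}$. Hence $\setdist(\Mat{X}_{m_k}, \Mat{X}^*) \le \lVert \Mat{X}_{m_k} - \Mat{X}^* \rVert_{\infty,\infty} \to 0$, so the subsequence converges to $\pi(\Mat{X}^*)$ in $(\Set{K}^{N \times D}/\sim, \setdist)$. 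Because $\Set{B}$ is closed, the limit $\pi(\Mat{X}^*)$ lies in $\Set{B}$. Thus every sequence in $\Set{B}$ has a subsequence converging in $\Set{B}$, so $\Set{B}$ is (sequentially, hence) compact.

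The only genuinely delicate point is the very first reduction — justifying that boundedness in $\setdist$ lets us choose representatives lying in a common compact ball of the ambient space; this needs the elementary fact that $\setdist(\pi(\Mat{Y}),\pi(\Mat{X}_0)) \le R$ is realized by some permutation, which holds because $\Pi(N)$ is finite so the minimum defining $\setdist$ is attained. Everything after that is the standard argument that a finite quotient of a locally compact space by a compact (here finite) group action inherits local compactness, but I would spell it out concretely via Heine--Borel and the Lipschitz quotient map rather than invoking general theory. I would also note that the argument is identical for $\Set{K} = \real$ and $\Set{K} = \complex$ since both give finite-dimensional ambient spaces.
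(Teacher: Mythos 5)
Your proof is correct and follows essentially the same route as the paper's: lift the sequence to uniformly bounded representatives in the ambient space $\Set{K}^{N\times D}$, extract a convergent subsequence by Bolzano--Weierstrass/Heine--Borel, and push the limit back down through the $1$-Lipschitz quotient map, using closedness of the subset to place the limit inside it. The only cosmetic difference is that the paper fixes a global section of canonical representatives (via lexicographic ordering) rather than aligning each term to a reference point $\Mat{X}_0$, but both devices serve the identical purpose of obtaining bounded representatives.
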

\begin{proof}
Without loss of generality, the proof is done by extending Theorem 2.4 in \citet{curgus2006roots} to high-dimensional set elements.
Let $\varrho: (\Set{K}^{N \times D}, d_\infty) \rightarrow (\Set{K}^{N \times D} / \sim, \setdist)$ maps a matrix $\Mat{X}$ to a set whose elements are the rows of $\Mat{X}$.
We notice that $\varrho$ is a continuous mapping due to its contraction nature: $\setdist(\varrho(\Mat{X}), \varrho(\Mat{X'})) \le \lVert \Mat{X} - \Mat{X'} \rVert_{\infty,\infty} = d_\infty(\Mat{X}, \Mat{X'})$.
Let $\varrho^{-1}(\Mat{Z}) = \{ \Mat{X} \in (\Set{K}^{N \times D}, d_\infty) : \varrho(\Mat{X}) \sim \Mat{Z} \}$.
Define $\Set{T} \subset (\Set{K}^{N \times D}, d_\infty)$ such that $\varrho^{-1}(\Mat{Z}) \cap \Set{T}$ has only one element for every $\Mat{Z} \in (\Set{K}^{N \times D} / \sim, \setdist)$.
One example of picking elements for $\Set{T}$ is to sort every $\Mat{Z} \in (\Set{K}^{N \times D} / \sim, \setdist)$ in the lexicographical order.
Now constraining the domain of $\varrho$ to be $\Set{T}$ yields a bijective mapping $\varrho|_{\Set{T}}$, and its inverse $\varrho|_{\Set{T}}^{-1}$.
We notice that $\varrho \circ \varrho|_{\Set{T}}^{-1}$ induces an identity mapping over $(\Set{K}^{N \times D} / \sim, \setdist)$.

Now consider an arbitrary closed and bounded subset $\Set{S} \subset (\Set{K}^{N \times D} / \sim, \setdist)$. To show $\Set{S}$ is compact, we demonstrate every sequence $\{\Mat{X}_k \}$ in $\Set{S}$ has a convergent subsequence.
First observe that $\{ \varrho|_{\Set{T}}^{-1}(\Mat{X}_k) \}$ is also bounded in $(\Set{K}^{N \times D}, d_\infty)$. This is because $\lVert \varrho|_{\Set{T}}^{-1}(\Mat{X}) \rVert_{\infty, \infty} = \setdist(\Mat{X}, \Mat{0})$.
Hence, by Bolzano-Weierstrass Theorem, there exists a subsequence $\{ \Mat{X}_{j_k} \} \subset \{ \Mat{X}_k \}$ such that $\{ \varrho|_{\Set{T}}^{-1}(\Mat{X}_{j_k}) \}$ converges to some $\Mat{\chi} \in (\Set{K}^{N \times D}, d_\infty)$.
As aforementioned, since $\varrho$ is a continuous mapping, $\{ \varrho \circ \varrho|_{\Set{T}}^{-1}(\Mat{X}) \}$ converges to $\varrho(\Mat{\chi})$ in $(\Set{K}^{N \times D} / \sim, \setdist)$.
Since $\Set{S}$ is closed, $\varrho(\Mat{\chi}) \in \Set{S}$.
This is $\lim_{k \rightarrow \infty} \Mat{X}_{j_k} = \varrho(\Mat{\chi})$, which concludes the proof.
\end{proof}

Then we can rephrase the definition of a permutation-invariant function as a proper function mapping between the two metric spaces: $f: (\Set{K}^{N \times D}/\sim, \setdist) \rightarrow (\Set{K}^{D'}, d_{\infty})$.

We also recall the definition of injectivity for permutation-invariant functions: 
\begin{definition}[Injectivity] \label{def:injectivity_supp}
A permutation-invariant function $f: (\Set{K}^{N \times D}/\sim, \setdist) \rightarrow (\Set{K}^{D'}, d_{\infty})$ is injective if for every $\Mat{X}, \Mat{X'} \in \Set{K}^{N \times D}$ such that $f(\Mat{X}) = f(\Mat{X'})$, then $\Mat{X} \sim \Mat{X'}$.
\end{definition}

\begin{definition}[Bijectivity/Invertibility] \label{def:invertibiliy_supp}
A permutation-invariant function $f: (\Set{K}^{N \times D}/\sim, \setdist) \rightarrow (\Set{K}^{D'}, d_{\infty})$ is bijective or invertible if there exists a function $g: (\Set{K}^{D'}, d_{\infty}) \rightarrow (\Set{K}^{N \times D}/
\sim, \setdist)$ such that for every $\Mat{X} \in \Set{K}^{N \times D}$, $g \circ f(\Mat{X}) \sim \Mat{X}$.
\end{definition}

A well-known and useful result in set theory connects injectivity and bijectivity:
\begin{lemma}
A function is bijective if and only if it is simultaneously injective and surjective.
\end{lemma}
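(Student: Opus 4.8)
The plan is to prove the two implications separately by unwinding the paper's own Definitions~\ref{def:injectivity_supp} and~\ref{def:invertibiliy_supp} together with the usual notion of surjectivity; the one point to keep in mind is that the codomain of the candidate inverse $g$ is the quotient space $\Set{K}^{N\times D}/\sim$, so every statement about $g$ is a statement up to a row permutation.

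For the direction ``injective and surjective $\Rightarrow$ bijective'', I would define $g:(\Set{K}^{D'},d_\infty)\to(\Set{K}^{N\times D}/\sim,\setdist)$ by letting $g(\Mat{z})$ be the full preimage $f^{-1}(\{\Mat{z}\})$. Surjectivity makes this nonempty for every $\Mat{z}$; injectivity (Definition~\ref{def:injectivity_supp}) forces any two elements of the preimage to be $\sim$-equivalent, so the preimage is a single point of the quotient and $g$ is a genuine function --- and, importantly, no choice of representatives is invoked, since the quotient already collapses exactly the ambiguity. Then for any $\Mat{X}$ one has $g\circ f(\Mat{X}) = [\Mat{X}']$ for any $\Mat{X}'$ with $f(\Mat{X}')=f(\Mat{X})$, and such $\Mat{X}'$ satisfies $\Mat{X}'\sim\Mat{X}$, which is exactly the condition $g\circ f(\Mat{X})\sim\Mat{X}$ of Definition~\ref{def:invertibiliy_supp}.

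For ``bijective $\Rightarrow$ injective and surjective'', take $g$ with $g\circ f(\Mat{X})\sim\Mat{X}$ for all $\Mat{X}$. Injectivity is immediate: $f(\Mat{X})=f(\Mat{X}')$ gives $\Mat{X}\sim g\circ f(\Mat{X}) = g\circ f(\Mat{X}')\sim\Mat{X}'$, hence $\Mat{X}\sim\Mat{X}'$. For surjectivity I would adopt the convention already used in the $D=1$ analysis of Section~\ref{sec:deepsets} (where the codomain of $\Psi_N$ is restricted to its range $\Set{Z}$), i.e.\ regard $f$ as a map onto its image $f(\Set{K}^{N\times D}/\sim)$; then surjectivity holds by definition.

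I do not expect a genuine obstacle here --- both directions are routine bookkeeping with the equivalence relation. The only place that requires a small judgement call is the mild mismatch between the left-inverse phrasing of ``bijective'' in Definition~\ref{def:invertibiliy_supp} and honest surjectivity onto all of $\Set{K}^{D'}$: a mere left inverse $g$ need not force $f$ to be onto, so the lemma (and its later uses) should be read with the codomain restricted to the range, exactly as is done for $\Psi_N$ in Section~\ref{sec:deepsets}.
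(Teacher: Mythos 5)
Your proof is correct. The paper itself offers no proof of this lemma---it is stated as a ``well-known and useful result in set theory'' and left unproved---so there is nothing to diverge from; your unwinding of Definitions~\ref{def:injectivity_supp} and~\ref{def:invertibiliy_supp} is the routine argument one would supply, and the construction of $g$ as the full $\sim$-collapsed preimage is exactly right. Your closing caveat is also a genuine and worthwhile observation rather than pedantry: Definition~\ref{def:invertibiliy_supp} only demands a left inverse ($g\circ f(\Mat{X})\sim\Mat{X}$), which by itself does not force $f$ to be onto $\Set{K}^{D'}$, so the ``only if'' direction of the lemma literally requires reading the codomain as the range of $f$. That is indeed how the paper uses the lemma everywhere downstream (e.g., $\Psi_N:\real^N\to\Set{Z}$ in Section~\ref{sec:deepsets}, and $\widehat{\Phi}:\real_{>0}^{N\times D}\to\Set{Z}_{\widehat{\Phi}}$ in Lemma~\ref{lem:cont_le_hat}), so your reading matches the authors' intent.
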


We give an intuitive definition of continuity for permutation-invariant functions via the epsilon-delta statement: 
\begin{definition}[Continuity] \label{def:cont}
A permutation-invariant function $f: (\Set{K}^{N \times D}/
\sim, \setdist) \rightarrow (\Set{K}, d_{\infty})$ is continuous if for arbitrary $\Mat{X} \in \Set{K}^{N \times D}$ and $\epsilon > 0$, there exists $\delta > 0$ such that for every $\Mat{X'} \in \Set{K}^{N \times D}$, $\setdist(\Mat{X}, \Mat{X'}) < \delta$ then $d_\infty( f(\Mat{X}), f(\Mat{X'}) ) < \epsilon$.
\end{definition}

\begin{remark}
Since $\setdist$ is a distance metric, other equivalent definitions of continuity still applies.
\end{remark}

\section{Properties of Sum-of-Power Mapping for Real and Complex Domains}

In this section, we extend the sum-of-power mapping to both real and complex domains, and explore their desirable properties that serve as prerequisites for our later proof.
The proof techniques are borrowed from~\citet{curgus2006roots}.
Below, $\Set{K}$ can be either $\real$ or $\complex$.

\begin{definition} \label{def:power_sum}
Define power mapping: $\psi_{N}: \Set{K} \rightarrow \Set{K}^N$, $\psi_{N}(z) = \begin{bmatrix} z & z^2 & \cdots & z^N \end{bmatrix}^\top$ and (complex) sum-of-power mapping $\Psi_{N}: (\Set{K}^N/\sim, \setdist) \rightarrow (\Set{K}^N, d_\infty)$, $\Psi_{N}(\Mat{z}) = \sum_{i=1}^{N} \psi_{N}(z_i)$.
\end{definition}

\begin{lemma}[Existence of Continuous Inverse of Complex Sum-of-Power \citep{curgus2006roots}] \label{lem:complex_power_sum}
$\Psi_{N}$ is injective, thus the inverse $\Psi_{N}^{-1}: (\Set{K}^N, d_\infty) \rightarrow (\Set{K}^N/\sim, \setdist)$ exists. Moreover, $\Psi_{N}^{-1}$ is continuous.
\end{lemma}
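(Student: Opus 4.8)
The plan is to follow the real-variable case from the excerpt and adapt it to the complex setting by exploiting the standard homeomorphism between the coefficients and the (unordered) roots of a monic polynomial over $\complex$. First I would recall that for any $\Mat{z} = \begin{bmatrix} z_1 & \cdots & z_N \end{bmatrix}^\top \in \Set{K}^N$ the power sums $\{\sum_i z_i^k\}_{k=1}^N$ determine, and are determined by, the elementary symmetric polynomials $e_1(\Mat{z}), \dots, e_N(\Mat{z})$ via the Newton--Girard identities, which over a field of characteristic zero are an invertible polynomial (hence continuous) change of variables in both directions. Thus $\Psi_N(\Mat{z})$ carries exactly the same information as the coefficient vector of the monic polynomial $p_{\Mat{z}}(t) = \prod_{i=1}^N (t - z_i)$, and the map $\Mat{z} \mapsto (\text{coefficients of } p_{\Mat{z}})$ factors through $\Psi_N$ by a homeomorphism of $\Set{K}^N$.

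The key steps, in order: (1) \emph{Injectivity.} Suppose $\Psi_N(\Mat{z}) = \Psi_N(\Mat{z}')$. Then all power sums up to order $N$ agree, hence all elementary symmetric polynomials agree (Newton--Girard), hence $p_{\Mat{z}} = p_{\Mat{z}'}$ as polynomials. Over $\complex$ (or $\real$ viewed inside $\complex$), a monic polynomial is determined by its multiset of roots, so $\Mat{z}$ and $\Mat{z}'$ have the same multiset of entries, i.e. $\Mat{z} \sim \Mat{z}'$. This gives a well-defined inverse $\Psi_N^{-1}$ on the range. (2) \emph{Surjectivity onto its range and continuity of $\Psi_N$.} $\Psi_N$ is a polynomial map, hence continuous; restricting the codomain to $\Set{Z} = \Psi_N(\Set{K}^N)$ makes it a bijection modulo $\sim$. (3) \emph{Continuity of the inverse.} Here I would invoke Lemma~\ref{lem:cont_lem} (Theorem~1.2 of \cite{curgus2006roots}) with $\Set{X} = (\Set{K}^N/\sim, \setdist)$ and $\Set{Z}$ with the $\ell_\infty$ metric. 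Condition (a) — closed bounded subsets of $\Set{X}$ are compact — is exactly Lemma~\ref{lem:compactness}. Condition (b) is step (2). Condition (c) — $\Psi_N^{-1}$ sends bounded sets to bounded sets — follows because a bound on the power sums $\sum_i z_i^k$, $k \le N$, gives a bound on the elementary symmetric polynomials, hence on the coefficients of $p_{\Mat{z}}$, and a uniform bound on polynomial coefficients forces a uniform bound on the roots (e.g. by Cauchy's root bound $|z_i| \le 1 + \max_k |c_k|$); this bounds $\setdist(\Mat{z}, \Mat{0}) = \max_i |z_i|$. Then Lemma~\ref{lem:cont_lem} yields continuity of $\Psi_N^{-1}$.

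I expect step (3)'s condition (c) to be the only place requiring real care: one must go from a bound on power sums to a bound on the roots, which is cleanest by passing through the coefficient representation and quoting the homeomorphism between coefficients and roots of monic polynomials (Theorem~2.4 / Corollary~3.2 of \cite{curgus2006roots}) rather than estimating directly. For the complex case specifically, the fundamental theorem of algebra guarantees the root-multiset is a genuine element of $\complex^N/\sim$, so no extra subtlety arises relative to $\real$; in fact the complex statement is in some sense more natural, and the real statement then follows by noting that if $\Mat{z} \in \real^N$ its polynomial $p_{\Mat{z}}$ has real coefficients and its root multiset, being determined, must again lie in $\real^N/\sim$. I would present the argument uniformly for $\Set{K} \in \{\real, \complex\}$, citing \cite{curgus2006roots} for the coefficient–root homeomorphism and Lemma~\ref{lem:compactness} for the properness condition, and keep the Newton--Girard inversion as the one explicit computation.
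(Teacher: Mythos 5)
Your proposal is correct and follows essentially the same route the paper relies on: the paper states this lemma as a citation to \cite{curgus2006roots} and deploys exactly your machinery in the adjacent results — Newton--Girard to pass from power sums to coefficients, the coefficient–root homeomorphism (Lemma \ref{lem:inv_poly_boundedness}) for the boundedness condition, and Theorem 1.2 of \cite{curgus2006roots} together with Lemma \ref{lem:compactness} for continuity of the inverse. The real case is handled just as you describe, by restricting the complex statement to real coefficients and real root multisets (cf. Remark \ref{rmk:inv_poly_boundedness_real}).
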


\begin{lemma}[Corollary 3.2 \citep{curgus2006roots}] \label{lem:inv_poly_boundedness}
Consider a function $\zeta: (\Set{K}^N, d_\infty) \rightarrow (\Set{K}^N/\sim, \setdist) $ that maps the coefficients of a polynomial to its root multi-set. Then for any bounded subset $\Set{U} \subset (\Set{K}^N, d_\infty)$, the image $\zeta(\Set{U}) = \{ \zeta(\Mat{z}) : \Mat{z} \in \Set{U} \}$ is also bounded.
\end{lemma}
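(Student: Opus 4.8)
The plan is to reduce the claim to the classical Cauchy bound on the moduli of the roots of a polynomial in terms of its coefficients. In the setting of interest the relevant polynomials are monic of degree $N$ (recall that $\Psi_N$ produces $\prod_{i}(t-z_i)$), so a coefficient vector $\Mat{c}=(c_0,\dots,c_{N-1})\in\Set{K}^N$ encodes $p(t)=t^N+c_{N-1}t^{N-1}+\cdots+c_0$ and $\zeta(\Mat{c})$ is the multiset of its $N$ roots counted with multiplicity. It therefore suffices to produce, for every bounded $\Set{U}\subset(\Set{K}^N,d_\infty)$, a single radius $R$ such that every root of every $p$ with $\Mat{c}\in\Set{U}$ has modulus at most $R$.

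The key step is the elementary estimate. Fix $M$ with $d_\infty(\Mat{c},\Mat{0})=\max_{0\le k<N}|c_k|\le M$ for all $\Mat{c}\in\Set{U}$, and let $\lambda$ be a root of the corresponding $p$. If $|\lambda|\le 1$ there is nothing to do, so assume $|\lambda|>1$. From $\lambda^N=-\sum_{k=0}^{N-1}c_k\lambda^k$ and the triangle inequality,
\[
|\lambda|^N\le M\sum_{k=0}^{N-1}|\lambda|^k=M\cdot\frac{|\lambda|^N-1}{|\lambda|-1}<M\cdot\frac{|\lambda|^N}{|\lambda|-1},
\]
and dividing through by $|\lambda|^N>0$ gives $|\lambda|-1<M$, i.e. $|\lambda|<1+M$. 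Nothing in this computation distinguishes $\Set{K}=\real$ from $\Set{K}=\complex$, so the bound holds over both fields.

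To conclude: every entry of $\zeta(\Mat{c})$ has modulus $<1+M$, hence $\setdist(\zeta(\Mat{c}),\Mat{0})<1+M$ for all $\Mat{c}\in\Set{U}$, so $\zeta(\Set{U})$ lies in the ball of radius $1+M$ about $\Mat{0}$ in $(\Set{K}^N/\sim,\setdist)$ and is in particular bounded. If instead the parametrization is by power sums (as for $\Psi_N$), one first invokes Newton's identities, which express the elementary symmetric functions $e_1,\dots,e_N$ — equivalently the coefficients $c_0,\dots,c_{N-1}$ up to sign — as fixed polynomials in the power sums $p_1,\dots,p_N$; a polynomial map carries bounded sets to bounded sets, so a bound on the power sums yields a bound on $\Mat{c}$, after which the estimate above applies. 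I do not expect a genuine obstacle here: the result is classical and the proof is short; the only points requiring a little care are ensuring one is in the monic case (guaranteed by the construction) and, when needed, the passage between power sums and coefficients via Newton's identities.
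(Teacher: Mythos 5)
Your proof is correct. The paper does not actually prove this lemma --- it is imported verbatim as Corollary~3.2 of \cite{curgus2006roots}, with a separate remark handling the passage from $\complex$ to $\real$ by restricting to real coefficient vectors whose polynomials split over $\real$. Your argument supplies the missing content directly: the Cauchy bound $|\lambda| < 1 + M$ for any root of a monic degree-$N$ polynomial with coefficients bounded by $M$ is exactly the right estimate, the computation is valid, and since the bound is on the modulus it covers real roots as a special case of complex ones, so your one-line dismissal of the $\real$ versus $\complex$ distinction is legitimate. Two minor observations: the concluding remark about Newton's identities is not needed for this lemma itself (that conversion is the content of the subsequent Lemma~\ref{lem:inv_power_sum_boundedness}, where the paper does carry it out), and your identification of the domain $\Set{K}^N$ with monic degree-$N$ polynomials is the correct reading of the statement, since $N$ coefficients paired with $N$ roots forces the monic normalization.
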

\begin{remark} \label{rmk:inv_poly_boundedness_real}
The original proof of Lemma \ref{lem:inv_poly_boundedness} is done for the complex domain.
However, it is naturally true for real numbers because we can constrain the domain of $\zeta$ to be real coefficients such that the corresponding polynomials can fully split over the real domain, and the image to be all the real roots. Then both domain and image turn out to be a subset of the complex-valued version.
\end{remark}

\begin{lemma} \label{lem:inv_power_sum_boundedness}
Consider the $N$-degree sum-of-power mapping: $\Psi_{N}: (\Set{K}^{N}/\sim, \setdist) \rightarrow (\Set{K}^N, d_\infty)$, where $\Psi_{N}(\Mat{x}) = \sum_{i=1}^{N} \psi_{N}(x_i)$. Denote the range of $\Psi_{N}$ as $\Set{Z}_{\Psi_{N}} \subseteq \Set{K}^N$ and its inverse mapping $\Psi_{N}^{-1}: (\Set{Z}_{\Psi_{N}}, d_\infty) \rightarrow (\Set{K}^{N}/\sim, \setdist)$ (existence guaranteed by Lemma \ref{lem:complex_power_sum}). Then for every bounded set $\Set{U} \subset (\Set{Z}_{\Psi_{N}}, d_\infty)$, the image $\Psi_N^{-1}(\Set{U}) = \{ \Psi_N^{-1}(\Mat{z}) : \Mat{z} \in \Set{U} \}$ is also bounded.
\end{lemma}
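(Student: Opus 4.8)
The plan is to reduce the boundedness of $\Psi_N^{-1}$ on bounded subsets of its range to the already-established boundedness statement for the coefficients-to-roots map $\zeta$ in Lemma \ref{lem:inv_poly_boundedness}. The key observation is that the power sums $p_k(\Mat{x}) = \sum_{i=1}^N x_i^k$ for $k \in [N]$ (which are exactly the coordinates of $\Psi_N(\Mat{x})$) and the elementary symmetric polynomials $e_1(\Mat{x}), \dots, e_N(\Mat{x})$ (which are, up to signs, the coefficients of the monic polynomial $\prod_{i=1}^N (t - x_i)$) are related by the Newton--Girard identities. These identities express each $e_k$ as a fixed polynomial in $p_1, \dots, p_k$ (with rational coefficients depending only on $k$, not on $\Mat{x}$), and conversely each $p_k$ as a polynomial in $e_1, \dots, e_k$. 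In particular, the map $T: \Set{Z}_{\Psi_N} \to \Set{K}^N$ sending $\Psi_N(\Mat{x}) = (p_1, \dots, p_N)$ to the coefficient vector $(e_1, \dots, e_N)$ (equivalently $(-e_1, e_2, \dots, (-1)^N e_N)$) is a polynomial map, hence continuous, and therefore sends bounded sets to bounded sets.

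The steps, in order: (i) Recall via Newton's identities that there is a polynomial map $T$ with $T \circ \Psi_N(\Mat{x}) = \big(e_1(\Mat{x}), \dots, e_N(\Mat{x})\big)$ for all $\Mat{x} \in \Set{K}^N$, and note $T$ is continuous since it is a polynomial (the recursion $k e_k = \sum_{j=1}^{k} (-1)^{j-1} e_{k-j} p_j$ solves for $e_k$ in terms of $p_1,\dots,p_k$ and $e_1,\dots,e_{k-1}$, with $e_0 = 1$). (ii) Observe that the coefficient vector $T(\Psi_N(\Mat{x}))$ is precisely (up to the fixed sign pattern) the coefficient data of the monic polynomial whose root multiset is $\{x_1, \dots, x_N\}$, so that $\zeta\big(T(\Psi_N(\Mat{x}))\big) \sim \Mat{x}$, i.e. $\Psi_N^{-1} = \zeta \circ T$ on $\Set{Z}_{\Psi_N}$ (using uniqueness of $\Psi_N^{-1}$ from Lemma \ref{lem:complex_power_sum}). (iii) Given a bounded $\Set{U} \subset (\Set{Z}_{\Psi_N}, d_\infty)$, the continuous polynomial map $T$ sends $\Set{U}$ to a bounded set $T(\Set{U}) \subset \Set{K}^N$, and then Lemma \ref{lem:inv_poly_boundedness} (in its real-or-complex form, cf. Remark \ref{rmk:inv_poly_boundedness_real}) sends $T(\Set{U})$ to a bounded set $\zeta(T(\Set{U})) = \Psi_N^{-1}(\Set{U})$ in $(\Set{K}^N/\sim, \setdist)$. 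This is exactly the claim.

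I do not expect a serious obstacle here; the only mild care needed is bookkeeping. One must make sure the sign/index conventions in the Newton--Girard recursion match the ``coefficients of a polynomial'' convention used in Lemma \ref{lem:inv_poly_boundedness}, and that $T$ is genuinely defined and polynomial on all of $\Set{K}^N$ (so that continuity and ``bounded-to-bounded'' are immediate from it being a composition of additions and multiplications), not merely on the range $\Set{Z}_{\Psi_N}$. A second small point is to invoke Remark \ref{rmk:inv_poly_boundedness_real} when $\Set{K} = \real$, since the cited corollary is phrased over $\complex$; restricting to real coefficient vectors that split over $\real$ and to their real root multisets keeps us inside the complex statement. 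With these conventions pinned down, the lemma follows from the displayed composition $\Psi_N^{-1} = \zeta \circ T$ together with the two quoted boundedness facts.
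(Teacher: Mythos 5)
Your proposal is correct and follows essentially the same route as the paper's proof: both decompose $\Psi_N^{-1}$ as the roots-from-coefficients map of Lemma \ref{lem:inv_poly_boundedness} composed with the Newton--Girard polynomial map from power sums to elementary symmetric polynomials, use that the latter is polynomial (hence bounded-to-bounded), and invoke Remark \ref{rmk:inv_poly_boundedness_real} for the real case. The only cosmetic difference is that you state Newton's identities recursively while the paper writes them as a determinant.
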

\begin{proof}
We first show this result when $\Set{K} = \complex$, and naturally extend it to $\Set{K} = \real$.
We borrow the proof technique from \citet{zaheer2017deepset} to reveal a polynomial mapping between $(\Set{Z}_{\Psi_{N}}, d_\infty)$ and coefficient space of complex polynomials $(\complex^{N}, d_\infty)$.
For every $\Mat{\xi} \in (\complex^{N}/\sim, \setdist)$, let $\Mat{z} = \Psi_{N}(\Mat{\xi})$ and construct a polynomial:
\begin{align}
P_{\Mat{\xi}}(x) = \prod_{i = 1}^{N} (x - \xi_i) = x^{N} - a_1 x^{N-1} + \cdots + (-1)^{N-1} a_{N-1} x + (-1)^N a_N,
\end{align}
where $\Mat{\xi}$ are the roots of $P_{\Mat{\xi}}(x)$ and the coefficients can be written as elementary symmetric polynomials, i.e.,
\begin{align}
a_n = \sum_{1 \le j_1 \le j_2 \le \cdots \le j_n \le N} \xi_{j_1} \xi_{j_2} \cdots \xi_{j_n}, \forall n \in [N].
\end{align}
On the other hand, the elementary symmetric polynomials can be uniquely expressed as a function of $\Mat{z}$ by Newton-Girard formula:
\begin{align}
a_n = \frac{1}{n} \det\begin{bmatrix}
z_1 & 1 & 0 & 0 & \cdots & 0 \\
z_2 & z_1 & 1 & 0 & \cdots & 0 \\
\vdots & \vdots & \vdots & \vdots & \ddots & \vdots \\
z_{n-1} & z_{n-2} & z_{n-3} & z_{n-4} & \cdots & 1 \\
z_{n} & z_{n-1} & z_{n-2} & z_{n-3} & \cdots & 1 \\
\end{bmatrix} := Q(\Mat{z}), \forall n \in [N]
\end{align}
where the determinant $Q(\Mat{z})$ is also a polynomial in $\Mat{z}$.
Now we establish the mapping between $(\Set{Z}_{\Psi_{N}}, d_\infty)$ and $(\complex^N/\sim, \setdist)$:
\begin{align}
(\Set{Z}_{\Psi_{N}}, d_\infty) \xRightarrow{Q(\Mat{z})} \underbrace{(\complex^N, d_\infty)}_{\text{Coefficients}} \xRightarrow{\text{Lemma \ref{lem:inv_poly_boundedness}}} \underbrace{(\complex^N/\sim, \setdist)}_{\text{Roots}}.
\end{align}
Then the proof proceeds by observing that for any bounded subset $\Set{U} \subseteq (\Set{Z}_{\Psi_{N}}, d_\infty)$, the resulting $\Set{A} = Q(\Set{U})$ is also bounded in $(\complex^{N}, d_\infty)$.
Therefore, by Lemma \ref{lem:inv_poly_boundedness}, any bounded coefficient set $\Set{A}$ will produce a bounded root multi-set in $(\complex^{N}/\sim, \setdist)$.

Now we show Lemma \ref{lem:inv_poly_boundedness} is also true for real numbers. 
By Remark \ref{rmk:inv_poly_boundedness_real}, we can constrain the ambient space of $\Set{A}$ to be real coefficients whose corresponding polynomials can split over real numbers, and then the same proof proceed.
\end{proof}

\begin{corollary} \label{cor:inv_channelwise_boundedness}
Consider channel-wise high-dimensional sum-of-power $\widehat{\Psi_N}(\Mat{X}): (\Set{K}^{N \times K}/\sim, \setdist) \rightarrow (\Set{Z}_{\Psi_{N}}^{K}, d_\infty)$ defined as below:
\begin{align}
\widehat{\Psi_N}(\Mat{X}) = \begin{bmatrix}
\Psi_{N}(\Mat{x}_1)^{\top} & \cdots & \Psi_{N}(\Mat{x}_K)^{\top}
\end{bmatrix}^\top \in (\Set{Z}_{\Psi_{N}}^{K}, d_\infty),
\end{align}
where $\Set{Z}_{\Psi_{N}}= \{\Psi_N(\Mat{x}) : \Mat{x} \in \Set{K}^{N}\} \subseteq \Set{K}^{N}$ is the range of the sum-of-power mapping.
Define an associated mapping $\widehat{\Psi_N}^\dag: (\Set{Z}_{\Psi_{N}}^K, d_\infty) \rightarrow (\Set{K}^{N}/\sim, \setdist)^{K}$:
\begin{align}\label{eqn:phi_n_hat}
\widehat{\Psi_N}^\dag(\Mat{Z}) = \begin{bmatrix}
\Psi_{N}^{-1}(\Mat{z}_1) & \cdots & \Psi_{N}^{-1}(\Mat{z}_K)
\end{bmatrix},
\end{align}
where $\Mat{Z} = \begin{bmatrix} \Mat{z}_1^\top & \cdots & \Mat{z}_K^\top \end{bmatrix}^\top$, $\Mat{z}_i \in \Set{Z}_{\Psi_{N}}, \forall i \in [K]$.
Then the mapping $\widehat{\Psi_N}^\dag$ maps any bounded set in $(\Set{Z}_{\Psi_{N}}^{K}, d_\infty)$ to a bounded set in $(\Set{K}^{N}/\sim, \setdist)^K$.
\end{corollary}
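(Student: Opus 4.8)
The plan is to reduce this multi-channel statement to the single-channel boundedness result, Lemma~\ref{lem:inv_power_sum_boundedness}, by exploiting the fact that both the source metric $d_\infty$ on $\Set{Z}_{\Psi_N}^K \subseteq \Set{K}^{NK}$ and the target product metric $\setdist^K$ on $(\Set{K}^N/\sim,\setdist)^K$ are sup-type over the channel index $i \in [K]$, so that boundedness transfers channel-by-channel in both directions.

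First I would fix an arbitrary bounded set $\Set{U} \subset (\Set{Z}_{\Psi_N}^K, d_\infty)$ and, for each $i \in [K]$, pass to the $i$-th channel projection $\pi_i(\Set{U}) = \{\Mat{z}_i : \Mat{Z} = [\Mat{z}_1^\top \cdots \Mat{z}_K^\top]^\top \in \Set{U}\}$. Since $d_\infty$ on $\Set{K}^{NK}$ is exactly the maximum of the $K$ block-wise $d_\infty$ distances, the coordinate projection $\pi_i : (\Set{K}^{NK}, d_\infty) \to (\Set{K}^{N}, d_\infty)$ is $1$-Lipschitz, so $\pi_i(\Set{U})$ is bounded in $(\Set{K}^N, d_\infty)$; and because $\Set{U} \subseteq \Set{Z}_{\Psi_N}^K$ we have $\pi_i(\Set{U}) \subseteq \Set{Z}_{\Psi_N}$, i.e.\ $\pi_i(\Set{U})$ is a bounded subset of $(\Set{Z}_{\Psi_N}, d_\infty)$. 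This last inclusion is precisely what makes Lemma~\ref{lem:inv_power_sum_boundedness} applicable on each channel.

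Next I would apply Lemma~\ref{lem:inv_power_sum_boundedness} channel-wise: each image $\Psi_N^{-1}(\pi_i(\Set{U}))$ is a bounded subset of $(\Set{K}^N/\sim, \setdist)$. By the definition of $\widehat{\Psi_N}^\dag$ in Eq.~\eqref{eqn:phi_n_hat}, for every $\Mat{Z} \in \Set{U}$ the $i$-th block of $\widehat{\Psi_N}^\dag(\Mat{Z})$ equals $\Psi_N^{-1}(\Mat{z}_i) \in \Psi_N^{-1}(\pi_i(\Set{U}))$, hence $\widehat{\Psi_N}^\dag(\Set{U}) \subseteq \prod_{i=1}^{K} \Psi_N^{-1}(\pi_i(\Set{U}))$. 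A product of bounded sets is bounded in the product metric $\setdist^K$ of Definition~\ref{def:prod_metric} (if $\Psi_N^{-1}(\pi_i(\Set{U}))$ lies in a $\setdist$-ball of radius $R_i$ about a point $\Mat{c}_i$, then the product lies in a $\setdist^K$-ball of radius $\max_{i \in [K]} R_i$ about $[\Mat{c}_1 \cdots \Mat{c}_K]$), which gives the claim.

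This argument is essentially bookkeeping across channels; the closest thing to an obstacle is keeping straight that the range restriction to $\Set{Z}_{\Psi_N}$ is respected under projection (so Lemma~\ref{lem:inv_power_sum_boundedness} applies verbatim on each channel) and that the relevant source and target metrics both decompose as maxima over $i \in [K]$, so no new analytic input beyond Lemma~\ref{lem:inv_power_sum_boundedness} (ultimately Corollary~3.2 of \cite{curgus2006roots}) is required.
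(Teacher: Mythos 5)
Your proposal is correct and follows essentially the same route as the paper: reduce to the single-channel boundedness result (Lemma~\ref{lem:inv_power_sum_boundedness}) and use the fact that both the source metric $d_\infty$ and the target product metric $\setdist^K$ are maxima over the channel index, so boundedness transfers channel by channel. The paper phrases this via uniformly bounded pairwise distances rather than projections and products of bounded sets, but the content is identical.
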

\begin{proof}
Proved by noting that if $d_{\infty}(\Mat{z}_i, \Mat{z'}_i) \le C_1$ for some $\Mat{z}_i, \Mat{z'}_i \in (\Set{Z}_{\Psi_{N}}, d_\infty), \forall i \in [K]$ and a constant $C_1 \ge 0$, then $\setdist(\Psi_{N}^{-1}(\Mat{z}_i), \Psi_{N}^{-1}(\Mat{z'}_i)) \le C_2, \forall i \in [K]$ for some constant $C_2 \ge 0$ by Lemma \ref{lem:inv_power_sum_boundedness}.
Finally, we have:
\begin{align}
\setdist^K\left(\widehat{\Psi_N}^\dag \left(\Mat{Z}\right), \widehat{\Psi_N}^\dag \left(\Mat{Z'}\right) \right) = \max_{i \in [K]} \setdist(\Psi_{N}^{-1}(\Mat{z}_i), \Psi_{N}^{-1}(\Mat{z'}_i)) \le C_2, \nonumber
\end{align}
which is also bounded above.
\end{proof}

\section{Proofs for the Properties of Anchor}
\label{sec:prf_anchors}

The main ingredient of our construction is anchor defined in Definition \ref{def:anchor}.
Two key properties of anchors are restated in Lemma \ref{lem:anchor_supp} and \ref{lem:union_align_supp} and proved below:

\begin{lemma}\label{lem:anchor_supp}
 Consider the data matrix $\Mat{X} \in \real^{N \times D}$ and $\Mat{a} \in \real^{N}$ an anchor of $\Mat{X}$. Then if there exists $\Mat{P} \in \Pi(N)$ such that $\Mat{P} \Mat{a} = \Mat{a}$ then $\Mat{P} \Mat{x}_i = \Mat{x}_i$ for every $i \in [D]$.
 \end{lemma}
\begin{proof}
Prove by contradiction. Suppose $\Mat{P}\Mat{x}_i \neq \Mat{x}_i$ for some $i \in [D]$, then there exist some $p, q \in [N]$ such that $\Mat{x}_{i}^{(p)} \ne \Mat{x}_{i}^{(q)}$ while $a_p = a_q$. However, this contradicts the definition of an anchor (cf. Definition \ref{def:anchor}).
\end{proof}

\begin{lemma}[Union Alignment, Lemma \ref{lem:union_align}] \label{lem:union_align_supp}
 Consider the data matrix $\Mat{X}, \Mat{X'} \in \real^{N \times D}$, $\Mat{a} \in \real^{N}$ is an anchor of $\Mat{X}$ and $\Mat{a'} \in \real^N$ is an arbitrary vector. If $\begin{bmatrix} \Mat{a} & \Mat{x}_i \end{bmatrix} \sim \begin{bmatrix} \Mat{a'} & \Mat{x'}_i \end{bmatrix}$ for every $i \in [D]$, then $\Mat{X} \sim \Mat{X'}$.
 \end{lemma}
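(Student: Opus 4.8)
The plan is to prove the Union Alignment lemma by leveraging the fact that an anchor has the power to pin down a unique permutation, so that the columnwise permutations granted by the hypothesis $\begin{bmatrix}\Mat{a} & \Mat{x}_i\end{bmatrix} \sim \begin{bmatrix}\Mat{a'} & \Mat{x'}_i\end{bmatrix}$ can be forced to coincide. First I would unpack the hypothesis: for each $i \in [D]$ there exists $\Mat{P}_i \in \Pi(N)$ with $\Mat{P}_i \Mat{a} = \Mat{a'}$ and $\Mat{P}_i \Mat{x}_i = \Mat{x'}_i$. The key subtlety is that a priori these $\Mat{P}_i$ may differ across channels, which is exactly the obstruction that defeats naive channelwise recovery (as discussed in Sec.~\ref{sec:curse_dim}); the whole point of the anchor is to eliminate this freedom.

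The core step is to show all the $\Mat{P}_i$ can be taken to be a single common permutation. Fix $\Mat{P}_1$ and consider any other index $i$. Since $\Mat{P}_1 \Mat{a} = \Mat{a'}$ and $\Mat{P}_i \Mat{a} = \Mat{a'}$, we get $\Mat{P}_1^{-1}\Mat{P}_i \Mat{a} = \Mat{a}$, i.e. $\Mat{Q} := \Mat{P}_1^{-1}\Mat{P}_i$ is a permutation that fixes the anchor $\Mat{a}$. By Lemma~\ref{lem:anchor} (Lemma~\ref{lem:anchor_supp}), any permutation fixing an anchor of $\Mat{X}$ also fixes every column $\Mat{x}_j$, so $\Mat{Q}\Mat{x}_j = \Mat{x}_j$ for all $j \in [D]$, and in particular $\Mat{Q}\Mat{x}_i = \Mat{x}_i$. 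Then $\Mat{P}_1 \Mat{x}_i = \Mat{P}_1 \Mat{Q}\Mat{x}_i = \Mat{P}_i \Mat{x}_i = \Mat{x'}_i$. Since this holds for every $i \in [D]$, and also $\Mat{P}_1\Mat{a} = \Mat{a'}$, the single permutation $\Mat{P}_1$ simultaneously sends every column of $\Mat{X}$ to the corresponding column of $\Mat{X'}$, hence $\Mat{P}_1 \Mat{X} = \Mat{X'}$, which is precisely $\Mat{X} \sim \Mat{X'}$.

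I would then double-check the edge cases: the argument uses $\Mat{a}$ being an anchor of $\Mat{X}$ (not of $\Mat{X'}$), which is exactly the stated hypothesis, and $\Mat{a'}$ is allowed to be arbitrary — it is used only as the common image of $\Mat{a}$ under each $\Mat{P}_i$, which is guaranteed by the equivalence relation $\begin{bmatrix}\Mat{a} & \Mat{x}_i\end{bmatrix} \sim \begin{bmatrix}\Mat{a'} & \Mat{x'}_i\end{bmatrix}$. The main obstacle here is conceptual rather than computational: recognizing that Lemma~\ref{lem:anchor} is precisely the tool that collapses the per-channel permutations into one, by controlling the stabilizer subgroup of the anchor. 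Once that is in place, the proof is a short chain of permutation algebra with no delicate estimates.
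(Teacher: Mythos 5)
Your proof is correct and follows essentially the same route as the paper's: both arguments use Lemma \ref{lem:anchor} to show that any permutation fixing the anchor must fix every column, thereby collapsing the per-channel permutations into a single common one. Your choice of composing with $\Mat{P}_1^{-1}$ rather than an arbitrary aligner of $\Mat{a}$ and $\Mat{a'}$ is a minor notational variant, not a different argument.
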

\begin{proof}
According to definition of equivalence, there exists $\Mat{Q}_i \in \Pi(N)$ for every $i \in [D]$ such that $\begin{bmatrix}  \Mat{a} & \Mat{x}_i \end{bmatrix} = \begin{bmatrix} \Mat{Q}_i \Mat{a'} & \Mat{Q}_i \Mat{x'}_i \end{bmatrix}$.
Moreover, since $\begin{bmatrix} \Mat{a} & \Mat{x}_i \end{bmatrix} \sim \begin{bmatrix} \Mat{a'} & \Mat{x'}_i \end{bmatrix}$, it must hold that $\Mat{a} \sim \Mat{a'}$, i.e., there exists $\Mat{P} \in \Pi(N)$ such that $\Mat{P} \Mat{a} = \Mat{a'}$. Combined together, we have that $\Mat{Q}_i \Mat{P} \Mat{a} = \Mat{a}$.

Next, we choose $\Mat{Q'}_i = \Mat{P}^{\top} \Mat{Q}_i^{\top}$ so $\Mat{Q'}_i\Mat{a} = \Mat{Q'}_i \Mat{Q}_i \Mat{P} \Mat{a} = \Mat{a}$. Due to the property of anchors (Lemma \ref{lem:anchor_supp}), we have $\Mat{Q'}_i \Mat{x}_i = \Mat{x}_i$.
Notice that $\Mat{x}_i = \Mat{Q'}_i \Mat{x}_i = \Mat{P}^{\top} \Mat{Q}_i^{\top} \Mat{Q}_i \Mat{x'}_i = \Mat{P} \Mat{x'}_i$.
Therefore, we can conclude the proof as we have found a permutation matrix $\Mat{P}$ that simultaneously aligns $\Mat{x}_i$ and $\Mat{x'}_i$ for every $i \in [D]$, i.e., $\Mat{X} = \begin{bmatrix} \Mat{x}_1 & \cdots & \Mat{x}_D \end{bmatrix} = \begin{bmatrix} \Mat{P} \Mat{x}_1 & \cdots & \Mat{P} \Mat{x}_D \end{bmatrix} = \Mat{P} \Mat{X'}$.
\end{proof}

Next, we need to examine how many weights are needed to construct an anchor via linear combining all the existing channels. We restate Lemma \ref{lem:anchor_con} in Lemma~\ref{lem:anchor_con_supp} with more specifications as well as a simple result from linear algebra (Lemma~\ref{lem:anchor_con_indp_supp}) to prove it, as below: %

\begin{lemma} \label{lem:anchor_con_indp_supp}
Consider $D$ linearly independent weight vectors $\{\Mat{\alpha}_1, \cdots, \Mat{\alpha}_{D} \in \real^{D} \}$. Then for every $p, q \in [N]$ such that $\Mat{x}^{(p)} \ne \Mat{x}^{(q)}$, there exists $\Mat{\alpha}_j, j \in [D]$, such that $\Mat{x}^{(p)\top}\Mat{\alpha}_j \ne \Mat{x}^{(q)\top} \Mat{\alpha}_j$.
\end{lemma}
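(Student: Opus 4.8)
\textbf{Proof proposal for Lemma~\ref{lem:anchor_con_indp_supp}.}
The plan is to argue by contradiction using the linear independence of the weight vectors. First I would fix an arbitrary pair $p, q \in [N]$ with $\Mat{x}^{(p)} \neq \Mat{x}^{(q)}$, and suppose for contradiction that $\Mat{x}^{(p)\top}\Mat{w}_j = \Mat{x}^{(q)\top}\Mat{w}_j$ for \emph{every} $j \in [D]$. Rewriting, this says $(\Mat{x}^{(p)} - \Mat{x}^{(q)})^\top \Mat{w}_j = 0$ for all $j \in [D]$, i.e., the nonzero vector $\Mat{d} := \Mat{x}^{(p)} - \Mat{x}^{(q)} \in \real^D$ is orthogonal to each of $\Mat{w}_1, \dots, \Mat{w}_D$.

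The key step is then to observe that since $\{\Mat{w}_1, \dots, \Mat{w}_D\}$ are $D$ linearly independent vectors in $\real^D$, they form a basis of $\real^D$, hence their span is all of $\real^D$. A vector orthogonal to every element of a spanning set of $\real^D$ must be orthogonal to all of $\real^D$, and the only such vector is $\Mat{0}$. This contradicts $\Mat{d} \neq \Mat{0}$, which holds because $\Mat{x}^{(p)} \neq \Mat{x}^{(q)}$. Therefore the supposition fails, and there must exist some $j \in [D]$ with $(\Mat{x}^{(p)} - \Mat{x}^{(q)})^\top \Mat{w}_j \neq 0$, i.e., $\Mat{x}^{(p)\top}\Mat{w}_j \neq \Mat{x}^{(q)\top}\Mat{w}_j$, as claimed.

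There is essentially no main obstacle here: the statement is a direct consequence of the fact that a full-rank square matrix has trivial kernel. The only thing to be a little careful about is the direction of the implication---we want, for each bad pair $(p,q)$, the \emph{existence} of a single separating weight, not a uniform one---but since the argument is run separately for each pair this is automatic. One could alternatively phrase it via the matrix $\Mat{W} = [\Mat{w}_1 \cdots \Mat{w}_D]^\top \in \real^{D \times D}$: invertibility of $\Mat{W}$ gives $\Mat{W}\Mat{d} \neq \Mat{0}$ whenever $\Mat{d} \neq \Mat{0}$, and some coordinate of $\Mat{W}\Mat{d}$ is nonzero, which is exactly the desired conclusion. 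I would present the contradiction version as it reads most transparently.
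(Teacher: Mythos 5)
Your proof is correct and follows essentially the same route as the paper's: both argue by contradiction that if $\Mat{x}^{(p)\top}\Mat{w}_j = \Mat{x}^{(q)\top}\Mat{w}_j$ for all $j$, then the full rank of $\begin{bmatrix}\Mat{w}_1 & \cdots & \Mat{w}_D\end{bmatrix}$ forces $\Mat{x}^{(p)} = \Mat{x}^{(q)}$. Your phrasing via the difference vector lying in the orthogonal complement of a spanning set is just a cosmetic variant of the paper's ``full-rank linear system'' argument.
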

\begin{proof}
This is the basic fact of full-rank linear systems. Prove by contradiction. Suppose for $\forall j \in [D]$ we have $\Mat{x}^{(p)\top} \Mat{\alpha}_j = \Mat{x}^{(q)\top} \Mat{\alpha}_j$. Then we form a linear system: $\Mat{x}^{(p)\top} \begin{bmatrix} \Mat{\alpha}_1 & \cdots & \Mat{\alpha}_D \end{bmatrix} = \Mat{x}^{(q)\top} \begin{bmatrix} \Mat{\alpha}_1 & \cdots & \Mat{\alpha}_D \end{bmatrix}$. Since $\Mat{\alpha}_1, \cdots, \Mat{\alpha}_D$ are linearly independent, it yields $\Mat{x}^{(p)} = \Mat{x}^{(q)}$, which reaches the contradiction.
\end{proof}

\begin{lemma} [Anchor Construction] \label{lem:anchor_con_supp}
Consider a set of weight vectors $\{\Mat{\alpha}_1, \cdots, \Mat{\alpha}_{K_1} \in \real^{D} \}$ with $K_1 = N(N-1)(D-1)/2+1$, of which every $D$-length subset, i.e., $\{\Mat{\alpha}_{j} : \forall j \in \Set{J} \}, \forall \Set{J} \subseteq [K_1], |\Set{J}| = D$, is linearly independent, then for every data matrix $\Mat{X} \in \real^{N \times D}$, there exists $j^{*} \in [K_1]$, $\Mat{X}\Mat{\alpha}_{j^{*}}$ is an anchor of $\Mat{X}$.
\end{lemma}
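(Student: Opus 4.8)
The plan is to recast ``$\Mat{X}\Mat{w}$ is an anchor'' as ``$\Mat{w}$ avoids a finite family of hyperplanes'', and then finish with a pigeonhole count whose threshold is exactly $K_1$.

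First I would unpack Definition~\ref{def:anchor}: the vector $\Mat{X}\Mat{w}$ \emph{fails} to be an anchor of $\Mat{X}$ precisely when there is an unordered pair of indices $p\ne q$ with $\Mat{x}^{(p)}\ne\Mat{x}^{(q)}$ but $(\Mat{x}^{(p)}-\Mat{x}^{(q)})^{\top}\Mat{w}=0$. Call such a pair a \emph{distinguishing pair} of $\Mat{X}$; there are at most $\binom{N}{2}=N(N-1)/2$ of them. For a fixed distinguishing pair $(p,q)$, the set of weights that do \emph{not} separate it is the linear subspace $H_{p,q}=\{\Mat{w}\in\real^{D}:(\Mat{x}^{(p)}-\Mat{x}^{(q)})^{\top}\Mat{w}=0\}$, which has dimension exactly $D-1$ since $\Mat{x}^{(p)}-\Mat{x}^{(q)}\ne\Mat{0}$. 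Hence $\Mat{X}\Mat{w}$ is an anchor of $\Mat{X}$ if and only if $\Mat{w}\notin\bigcup_{(p,q)}H_{p,q}$, a union of at most $N(N-1)/2$ hyperplanes.

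Next I would exploit the general-position hypothesis on $\{\Mat{w}_1,\dots,\Mat{w}_{K_1}\}$: since every $D$-element subset is linearly independent, no $D$ of these vectors can lie in a common $(D-1)$-dimensional subspace, so each $H_{p,q}$ contains at most $D-1$ of the $\Mat{w}_j$'s (this is, in effect, the contrapositive of Lemma~\ref{lem:anchor_con_indp_supp}). Summing over the at most $N(N-1)/2$ distinguishing pairs, the number of indices $j\in[K_1]$ for which $\Mat{w}_j$ lies in some $H_{p,q}$ is at most $(D-1)\cdot N(N-1)/2$. Finally I would invoke pigeonhole: because $K_1=N(N-1)(D-1)/2+1>(D-1)\cdot N(N-1)/2$, there exists $j^{*}\in[K_1]$ with $\Mat{w}_{j^{*}}\notin H_{p,q}$ for every distinguishing pair $(p,q)$, and for this $j^{*}$ the vector $\Mat{X}\Mat{w}_{j^{*}}$ is an anchor of $\Mat{X}$, as claimed.

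The crux — the only step that is not bookkeeping — is the general-position count: one must note that although the forbidden hyperplanes $H_{p,q}$ themselves depend on $\Mat{X}$, their \emph{dimension} ($D-1$) and their \emph{number} ($\le N(N-1)/2$) are uniform over all $\Mat{X}\in\real^{N\times D}$, so a single fixed family $\{\Mat{w}_j\}_{j\in[K_1]}$ in general position works simultaneously for every input matrix. It is this uniformity, together with the tight choice $K_1=N(N-1)(D-1)/2+1$, that makes the conclusion hold; everything else is the elementary observation that $D$ weights in general position span $\real^D$ and hence separate any two distinct points.
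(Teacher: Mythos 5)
Your proof is correct and is essentially the paper's argument: the paper likewise counts, for each distinguishing pair $(p,q)$, at most $D-1$ weights satisfying $(\Mat{x}^{(p)}-\Mat{x}^{(q)})^{\top}\Mat{w}_j=0$ (via the contrapositive of Lemma~\ref{lem:anchor_con_indp_supp}) and then applies pigeonhole over the at most $\binom{N}{2}$ pairs. Your hyperplane phrasing is just a geometric restatement of the same count, so the two proofs coincide.
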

\begin{proof}
Define a set of pairs which an anchor needs to distinguish: $\Set{I} = \{(p, q) : \Mat{x}^{(p)} \ne \Mat{x}^{(q)} \} \subseteq [N]^2$
Consider a $D$-length subset $\Set{J} \subseteq [K]$ with $|\Set{J}| = D$. Since $\{\Mat{\alpha}_{j} : \forall j \in \Set{J} \}$ are linear independent, we assert by Lemma \ref{lem:anchor_con_indp_supp} that for every pair $(p, q) \in \Set{I}$, there exists $j \in \Set{J}$, $\Mat{x}^{(p)\top}\Mat{\alpha}_j \ne \Mat{x}^{(q)\top} \Mat{\alpha}_j$. It is equivalent to claim: for every pair $(p, q) \in \Set{I}$, at most $D - 1$ many $\Mat{\alpha}_j, j \in [K_1]$ satisfy $\Mat{x}^{(p)\top}\Mat{\alpha}_j = \Mat{x}^{(q)\top} \Mat{\alpha}_j$.
Based on pigeon-hole principle, as long as $K_1 \ge N(N-1)(D-1)/2+1 = (D-1){N \choose 2} + 1 \ge (D-1)|\Set{I}| + 1$, there must exist $j^{*} \in [K_1]$ such that $\Mat{x}^{(p)\top}\Mat{\alpha}_{j^{*}} \ne \Mat{x}^{(q)\top} \Mat{\alpha}_{j^{*}}$ for $\forall (p, q) \in \Set{I}$.
By Definition \ref{def:anchor}, $\Mat{X} \Mat{\alpha}_{j^{*}}$ generates an anchor.
\end{proof}

\begin{proposition} \label{prop:anchor_con_gaussian}
The linear independence condition in Lemma \ref{lem:anchor_con_supp} can be satisfied with probability one by drawing i.i.d. Gaussian random vectors $\Mat{\alpha}_1, \cdots, \Mat{\alpha}_{K_1} \overset{i.i.d.}{\sim} \mathcal{N}(\Mat{0}, \Mat{I})$. 
\end{proposition}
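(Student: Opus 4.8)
The plan is to show that $K_1$ i.i.d.\ Gaussian vectors in $\real^D$ are, with probability one, in \emph{general position}, meaning every $D$-element subset is linearly independent. First I would fix an arbitrary index set $\Set{J} \subseteq [K_1]$ with $|\Set{J}| = D$ and consider the event $E_{\Set{J}}$ that $\{\Mat{w}_j : j \in \Set{J}\}$ is linearly dependent. This event is exactly $\{\det \Mat{W}_{\Set{J}} = 0\}$, where $\Mat{W}_{\Set{J}} \in \real^{D \times D}$ is the matrix whose columns are the $\Mat{w}_j$, $j \in \Set{J}$. The determinant is a nonzero polynomial in the $D^2$ entries of $\Mat{W}_{\Set{J}}$ (nonzero because, e.g., the identity matrix gives determinant one), so its zero set is a proper algebraic subvariety of $\real^{D^2}$, hence has Lebesgue measure zero. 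Since the entries of $\Mat{W}_{\Set{J}}$ are jointly absolutely continuous with respect to Lebesgue measure (a product of $D^2$ independent standard Gaussians), $\Pr[E_{\Set{J}}] = 0$.

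Second, I would take a union bound over all $\binom{K_1}{D}$ choices of $\Set{J}$: the event that \emph{some} $D$-subset is linearly dependent is $\bigcup_{\Set{J}} E_{\Set{J}}$, a finite union of probability-zero events, so it has probability zero. Therefore, with probability one, every $D$-element subset of $\{\Mat{w}_1, \dots, \Mat{w}_{K_1}\}$ is linearly independent, which is precisely the hypothesis required in Lemma~\ref{lem:anchor_con_supp}. Invoking that lemma then guarantees that, for the drawn weights, every data matrix $\Mat{X} \in \real^{N \times D}$ admits an anchor of the form $\Mat{X}\Mat{w}_{j^\ast}$ for some $j^\ast \in [K_1]$.

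I do not anticipate a genuine obstacle here; the only point requiring a little care is the standard fact that a nonvanishing polynomial has a measure-zero zero set, which can be cited or proved by a one-line induction on the number of variables (Fubini plus the fact that a univariate nonzero polynomial has finitely many roots). An alternative, fully self-contained argument avoids measure theory entirely: condition on $\Mat{w}_1, \dots, \Mat{w}_{D-1}$; almost surely they span a $(D-1)$-dimensional subspace $V$, and then $\Pr[\Mat{w}_D \in V] = 0$ because $V$ is a proper subspace and the Gaussian law of $\Mat{w}_D$ assigns zero mass to any proper subspace; iterating and union-bounding over subsets gives the claim. Either route is short, so the proof is essentially a formality once the general-position reduction is made explicit.
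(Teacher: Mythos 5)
Your proof is correct. Structurally it matches the paper's: both arguments reduce to showing that each fixed $D$-element subset $\{\Mat{w}_j : j \in \Set{J}\}$ is almost surely linearly independent and then take a union bound over the $\binom{K_1}{D}$ subsets. The difference lies in how the per-subset step is justified. The paper argues that drawing a $D\times K_1$ Gaussian matrix amounts to sampling from a measure on $\{\Mat{X} : \rank(\Mat{X})\le D\}$ and then appeals to the fact that rank-$D$ matrices are \emph{dense} in that set (citing the unlabeled-sensing literature). You instead observe that $\{\det \Mat{W}_{\Set{J}}=0\}$ is the zero set of a nonvanishing polynomial, hence Lebesgue-null, and that the Gaussian law is absolutely continuous. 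Your route is the more rigorous of the two as literally written: density of the full-rank matrices does not by itself imply that the degenerate ones carry zero probability (a dense set can have full-measure complement or not), so the paper's one-line justification implicitly relies on exactly the measure-zero fact you make explicit. Your alternative conditioning argument (a proper subspace has zero Gaussian mass) is also a valid, fully elementary substitute. Nothing is missing.
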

\begin{proof}
We first note that generating a $D \times K_1$ Gaussian random matrix ($D \le K_1$) is equivalent to drawing a matrix with respect to a probability measure defined over $\Set{M} = \{ \Mat{X} \in \real^{D \times K} : \rank(\Mat{X}) \le D \}$.
Since rank-$D$ matrices are dense in $\Set{M}$ \citep{tsakiris2023low, yao2021unlabeled}, we can conclude that for $\forall \Set{J} \subseteq [K_1]$, $|\Set{J}| = D$, $\mathbb{P}(\text{$\{\Mat{\alpha}_j : j \in \Set{J} \}$ are linearly independent}) = 1$. By union bound, $\mathbb{P}(\text{$\{\Mat{\alpha}_j : j \in \Set{J} \}$ for all $\Set{J} \in [K]$, $|\Set{J}| = D$ are linearly independent}) = 1$.
\end{proof}

\begin{figure}[!t]
\centering
\vspace{-6mm}
\begin{subfigure}[b]{\textwidth}
\centering
\includegraphics[width=\textwidth]{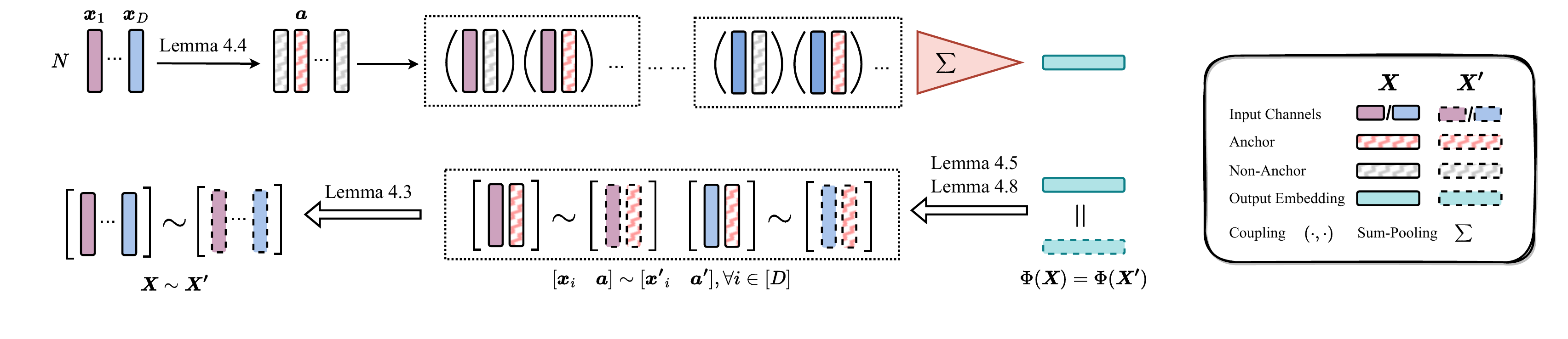}
\vspace{-2.5em}
\caption{}
\label{fig:proof_idea}
\end{subfigure}

\begin{subfigure}[b]{\textwidth}
\centering
\includegraphics[width=\textwidth]{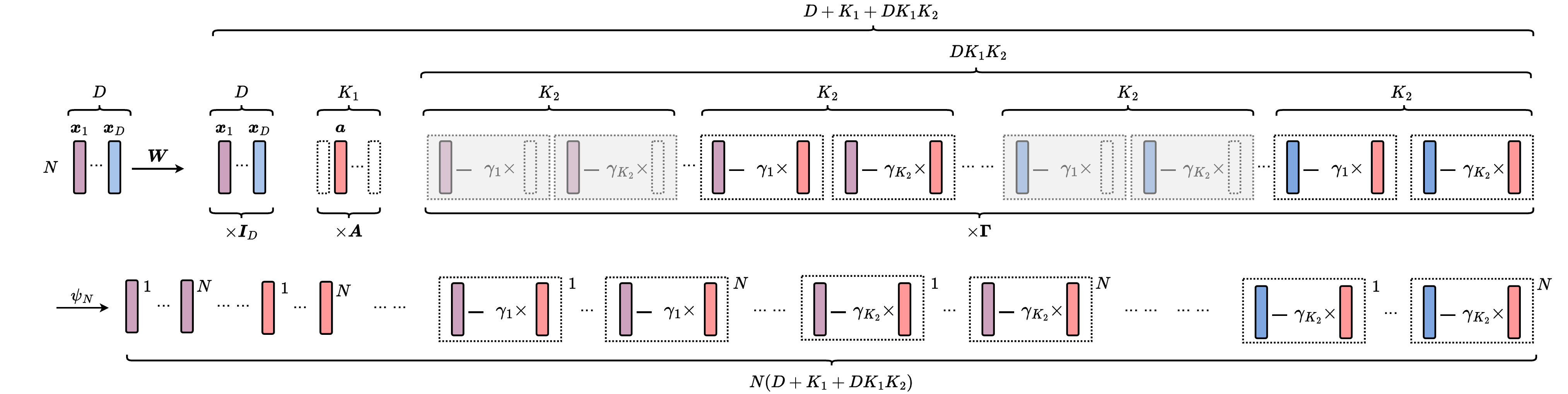}
\caption{}
\label{fig:lp_construct}
\end{subfigure}

\begin{subfigure}[b]{\textwidth}
\centering
\includegraphics[width=\textwidth]{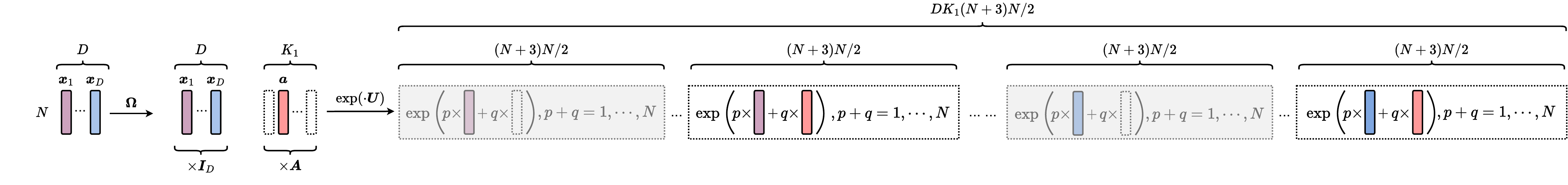}
\vspace{-2.5em}
\caption{}
\label{fig:le_construct}
\end{subfigure}

\caption{\textbf{(a)} illustrates the overall idea to construct LP and LE embedding layers and prove their injectivity.
In the forward pass, LP and LE will 1) construct an anchor with redundant non-anchor channels through a linear layer $\Mat{A} = \begin{bmatrix} \Mat{\alpha}_1 & \cdots & \Mat{\alpha}_{K_1} \end{bmatrix}$ (Lemma \ref{lem:anchor_con}), 2) and couple each feature channel with the both anchor and non-anchor channels with the their own coupling schemes, respectively.
To prove injectivity, the implication follows the converse agenda of construction: 1) by the properties of coupling schemes specified by LP (Lemma \ref{lem:lin_coupling}) and LE (Lemma \ref{lem:mono_coupling}) layers, we obtain pairwise equivalence with anchors, 2) and by union alignment lemma (Lemma \ref{lem:union_align}), we recover the global equivalence.
\textbf{(b)(c)} depict the detailed construction inside the LP and LE embedding layers, respectively.
LP embedding layer utilizes linear combination plus a power mapping to couple feature channels with the anchor(s) and non-anchors, while LE adopts a linear combination plus an exponential mapping, which is essentially an exponential function followed by a bivariate monomial.
The constructed components marked in gray represent the redundant pairs between feature channels and non-anchor channels, which will not be used in the chain of implication to prove the injectivity.
}
\label{fig:proof_idea}
\vspace{-1em}
\end{figure}

\section{Proofs for the LP Embedding Layer}

In this section, we complete the proofs for the LP embedding layer (Eq.~\ref{eqn:arch_lp}). First we constructively show an upper bound that sufficiently achieves injectivity following our discussion in Sec. \ref{sec:inject_lp}, and then prove Theorem \ref{thm:lp_lower_bound} to reveal a lower bound that is necessary for injectivity. Finally, we show prove the continuity of the inverse of our constructed LP embedding layer with the techniques introduced in Sec. \ref{sec:cont_lem}. %

\subsection{Upper Bound for Injectivity}

To prove the upper bound, we construct an LP embedding layer with $L \le N^5D^2$ output neurons such that its induced summation is injective.

We also restate Lemma \ref{lem:lin_coupling} to demonstrate the weight construction for linear coupling:
\begin{lemma}[Linear Coupling, Lemma \ref{lem:lin_coupling}] \label{lem:lin_coupling_supp}
Consider a group of coefficients $\Gamma = \{\gamma_1, \cdots, \gamma_{K_2} \in \real\}$ with $\gamma_i \ne 0, \forall i \in [K_2]$, $\gamma_i \ne \gamma_j, \forall i, j \in [K_2]$, and $K_2 = N(N-1) + 1$ such that for all 4-tuples $(\gamma_i, \gamma_j, \gamma_k, \gamma_l) \subset \Gamma$, if $\gamma_i \ne \gamma_j, \gamma_i \ne \gamma_k$ then $\gamma_i / \gamma_j \ne \gamma_k / \gamma_l$. It must hold that:
Given any $\Mat{x}, \Mat{x'}, \Mat{y}, \Mat{y'} \in \real^{N}$ such that $\Mat{x} \sim \Mat{x'}$ and $\Mat{y} \sim \Mat{y'}$, if $(\Mat{x} - \gamma_k \Mat{y}) \sim (\Mat{x'} - \gamma_k \Mat{y'})$ for every $k \in [K_2]$, then $\begin{bmatrix} \Mat{x} & \Mat{y} \end{bmatrix} \sim \begin{bmatrix} \Mat{x'} & \Mat{y'} \end{bmatrix}$.
\end{lemma}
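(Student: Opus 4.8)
Geometrically the claim says that a planar point multiset is pinned down by its one–dimensional projections along enough distinct slopes: the entries of $\Mat{x}-\gamma_k\Mat{y}$ are the projections of the pairs $(x_n,y_n)$ under $(u,v)\mapsto u-\gamma_k v$, and the goal is to recover the multiset $\{(x_n,y_n)\}_n$ from these. I would realize this through an algebraic encoding. \emph{Step 1 (normalisation).} Since $\Mat{x}\sim\Mat{x'}$, fix $\Mat{P}_0\in\Pi(N)$ with $\Mat{P}_0\Mat{x'}=\Mat{x}$ and replace $(\Mat{x'},\Mat{y'})$ by $(\Mat{P}_0\Mat{x'},\Mat{P}_0\Mat{y'})$; each hypothesis ($\Mat{x}-\gamma_k\Mat{y}\sim\Mat{x'}-\gamma_k\Mat{y'}$ for all $k$, and $\Mat{y}\sim\Mat{y'}$) and the conclusion $\begin{bmatrix}\Mat{x}&\Mat{y}\end{bmatrix}\sim\begin{bmatrix}\Mat{x'}&\Mat{y'}\end{bmatrix}$ is invariant under this left multiplication, so I may assume $\Mat{x}=\Mat{x'}$. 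The task then reduces to proving that the multisets $\{(x_n,y_n):n\in[N]\}$ and $\{(x_n,y'_n):n\in[N]\}$ in $\real^2$ coincide, i.e.\ $\begin{bmatrix}\Mat{x}&\Mat{y}\end{bmatrix}\sim\begin{bmatrix}\Mat{x}&\Mat{y'}\end{bmatrix}$.

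\textbf{Step 2 (multiset equalities $\Rightarrow$ a bivariate polynomial identity).} Equality of two real multisets $\{a_i\}$ and $\{b_i\}$ is equivalent to $\prod_i(t-a_i)=\prod_i(t-b_i)$ in $\real[t]$; applied to the hypothesis for each $k$ this gives $\prod_{n\in[N]}\bigl(t-x_n+\gamma_k y_n\bigr)=\prod_{n\in[N]}\bigl(t-x_n+\gamma_k y'_n\bigr)$ in $\real[t]$. I would then introduce the bivariate polynomials $F(t,s)=\prod_{n\in[N]}(t-x_n+s\,y_n)$ and $G(t,s)=\prod_{n\in[N]}(t-x_n+s\,y'_n)$ in $\real[t,s]$, each of degree at most $N$ in $s$. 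Regarding $H:=F-G$ as a polynomial in $s$ over the integral domain $\real[t]$, it has $s$-degree $\le N$ yet vanishes at the $K_2$ pairwise distinct constants $\gamma_1,\dots,\gamma_{K_2}$; since for $N\ge 2$ we have $K_2=N(N-1)+1\ge N+1>N$ (the case $N=1$ is vacuous), the factor theorem over an integral domain forces $H=0$, i.e.\ $F=G$ identically. Note that only the distinctness of the $\gamma_k$ enters here; the nonvanishing and ratio conditions in the statement merely certify that an admissible $\Gamma$ exists (e.g., distinct primes satisfy them), and the degree count shows that $N+1$ coefficients already suffice, so the stated $N(N-1)+1$ is comfortably enough.

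\textbf{Step 3 (reading off the pairs) and the main obstacle.} Since $\real[t,s]$ is a UFD and $F$, $G$ are written as products of the degree-one polynomials $t-x_n+s\,y_n$, resp.\ $t-x_n+s\,y'_n$ — each irreducible and with $t$-coefficient $1$, hence two of them associates iff equal — comparing the two factorisations of $F=G$ yields $\{(x_n,y_n)\}_n=\{(x_n,y'_n)\}_n$, that is $\begin{bmatrix}\Mat{x}&\Mat{y}\end{bmatrix}\sim\begin{bmatrix}\Mat{x}&\Mat{y'}\end{bmatrix}$; undoing Step 1 then gives $\begin{bmatrix}\Mat{x}&\Mat{y}\end{bmatrix}\sim\begin{bmatrix}\Mat{x'}&\Mat{y'}\end{bmatrix}$. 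The normalisation and the unique-factorisation bookkeeping are routine; the step I expect to require the most care is the implication in Step 2 — the cleanest way to justify ``the univariate identity holds for enough slopes $\gamma$ $\Rightarrow$ the bivariate identity holds'' is precisely to treat $F-G$ as an element of $(\real[t])[s]$ and invoke the factor theorem over an integral domain, rather than trying to match the coefficients of $s$ one at a time.
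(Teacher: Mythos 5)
Your proof is correct, and it takes a genuinely different route from the paper's. The paper argues combinatorially: it fixes permutation matrices realizing the hypothesized equivalences, rewrites each coupling condition as a system $(\Mat{I}-\Mat{Q}_k^\top\Mat{P}_x)\Mat{x}=\gamma_k(\Mat{I}-\Mat{Q}_k^\top\Mat{P}_y)\Mat{y}$, and then runs a double-counting argument on bipartite graphs whose vertices are the nonzero pairwise differences of entries of $\Mat{x}$ and of $\Mat{y}$; that is precisely where both the ratio condition $\gamma_i/\gamma_j\ne\gamma_k/\gamma_l$ and the count $K_2=N(N-1)+1$ are consumed. You replace all of this with the observation that each hypothesis is the identity $F(t,\gamma_k)=G(t,\gamma_k)$ for $F(t,s)=\prod_{n}(t-x_n+s\,y_n)$ and $G(t,s)=\prod_{n}(t-x'_n+s\,y'_n)$, so that $F-G$, having $s$-degree at most $N$ over the integral domain $\real[t]$, vanishes identically once it vanishes at $N+1$ distinct values of $s$; unique factorization of the $t$-monic linear factors in $\real[t,s]$ then recovers the row multiset. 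This is cleaner and strictly stronger: it uses only the distinctness of the $\gamma_k$ and only $N+1$ of them (neither the nonvanishing nor the ratio condition is needed), so it would let $K_2$ be reduced from $N(N-1)+1$ to $N+1$, improving the paper's counts for $K$ and hence $L$. Two cosmetic remarks: the normalization to $\Mat{x}=\Mat{x'}$ in your Step 1 is unnecessary (define $G$ directly from $(\Mat{x'},\Mat{y'})$; neither $\Mat{x}\sim\Mat{x'}$ nor $\Mat{y}\sim\Mat{y'}$ is actually used), and the $N=1$ case, where $K_2=1<N+1$, is trivially true rather than vacuous, since there $\Mat{x}\sim\Mat{x'}$ and $\Mat{y}\sim\Mat{y'}$ already force $\begin{bmatrix}\Mat{x}&\Mat{y}\end{bmatrix}=\begin{bmatrix}\Mat{x'}&\Mat{y'}\end{bmatrix}$.
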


\begin{remark}
A handy choice of $\Gamma$ in Lemma \ref{lem:lin_coupling_supp} are prime numbers, which are provably positive, infinitely many, and not divisible by each other.
\end{remark}

\begin{proof}
We note that $\Mat{x} \sim \Mat{x'}$ and $\Mat{y} \sim \Mat{y'}$ imply that there exist $\Mat{P}_x, \Mat{P}_y \in \Pi(N)$ such that $\Mat{P}_x\Mat{x} = \Mat{x'}$ and $\Mat{P}_y\Mat{y} = \Mat{y'}$.
Also $(\Mat{x} - \gamma_k \Mat{y}) \sim (\Mat{x'} - \gamma_k \Mat{y'}), \forall k \in [K_2]$ implies there exists $\Mat{Q}_k \in \Pi(N), \forall k \in [K_2]$ such that $\Mat{Q}_k (\Mat{x} - \gamma_k \Mat{y})  = \Mat{x'} - \gamma_k \Mat{y'}$.
Substituting the former to the latter, we can obtain:
\begin{align} \label{eqn:align_eq_supp}
\left( \Mat{I} - \Mat{Q}_k^\top \Mat{P}_x \right) \Mat{x} = \gamma_k 
\left(\Mat{I} - \Mat{Q}_k^\top \Mat{P}_x\right) \Mat{y}, \quad \forall k \in [K_2],
\end{align}
where for each $k \in [K_2]$, Eq.~\ref{eqn:align_eq_supp} corresponds to $N$ equalities as follows. Here, we let $(\Mat{Z})_i$ denote the $i$th column of the matrix $\Mat{Z}$.  %
\begin{align}
(\Mat{I} - \Mat{Q}_k^\top \Mat{P}_x)_1^{\top} \Mat{x} &= \gamma_k 
(\Mat{I} - \Mat{Q}_k^\top \Mat{P}_x)_1^{\top} \Mat{y} \nonumber\\
\label{eqn:align_eq_expand_supp} \vdots \\
(\Mat{I} - \Mat{Q}_k^\top \Mat{P}_x)_N^{\top} \Mat{x} &= \gamma_k 
(\Mat{I} - \Mat{Q}_k^\top \Mat{P}_x)_N^{\top} \Mat{y} \nonumber
\end{align}

We compare entries in $\Mat{x}=[\cdots x_p \cdots]^\top$ and for each entry index $p \in [N]$, we define a set of non-zero pairwise differences between $x_p$ and other entries in $\Mat{x}$: $\Set{D}_{x}^{(p)} = \{x_p - x_q : q \in [N], x_p \ne x_q \}$. Similarly, for $\Mat{y}$, we define $\Set{D}_{y}^{(p)} = \{y_p - y_q : q \in [N], y_p \ne y_q \}$.
We note that for every $n \in [N]$, either option (a) $(\Mat{I} - \Mat{Q}_k^\top \Mat{P}_x)_n^{\top} \Mat{x} = 0$ or option (b) $(\Mat{I} - \Mat{Q}_k^\top \Mat{P}_x)_n^{\top} \Mat{x} \in \Set{D}_{x}^{(p)}$ for some $p \in [N]$ as $(\Mat{Q}_k^\top \Mat{P}_x)_n^{\top} \Mat{x}$ is one of the entries of  $\Mat{x}$.

Then, it is sufficient to show there must exist $k \in [K_2]$ such that all of equations in Eq.~\ref{eqn:align_eq_expand_supp} are induced by the option (a) rather than the option (b), i.e.,
\begin{align} \label{eqn:matching_equiv_cond}
\exists k^{*} \in [K_2], \forall p, n \in [N] \text{ such that } (\Mat{I} - \Mat{Q}_{k^{*}}^\top \Mat{P}_x)_n^{\top} \Mat{x} \not\in \Set{D}_{x}^{(p)}.
\end{align}
This is because Eq.~\ref{eqn:matching_equiv_cond} implies:
\begin{align}
(\Mat{I} - \Mat{Q}_{k^*} \Mat{P}_x)^{\top} \Mat{x} = \Mat{0} & \quad \Rightarrow \quad  \Mat{x} = \Mat{Q}_{k^*}^\top \Mat{P}_x \Mat{x} = \Mat{Q}_{k^*}^\top \Mat{x'}, \nonumber \\
\text{(Since $\gamma_k \neq 0, \forall k \in [K_2]$)} \quad (\Mat{I} - \Mat{Q}_{k^*} \Mat{P}_y)^{\top} \Mat{y} = \Mat{0} & \quad  \Rightarrow \quad  \Mat{y} = \Mat{Q}_{k^*}^\top \Mat{P}_y \Mat{y}  = \Mat{Q}_{k^*}^\top \Mat{y'}, \nonumber
\end{align}
which is $\begin{bmatrix} \Mat{x} & \Mat{y} \end{bmatrix} = \Mat{Q}_{k^*}^\top \begin{bmatrix} \Mat{x'} & \Mat{y'} \end{bmatrix}$.

To show Eq.~\ref{eqn:matching_equiv_cond}, we construct $N$ bipartite graphs $\Set{G}^{(p)} = (\Set{D}_x^{(p)}, \Set{D}_y^{(p)}, \Set{E}^{(p)})$ for $ p \in [N]$ where each $\alpha \in \Set{D}_x^{(p)}$ or each $\beta \in \Set{D}_y^{(p)}$ is viewed as a node and $(\alpha, \beta) \in \Set{E}^{(p)}$ gives an edge if $\alpha = \gamma_k \beta$ for some $k \in [K_2]$. %
Then we prove the existence of $k^*$ via seeing a contradiction that does counting the number of connected pairs $(\alpha,\beta)$ from two perspectives.

\paragraph{Perspective of $\Set{D}_x^{(p)}$.}
We argue that for $\forall p \in [N]$ and arbitrary $\alpha_1, \alpha_2 \in \Set{D}_x^{(p)}$, $\alpha_1 \neq \alpha_2$, there exists at most one $\beta \in \Set{D}_y^{(p)}$ such that $(\alpha_1, \beta) \in \Set{E}^{(p)}$ and $(\alpha_2, \beta) \in \Set{E}^{(p)}$.
Otherwise, suppose there exists $\beta' \in \Set{D}_y^{(p)}$, $\beta' \neq \beta$ such that $(\alpha_1, \beta') \in \Set{E}^{(p)}$ and $(\alpha_2, \beta') \in \Set{E}^{(p)}$.
Then we have $\alpha_1 = \gamma_i \beta$, $\alpha_2 = \gamma_j \beta$, $\alpha_1 = \gamma_k \beta'$, and $\alpha_2 = \gamma_l \beta'$ for some $\gamma_i, \gamma_j, \gamma_k, \gamma_l \in \Gamma$, which is $\gamma_i / \gamma_k = \gamma_k / \gamma_l$.
As $\alpha_1 \ne \alpha_2$, it is obvious that $\gamma_i \ne \gamma_j$.
Similarly, we have $\gamma_i \ne \gamma_k$.
Altogether, it contradicts our assumption on $\Gamma$.
Therefore, $|\Set{E}^{(p)}| \le 2 \max\{|\Set{D}_x^{(p)}|, |\Set{D}_y^{(p)}|\} \le 2(N-1)$.
And the total edge number of all bipartite graphs should be less than $2N(N-1)$.

\paragraph{Perspective of $\Gamma$.}
We note that if for some $k \in [K_2]$ that makes $(\Mat{I} - \Mat{Q}_{k}^\top \Mat{P}_x)_n^{\top} \Mat{x} \in \Set{D}_{x}^{(p)}$ for some $p, n \in [N]$, i.e., $(\Mat{I} - \Mat{Q}_{k}^\top \Mat{P}_x)_n^{\top} \Mat{x} = \gamma_k (\Mat{I} - \Mat{Q}_{k}^\top \Mat{P}_y)_n^{\top} \Mat{y}\neq 0$, this $\gamma_k$ contributes at least two edges in the entire bipartite graph, i.e., there being another $n' \in [N]$, $(\Mat{I} - \Mat{Q}_{k}^\top \Mat{P}_x)_{n'}^{\top} \Mat{x} = \gamma_k (\Mat{I} - \Mat{Q}_{k}^\top \Mat{P}_y)_{n'}^{\top} \Mat{y}\neq 0$. 
Otherwise, there exists a unique $n^* \in [N]$ such that $(\Mat{I} - \Mat{Q}_{k}^\top \Mat{P}_x)_{n^{*}}^{\top} \Mat{x} \in \Set{D}_{x}^{(p)} (\neq 0)$ and $(\Mat{I} - \Mat{Q}_{k}^\top \Mat{P}_x)_{n}^{\top} \Mat{x} = 0$ for all $n \ne n^{*}$. This cannot be true because $\Mat{1}^{\top} (\Mat{I} - \Mat{Q}_{k}^\top \Mat{P}_x) \Mat{x} = 0$.
By which, if $\forall k \in [K_2]$, $\exists p, n \in [N]$ such that $(\Mat{I} - \Mat{Q}_{k}^\top \Mat{P}_x)_n^{\top} \Mat{x} \in \Set{D}_{x}^{(p)}$ (i.e., Eq. \ref{eqn:matching_equiv_cond} is always false), then the total number of edges is at least $2K_2 = 2N(N-1) + 2$.

Hereby, we conclude the proof by the contradiction, in which the minimal count of edges $2K_2$ by \textbf{Perspective of $\Gamma$} already surpasses the maximal number $2N(N-1)$ by \textbf{Perspective of $\Set{D}_x^{(p)}$}.
\end{proof}

We wrap off this section by formally stating and proving the injectivity statement of the LP layer.

\begin{theorem} \label{thm:inject_lp}
Suppose $\phi: \real^{D} \rightarrow \real^{L}$ admits the form of Eq.~\ref{eqn:arch_lp},
\begin{align} \label{eqn:arch_lp_supp}
   \phi(\Mat{x}) = \begin{bmatrix}
\psi_N(\Mat{w}_1^\top \Mat{x})^{\top} & \cdots & \psi_N(\Mat{w}_{K}^\top \Mat{x})^{\top}
\end{bmatrix},
\end{align}
where $\psi_N$ is the power mapping of degree $N$, $L = KN \le N^5D^2$, $K = D + K_1 + D K_1 K_2$ and $\Mat{W} = \begin{bmatrix} \Mat{e}_1 & \cdots & \Mat{e}_D & \Mat{\alpha}_1 & \cdots & \Mat{\alpha}_{K_1} & \cdots & \Mat{\Gamma}_{i,j,k} & \cdots \end{bmatrix}$ is constructed as follows:
\begin{enumerate}[leftmargin=3mm]
\item Let the first group of weights $\Mat{e}_1, \cdots, \Mat{e}_D \in \real^D$ buffer the original features, where $\Mat{e}_i$ represents the $i$-th canonical basis.
\item Choose the second group of linear weights, $\Mat{\alpha}_{1}, \cdots, \Mat{\alpha}_{K_1} \in \real^D$ for $K_1$ as large as $N(N-1)(D-1)/2+1$, such that the conditions in Lemma \ref{lem:anchor_con_supp} are satisfied.
\item Design the third group of weights $\Mat{\Gamma}_{i,j,k}$ for $i \in [D], j \in [K_1], k \in [K_2]$ where $\Mat{\Gamma}_{i,j,k} = \Mat{e}_i - \gamma_k \Mat{\alpha}_{j}$, $K_2 = N(N-1)+1$, and $\gamma_k, k \in [K_2]$ are chosen such that conditions in Lemma \ref{lem:lin_coupling_supp} are satisfied.
\end{enumerate}
Then $\sum_{i=1}^{N} \phi(\Mat{x}^{(i)})$ is injective (cf. Definition \ref{def:injectivity_supp}).
\end{theorem}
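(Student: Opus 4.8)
The plan is to read the construction backwards, peeling off the three groups of weights in turn. Assume $\sum_{n=1}^{N}\phi(\Mat{x}^{(n)}) = \sum_{n=1}^{N}\phi(\Mat{x'}^{(n)})$ for $\Mat{X},\Mat{X'}\in\real^{N\times D}$. Since $\phi$ in Eq.~\eqref{eqn:arch_lp_supp} concatenates the degree-$N$ power mappings $\psi_N(\Mat{w}_i^\top\Mat{x})$ over $i\in[K]$, the summed embedding splits into $K$ blocks, the $i$-th of which equals $\Psi_N(\Mat{X}\Mat{w}_i)$. Applying the injectivity of the sum-of-power mapping (Lemma~\ref{lem:power_sum}, i.e.\ Lemma~\ref{lem:complex_power_sum}) block by block yields $\Mat{X}\Mat{w}_i \sim \Mat{X'}\Mat{w}_i$ for every $i\in[K]$. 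Specializing to the three groups gives: (i) $\Mat{x}_i \sim \Mat{x'}_i$ for $i\in[D]$ from $\Mat{w}^{(1)}_i=\Mat{e}_i$; (ii) $\Mat{X}\Mat{w}^{(2)}_j \sim \Mat{X'}\Mat{w}^{(2)}_j$ for $j\in[K_1]$; and (iii) $\Mat{x}_i-\gamma_k\Mat{X}\Mat{w}^{(2)}_j \sim \Mat{x'}_i-\gamma_k\Mat{X'}\Mat{w}^{(2)}_j$ for all $i\in[D]$, $j\in[K_1]$, $k\in[K_2]$, using $\Mat{w}^{(3)}_{i,j,k}=\Mat{e}_i-\gamma_k\Mat{w}^{(2)}_j$.

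Next I would feed (i)--(iii) into the Linear Coupling Lemma (Lemma~\ref{lem:lin_coupling_supp}) with $\Mat{x}=\Mat{x}_i$, $\Mat{x'}=\Mat{x'}_i$, $\Mat{y}=\Mat{X}\Mat{w}^{(2)}_j$, $\Mat{y'}=\Mat{X'}\Mat{w}^{(2)}_j$: the hypotheses $\Mat{x}\sim\Mat{x'}$ and $\Mat{y}\sim\Mat{y'}$ are exactly (i) and (ii), and the per-$k$ alignment hypothesis is exactly (iii); since the $\gamma_k$ were chosen to satisfy the nonzero/distinctness/no-proportionality conditions with $K_2=N(N-1)+1$, the lemma outputs $[\,\Mat{X}\Mat{w}^{(2)}_j\ \ \Mat{x}_i\,] \sim [\,\Mat{X'}\Mat{w}^{(2)}_j\ \ \Mat{x'}_i\,]$ for every $i\in[D]$, $j\in[K_1]$. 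Because the weights $\Mat{w}^{(2)}_1,\dots,\Mat{w}^{(2)}_{K_1}$ with $K_1=N(N-1)(D-1)/2+1$ are drawn to satisfy the hypothesis of the Anchor Construction Lemma (Lemma~\ref{lem:anchor_con_supp}), there is some $j^*\in[K_1]$ with $\Mat{a}:=\Mat{X}\Mat{w}^{(2)}_{j^*}$ an anchor of $\Mat{X}$. Applying the Union Alignment Lemma (Lemma~\ref{lem:union_align_supp}) with this anchor $\Mat{a}$ and the arbitrary vector $\Mat{a'}:=\Mat{X'}\Mat{w}^{(2)}_{j^*}$ to the family $[\,\Mat{a}\ \ \Mat{x}_i\,]\sim[\,\Mat{a'}\ \ \Mat{x'}_i\,]$, $i\in[D]$, concludes $\Mat{X}\sim\Mat{X'}$, which is injectivity in the sense of Definition~\ref{def:injectivity_supp}. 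Finally I would count weights: $K = D + K_1 + DK_1K_2$, and since $K_1=O(N^2D)$ and $K_2=O(N^2)$ this gives $K=O(N^4D^2)$ and $L=NK=O(N^5D^2)$; the crude bound $K\le N^4D^2$ follows by plugging in the closed forms.

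Given the three lemmas, the proof is essentially bookkeeping, so the main subtlety is not one hard step but keeping the chain airtight. The point deserving care is that groups (i) and (ii) are not redundant padding: group (i) certifies $\Mat{x}_i\sim\Mat{x'}_i$ and group (ii) certifies $\Mat{X}\Mat{w}^{(2)}_j\sim\Mat{X'}\Mat{w}^{(2)}_j$, and both are needed even to invoke Lemma~\ref{lem:lin_coupling_supp}; moreover, without group (ii) one could not name the column on the $\Mat{X'}$ side that plays the role of the anchor. A second point worth stating explicitly is that Union Alignment returns a \emph{single} $\Mat{P}\in\Pi(N)$ simultaneously aligning all $D$ columns --- this is precisely what the anchor property (Lemma~\ref{lem:anchor_supp}) buys over mere columnwise equivalence, and is the crux of why column-wise power encoding alone fails. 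The only genuinely quantitative content outside the lemmas is the arithmetic verifying $D+K_1+DK_1K_2\le N^4D^2$, which is routine.
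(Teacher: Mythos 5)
Your proposal is correct and follows essentially the same route as the paper's proof: block-wise application of the sum-of-power injectivity to obtain channel-wise equivalences, Linear Coupling (Lemma \ref{lem:lin_coupling_supp}) to pair each feature column with each candidate anchor, Anchor Construction (Lemma \ref{lem:anchor_con_supp}) to locate $j^*$, and Union Alignment (Lemma \ref{lem:union_align_supp}) to conclude $\Mat{X}\sim\Mat{X'}$, followed by the same weight count. The only cosmetic difference is the order in which you invoke the coupling and anchor lemmas (you couple for all $j\in[K_1]$ before selecting $j^*$, while the appendix selects $j^*$ first), which does not affect correctness.
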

\begin{proof}
Suppose $\sum_{n=1}^{N} \phi(\Mat{x}^{(n)}) = \sum_{n=1}^{N} \phi(\Mat{x'}^{(n)})$ for some $\Mat{X}, \Mat{X'} \in \real^{N \times D}$.
Due to the injectivity property of sum-of-power mapping (cf. Lemma \ref{lem:complex_power_sum}):
\begin{align} \label{eqn:channel_align_supp}
\sum_{n=1}^{N} \phi\left(\Mat{x}^{(n)}\right) = \sum_{n=1}^{N} \phi\left(\Mat{x'}^{(n)}\right) \Rightarrow & \Mat{X}\Mat{e}_i \sim \Mat{X'}\Mat{e}_i, \forall i \in [D], \Mat{X}\Mat{\alpha}_i \sim \Mat{X'}\Mat{\alpha}_i, \forall i \in [K_1], \\
& \Mat{X}\Mat{\Gamma}_{i,j,k} \sim \Mat{X'}\Mat{\Gamma}_{i,j,k}, \forall i \in [D], j \in [K_1], k \in [K_2]. \nonumber
\end{align}
By Lemma \ref{lem:anchor_con_supp}, it is guaranteed that there exists $j^{*} \in [K_1]$ such that $\Mat{X} \Mat{\alpha}_{j^{*}}$ is an anchor, and according to Eq.~\ref{eqn:channel_align_supp}, we have $\Mat{X} \Mat{\alpha}_{j^{*}} \sim \Mat{X'} \Mat{\alpha}_{j^{*}}$.
By Lemma \ref{lem:lin_coupling_supp}, we induce:
\begin{align} \label{eqn:matching_supp}
& \Mat{X}\Mat{e}_i \sim \Mat{X'}\Mat{e}_i, \forall i \in [D], \Mat{X} \Mat{\alpha}_{j^{*}} \sim \Mat{X'} \Mat{\alpha}_{j^{*}}, \Mat{X}\Mat{\Gamma}_{i,j,k} \sim \Mat{X'}\Mat{\Gamma}_{i,j,k}, \forall i \in [D], j \in [K_1], k \in [K_2] \nonumber \\
& \Rightarrow \begin{bmatrix} \Mat{X} \Mat{\alpha}_{j^{*}} & \Mat{x}_i \end{bmatrix} \sim \begin{bmatrix} \Mat{X'} \Mat{\alpha}_{j^{*}} & \Mat{x'}_i \end{bmatrix}, \forall i \in [D].
\end{align}
Since $\Mat{X} \Mat{\alpha}_{j^{*}}$ is an anchor, by union alignment (Lemma \ref{lem:union_align_supp}), we have:
\begin{align} \label{eqn:union_align_supp}
\begin{bmatrix} \Mat{X} \Mat{\alpha}_{j^{*}} & \Mat{x}_i \end{bmatrix} \sim \begin{bmatrix} \Mat{X'} \Mat{\alpha}_{j^{*}} & \Mat{x'}_i \end{bmatrix}, \forall i \in [D] \Rightarrow \Mat{X} \sim \Mat{X'}.
\end{align}
Here $K = D + K_1 + DK_1K_2 \le N^4D^2$, thus $L = KN \le N^5D^2$, which concludes the proof.
\end{proof}

\subsection{Continuity}

Next, we show that under the construction of Theorem \ref{thm:inject_lp}, the inverse of $\sum_{i=1}^{N} \phi(\Mat{x}^{(i)})$ is continuous.
The main idea is to check the three conditions provided in Lemma \ref{lem:cont_lem}.

\begin{theorem} \label{thm:cont_lp}
Suppose $\phi$ admits the form of Eq.~\ref{eqn:arch_lp_supp} and follows the construction in Theorem \ref{thm:inject_lp}, then the inverse of LP-induced sum-pooling $\sum_{n=1}^{N}\phi(\Mat{x}^{(n)})$ is continuous.
\end{theorem}
\begin{proof}
The proof is done by invoking Lemma \ref{lem:cont_lem}.
First of all, the inverse of $\Phi(\Mat{X}) = \sum_{i=1}^{N}\phi(\Mat{x}^{(i)})$, denoted as $\Phi^{-1}$, exists due to Theorem \ref{thm:inject_lp}.
By Lemma \ref{lem:compactness}, any closed and bounded subset of $(\real^{N \times D}/\sim, \setdist)$ is compact.
Trivially, $\Phi(\Mat{X})$ is continuous.
Then it remains to show the condition \textbf{(c)} in Lemma \ref{lem:cont_lem}.
Same with Corollary \ref{cor:inv_channelwise_boundedness} while focusing on the real domain, let $\Set{Z}_{\Psi_{N}} = \{ \Psi_N(\Mat{x}) : \Mat{x} \in \real^N \} \subseteq \real^N$ be the range of sum-of-power mapping, and define $\Set{Z}_{\Phi} = \{ \Phi(\Mat{X}) : \Mat{X} \in \real^{N \times D} \} \subseteq \Set{Z}_{\Psi_{N}}^K$ be the range of $\Phi$, which is also a subset of $\Set{Z}_{\Psi_{N}}^K$.
We decompose $\Phi^{-1} = \pi \circ \widehat{\Psi_N}^\dag$ into two mappings following the similar idea of proving its existence:
\begin{align}
\begin{array}{ccccc}
(\Set{Z}_{\Phi} , d_\infty) & \xrightarrow{\widehat{\Psi_N}^\dag} & \Set{R} \subseteq (\real^N/\sim, \setdist)^K & \xrightarrow{\pi} & (\real^{N \times D}/\sim, \setdist)
\end{array},\nonumber
\end{align}
where $\widehat{\Phi_N}^\dag$ is defined in Eq.~\ref{eqn:phi_n_hat} (Corollary \ref{cor:inv_channelwise_boundedness}), $\Set{R}$ is the image of $\Set{Z}$ under $\widehat{\Phi_N}^\dag$, and $\pi$ exists due to union alignment (i.e., Eqs. \ref{eqn:matching_supp} and \ref{eqn:union_align_supp} in Theorem \ref{thm:inject_lp}).
Also according to our construction in Theorem \ref{thm:inject_lp}, for any $\Mat{Z} \in \Set{R}$, consider its first $D$ columns produced by $\{ \Mat{e}_i = \Mat{e}_i, \forall i \in {D} \}$, we know that $\Mat{z}_i \sim \pi(\Mat{Z})_i$ for every $i \in [D]$.
Therefore, $\forall \Mat{Z}, \Mat{Z'} \in \Set{R}$ such that $\setdist^K(\Mat{Z}, \Mat{Z'}) \le C$ for some constant $C > 0$ in terms of the product metric $\setdist^K$ (cf. Definition \ref{def:prod_metric}), the inequality holds:
\begin{align} \label{eqn:lp_perm_boundedness}
\setdist(\pi(\Mat{Z}), \pi(\Mat{Z'})) \le \max_{i \in [D]} \setdist(\Mat{z}_i, \Mat{z'}_i) \le \setdist^K(\Mat{Z}, \Mat{Z'}) \le C, 
\end{align}
which implies $\pi$ maps every bounded set in $\Set{R}$ to a bounded set in $(\real^{N \times D}/\sim, \setdist)$.
Now we conclude the proof by the following chain of argument:
\begin{align}
\begin{array}{ccccc}
\text{$\Set{S} \subseteq (\Set{Z}_{\Phi}, d_\infty)$ is bounded} & \xRightarrow{\text{Corollary \ref{cor:inv_channelwise_boundedness}}} & \text{$\widehat{\Psi_N}^{\dag}(\Set{S})$ is bounded} & \xRightarrow{\text{Eq.~\ref{eqn:lp_perm_boundedness}}} & \text{$\pi \circ \widehat{\Psi_N}^{\dag}(\Set{S})$ is bounded}.
\end{array}\nonumber
\end{align}
\end{proof}

\subsection{Lower Bound for Injectivity}

In this section, we prove Theorem \ref{thm:lp_lower_bound} which shows that $K \ge D + 1$ is necessary for injectivity of LP-induced sum-pooling when $D \ge 2$.
Our argument mainly generalizes Lemma 2 of \citet{tsakiris2019homomorphic} to our equivalence class.
To proceed our argument, we define the linear subspace $\Set{V}$ by vectorizing $\begin{bmatrix} \Mat{X}\Mat{w}_1 & \cdots & \Mat{X}\Mat{w}_K \end{bmatrix}$ as below:
\begin{align}
\Set{V} := \left\{ \begin{bmatrix} \Mat{X}\Mat{w}_1 \\ \vdots \\ \Mat{X}\Mat{w}_K \end{bmatrix} : \Mat{X} \in \real^{N \times D} \right\} = \range\left(\begin{bmatrix} (\Mat{w}_1^\top \otimes \Mat{I}_N) \\ \vdots \\ (\Mat{w}_K^\top \otimes \Mat{I}_N) \end{bmatrix} \right),
\end{align}
where $\range(\Mat{Z})$ denotes the column space of $\Mat{Z}$ and $\otimes$ is the Kronecker product. $\Set{V}$ is a linear subspace of $\real^{NK}$ with dimension at most $\real^{ND}$, characterized by $\Mat{w}_1, \cdots, \Mat{w}_K \in \real^{D}$.
For the sake of notation simplicity, we denote $\Pi(N)^{\otimes K} = \{\diag(\Mat{Q}_1, \cdots, \Mat{Q}_K) : \forall \Mat{Q}_1, \cdots, \Mat{Q}_K \in \Pi(N) \}$, and $\Mat{I}_K \otimes \Pi(N) = \{\Mat{I}_K \otimes \Mat{Q} : \forall \Mat{Q} \in \Pi(N) \}$.
Next, we define the notion of unique recoverability \citep{tsakiris2019homomorphic} as below:
\begin{definition}[Unique Recoverability] \label{def:unique_recover}
The subspace $\Set{V}$ is called uniquely recoverable under $\Mat{Q} \in \Pi(N)^{\otimes K}$ if whenever $\Mat{x}, \Mat{x'} \in \Set{V}$ satisfy $\Mat{Q} \Mat{x} = \Mat{x'}$, there exists $\Mat{P} \in \Mat{I}_K \otimes \Pi(N)$, $\Mat{P} \Mat{x} = \Mat{x'}$.
\end{definition}

Subsequently, we derive a necessary condition for the unique recoverability:
\begin{lemma} \label{lem:eigen_inclusion}
A linear subspace $\Set{V} \subseteq \real^{NK}$ is uniquely recoverable under $\Mat{Q} \in \Pi(N)^{\otimes K}$ only if there exists $\Mat{P} \in \Mat{I}_K \otimes \Pi(N)$, $\Mat{Q}(\Set{V}) \cap \Set{V} \subseteq \Set{E}_{\Mat{Q}\Mat{P}^{\top}, \lambda=1}$, where $\Set{E}_{\Mat{Q}\Mat{P}^{\top}, \lambda}$ denotes the eigenspace corresponding to the eigenvalue $\lambda$.
\end{lemma}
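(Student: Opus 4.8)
The plan is to unpack the definition of unique recoverability and translate it into a statement about the linear map $\Mat{Q}\Mat{P}^\top$ acting on the intersection $\Mat{Q}(\Set{V}) \cap \Set{V}$. First I would observe that $\Mat{Q}(\Set{V})\cap\Set{V}$ is itself a linear subspace of $\real^{NK}$: if $\Mat{y} \in \Mat{Q}(\Set{V})\cap\Set{V}$, then $\Mat{y} = \Mat{Q}\Mat{x}$ for some $\Mat{x} \in \Set{V}$ and $\Mat{y} \in \Set{V}$, so the pair $(\Mat{x},\Mat{y})$ is exactly a pair witnessing the hypothesis of Definition~\ref{def:unique_recover}. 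Thus unique recoverability tells us: for \emph{every} such $\Mat{y}$, there is \emph{some} $\Mat{P}_{\Mat{y}} \in \Mat{I}_K\otimes\Pi(N)$ with $\Mat{P}_{\Mat{y}}\Mat{x} = \Mat{y}$, i.e. $\Mat{Q}^{-1}\Mat{y} = \Mat{P}_{\Mat{y}}^{-1}\Mat{y}$, equivalently $\Mat{Q}\Mat{P}_{\Mat{y}}^\top \Mat{y} = \Mat{y}$ (using that permutation matrices are orthogonal, so $\Mat{P}_{\Mat{y}}^{-1} = \Mat{P}_{\Mat{y}}^\top$, and similarly for $\Mat{Q}$). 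So each individual vector in the intersection is fixed by \emph{some} element $\Mat{Q}\Mat{P}^\top$ with $\Mat{P} \in \Mat{I}_K\otimes\Pi(N)$; the content of the lemma is that a \emph{single} $\Mat{P}$ works for the whole subspace.

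The key step is therefore a finiteness/pigeonhole argument to upgrade "for each $\Mat{y}$, some $\Mat{P}_{\Mat{y}}$" to "some $\Mat{P}$ for all $\Mat{y}$." I would proceed as follows. The group $\Mat{I}_K\otimes\Pi(N)$ is finite, so it has finitely many elements $\Mat{P}^{(1)},\dots,\Mat{P}^{(m)}$. For each $t \in [m]$, let $\Set{F}_t = \{\Mat{y} \in \Mat{Q}(\Set{V})\cap\Set{V} : \Mat{Q}(\Mat{P}^{(t)})^\top \Mat{y} = \Mat{y}\} = (\Mat{Q}(\Set{V})\cap\Set{V}) \cap \Set{E}_{\Mat{Q}(\Mat{P}^{(t)})^\top, \lambda=1}$; this is a linear subspace, being the intersection of two linear subspaces. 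The argument above shows $\Mat{Q}(\Set{V})\cap\Set{V} = \bigcup_{t=1}^m \Set{F}_t$. But a vector space over an infinite field (here $\real$) cannot be written as a finite union of proper subspaces; hence some $\Set{F}_{t^*}$ must equal all of $\Mat{Q}(\Set{V})\cap\Set{V}$. Taking $\Mat{P} = \Mat{P}^{(t^*)}$ gives $\Mat{Q}(\Set{V})\cap\Set{V} \subseteq \Set{E}_{\Mat{Q}\Mat{P}^\top, \lambda=1}$, as claimed.

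The main obstacle, and the step to handle carefully, is the bookkeeping around inverses and transposes of permutation matrices and elements of $\Pi(N)^{\otimes K}$ and $\Mat{I}_K\otimes\Pi(N)$ — making sure that "$\Mat{P}\Mat{x} = \Mat{x'}$" really is equivalent to "$\Mat{x}$ lies in the $\lambda=1$ eigenspace of $\Mat{Q}\Mat{P}^\top$", and that $\Mat{Q}\Mat{P}^\top$ is the right operator (as opposed to $\Mat{P}^\top\Mat{Q}$ or $\Mat{Q}^\top\Mat{P}$). Here one uses that $\Mat{Q}$ and every $\Mat{P}$ are orthogonal so transpose equals inverse, and that $\Mat{y} = \Mat{Q}\Mat{x}$ with $\Mat{P}\Mat{x} = \Mat{y}$ rearranges to $\Mat{x} = \Mat{Q}^\top\Mat{y}$ and $\Mat{x} = \Mat{P}^\top\Mat{y}$, giving $\Mat{Q}\Mat{P}^\top\Mat{y} = \Mat{Q}\Mat{x} = \Mat{y}$. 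The only other subtlety is the "vector space is not a finite union of proper subspaces" fact over $\real$, which is standard and I would either cite or prove in one line. This mirrors the structure of Lemma~2 in \cite{tsakiris2019homomorphic}, adapted to the product-permutation equivalence class.
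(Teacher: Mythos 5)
Your proposal is correct and follows essentially the same route as the paper: reduce to showing $\Mat{Q}(\Set{V})\cap\Set{V}$ lies in the finite union of eigenspaces $\Set{E}_{\Mat{Q}\Mat{P}^{\top},\lambda=1}$ via the definition of unique recoverability, then invoke the fact that a subspace contained in a finite union of subspaces must lie in one of them. The bookkeeping with transposes ($\Mat{y}=\Mat{Q}\Mat{x}$, $\Mat{P}\Mat{x}=\Mat{y}$ giving $\Mat{Q}\Mat{P}^{\top}\Mat{y}=\Mat{y}$) is handled correctly.
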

\begin{proof}
It is sufficient to prove that $\Mat{Q}(\Set{V}) \cap \Set{V} \subseteq \bigcup_{\Mat{P} \in \Mat{I}_K \otimes \Pi(K)} \Set{E}_{\Mat{Q} \Mat{P}^\top,\lambda=1}$.
This is because the LHS is a subspace and the RHS is a union of subspaces. When a subspace is a subset of a union of subspaces, such a subspace must be a subset of one of the subspaces, i.e., $\Mat{Q}(\Set{V}) \cap \Set{V} \subseteq \bigcup_{\Mat{P} \in \Mat{I}_K \otimes \Pi(K)} \Set{E}_{\Mat{Q} \Mat{P}^{\top}, \lambda=1}$ implies there exists $\Mat{P} \in \Mat{I}_K \otimes \Pi(K)$ such that $\Mat{Q}(\Set{V}) \cap \Set{V} \subseteq \Set{E}_{\Mat{Q} \Mat{P}^{\top}, \lambda=1}$.

Next, we prove $\Mat{Q}(\Set{V}) \cap \Set{V} \subseteq \bigcup_{\Mat{P} \in \Mat{I}_K \otimes \Pi(K)} \Set{E}_{\Mat{Q} \Mat{P}^\top,\lambda=1}$ by contradiction. Suppose there exists $\Mat{x} \in \Mat{Q}(\Set{V}) \cap \Set{V}$ but $\Mat{x} \notin \bigcup_{\Mat{P} \in \Mat{I}_K \otimes \Pi(K)} \Set{E}_{\Mat{Q} \Mat{P}^{\top}, \lambda=1}$.
Or equivalently, there exists $\Mat{x'} \in \Set{V}$ and $\Mat{x} = \Mat{Q}\Mat{x'}$, while for $\forall \Mat{P} \in \Mat{I}_K \otimes \Pi(N)$, $\Mat{x} \ne \Mat{Q}\Mat{P}^{\top} \Mat{x}$.
This implies $\Mat{Q}^{\top} \Mat{x} = \Mat{x'} \ne \Mat{P}\Mat{x}$ for $\forall \Mat{P} \in \Mat{I}_K \otimes \Pi(N)$.
However, this contradicts the fact that $\Set{V} \subseteq \real^{NK}$ is uniquely recoverable (cf. Definition \ref{def:unique_recover}).
\end{proof}

We also introduce a useful Lemma \ref{lem:subspace_intersect} that gets rid of the discussion on $\Mat{Q}$ in the inclusion:
\begin{lemma} \label{lem:subspace_intersect}
Suppose $\Set{V} \subseteq \real^N$ is a linear subspace, and $\Mat{T}$ is a linear mapping. $\Mat{T}(\Set{V}) \cap \Set{V} \cap \Set{E}_{\Mat{T}, \lambda} = \Mat{0}$ if and only if $\Set{V} \cap \Set{E}_{\Mat{T}, \lambda} = \Mat{0}$.
\end{lemma}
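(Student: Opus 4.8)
The plan is to prove the equivalence by treating its two implications separately; both are essentially one-liners, the whole content being the observation that a $\lambda$-eigenvector of $\Mat{A}$ that lies in $\Set{V}$ automatically lies in the image $\Mat{A}(\Set{V})$ as soon as $\lambda\neq0$. I will write $\Set{E}_{\Mat{A},\lambda}=\ker(\Mat{A}-\lambda\Mat{I})$ for the eigenspace (which is a linear subspace), and I will keep in mind that in the one place this lemma is invoked (through Lemma \ref{lem:eigen_inclusion}) the map $\Mat{A}=\Mat{Q}\Mat{P}^{\top}$ is a product of block permutation matrices, hence invertible, and the relevant eigenvalue is $\lambda=1\neq0$; only $\lambda\neq0$ will actually be used.

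First I would dispatch the direction $\Set{V}\cap\Set{E}_{\Mat{A},\lambda}=\Mat{0}\Rightarrow\Mat{A}(\Set{V})\cap\Set{V}\cap\Set{E}_{\Mat{A},\lambda}=\Mat{0}$. This requires no work: the triple intersection on the right is, by definition, a subset of $\Set{V}\cap\Set{E}_{\Mat{A},\lambda}$, so it collapses to $\Mat{0}$ together with it.

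Next, for the reverse implication I would argue by contrapositive. Suppose $\Set{V}\cap\Set{E}_{\Mat{A},\lambda}\neq\Mat{0}$ and pick a nonzero $\Mat{v}$ in it, so $\Mat{v}\in\Set{V}$ and $\Mat{A}\Mat{v}=\lambda\Mat{v}$. Since $\lambda\neq0$, I can write $\Mat{v}=\Mat{A}\!\left(\lambda^{-1}\Mat{v}\right)$, and because $\Set{V}$ is a subspace we have $\lambda^{-1}\Mat{v}\in\Set{V}$; hence $\Mat{v}\in\Mat{A}(\Set{V})$ as well. Thus $\Mat{v}$ is a nonzero vector lying simultaneously in $\Mat{A}(\Set{V})$, $\Set{V}$, and $\Set{E}_{\Mat{A},\lambda}$, so $\Mat{A}(\Set{V})\cap\Set{V}\cap\Set{E}_{\Mat{A},\lambda}\neq\Mat{0}$, which is exactly the contrapositive of the claim.

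The only delicate point — the step I would flag — is the inclusion $\Set{E}_{\Mat{A},\lambda}\cap\Set{V}\subseteq\Mat{A}(\Set{V})$ used above: it genuinely needs $\lambda\neq0$ (equivalently, that $\Mat{A}$ acts invertibly on the eigenspace), since for, e.g., a nilpotent $\Mat{A}$ with $\lambda=0$ one can have $\Mat{A}(\Set{V})\cap\Set{V}\cap\ker\Mat{A}=\Mat{0}$ while $\Set{V}\cap\ker\Mat{A}\neq\Mat{0}$. This hypothesis is automatically satisfied in the application (orthogonal $\Mat{A}$, $\lambda=1$), so I would either restrict the statement to $\lambda\neq0$ or record it as a standing assumption; with it in place, the lemma follows immediately from the two steps above.
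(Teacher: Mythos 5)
Your proof is correct and follows essentially the same route as the paper's: the forward direction is immediate since the triple intersection is contained in $\Set{V} \cap \Set{E}_{\Mat{A},\lambda}$, and the reverse direction exhibits a nonzero $\Mat{v} \in \Set{V} \cap \Set{E}_{\Mat{A},\lambda}$ as an element of $\Mat{A}(\Set{V})$. Your explicit flag that this last step needs $\lambda \neq 0$ (to write $\Mat{v} = \Mat{A}(\lambda^{-1}\Mat{v})$) is a genuine improvement: the paper's proof silently asserts ``$\Mat{A}\Mat{x} = \lambda\Mat{x}$ implies $\Mat{x} \in \Mat{A}(\Set{V})$'' without this hypothesis, which fails for $\lambda = 0$, though as you note the condition holds automatically in the lemma's only application, where $\Mat{A}$ is a product of permutation matrices and hence invertible.
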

\begin{proof}
The sufficiency is straightforward.
The necessity is shown by contradiction: Suppose $\Set{V} \cap \Set{E}_{\Mat{T}, \lambda} \ne \Mat{0}$, then there exists $\Mat{x} \in \Set{V} \cap \Set{E}_{\Mat{T}, \lambda}$ such that $\Mat{x} \ne \Mat{0}$.
Then $\Mat{T}\Mat{x} = \lambda \Mat{x}$ implies $\Mat{x} \in \Mat{T}(\Set{V})$.
Hence, $\Mat{x} \in \Mat{T}(\Set{V}) \cap \Set{V} \cap \Set{E}_{\Mat{T}, \lambda}$ which reaches the contradiction.
\end{proof}

Now we are ready to present the proof of Theorem \ref{thm:lp_lower_bound}, restated below:
\begin{theorem}[Lower Bound, Theorem \ref{thm:lp_lower_bound}] \label{thm:lp_lower_bound_supp}
Consider data matrices $\Mat{X} \in \real^{N \times D}$ where $D \ge 2$. If $K \le D$, then for every $\Mat{w}_1, \cdots, \Mat{w}_K$, there exists $\Mat{X'} \in \real^{N \times D}$ such that $\Mat{X} \not\sim \Mat{X'}$ but $\Mat{X} \Mat{w}_i \sim \Mat{X'} \Mat{w}_i$ for every $i \in [K]$.
\end{theorem}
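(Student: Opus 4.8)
The statement is exactly the failure of injectivity of the LP sum-pooling layer. By injectivity of the sum-of-power map (Lemma~\ref{lem:complex_power_sum}), $\sum_n\phi(\Mat{x}^{(n)})=\sum_n\phi(\Mat{x'}^{(n)})$ holds if and only if $\Mat{X}\Mat{w}_i\sim\Mat{X'}\Mat{w}_i$ for all $i\in[K]$, so it suffices to exhibit, for an \emph{arbitrary} choice of $\Mat{w}_1,\dots,\Mat{w}_K$ with $K\le D$, a single pair $\Mat{X}\not\sim\Mat{X'}$ satisfying $\Mat{X}\Mat{w}_i\sim\Mat{X'}\Mat{w}_i$ for every $i$. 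In the language of Definition~\ref{def:unique_recover} this is precisely the failure of $\Set{V}$ to be uniquely recoverable under some $\Mat{Q}\in\Pi(N)^{\otimes K}$, and one could conclude through the eigenspace criterion of Lemma~\ref{lem:eigen_inclusion}; I would instead argue by an explicit construction, splitting on $r:=\rank\begin{bmatrix}\Mat{w}_1 & \cdots & \Mat{w}_K\end{bmatrix}$, noting $r\le K\le D$. We may assume $N\ge2$, the case $N=1$ being degenerate.

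\emph{Deficient case $r<D$.} This covers everything except $K=D$ with linearly independent weights (in particular all $K<D$, and $K=D$ with dependent weights). Since $\Span\{\Mat{w}_1,\dots,\Mat{w}_K\}$ is then a proper subspace of $\real^D$, choose $\Mat{u}\neq\Mat{0}$ with $\Mat{w}_i^\top\Mat{u}=0$ for all $i$. Take $\Mat{X}=\Mat{0}\in\real^{N\times D}$ and let $\Mat{X'}$ be $\Mat{X}$ with $\Mat{u}$ added to its first row. Then $\Mat{X'}\Mat{w}_i=\Mat{X}\Mat{w}_i=\Mat{0}$ exactly, so all the required equivalences hold trivially, whereas the row-multiset of $\Mat{X'}$ contains $\Mat{u}\neq\Mat{0}$ and hence differs from that of $\Mat{X}$, giving $\Mat{X}\not\sim\Mat{X'}$. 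This already yields the failure of injectivity.

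\emph{Full-rank case $r=D$ (so $K=D$ and the weights are independent).} This is the crux, since no direction is annihilated by all the $\Mat{w}_i$ and the previous trick is unavailable; instead one must exploit the phenomenon isolated in Section~\ref{sec:curse_dim}: equality of corresponding \emph{columns} up to separate permutations does not force equality of \emph{rows}. Here $\Mat{W}:=\begin{bmatrix}\Mat{w}_1 & \cdots & \Mat{w}_D\end{bmatrix}$ is invertible, and $\Mat{X}\mapsto\Mat{X}\Mat{W}$ is a bijection of $\real^{N\times D}$ respecting the equivalence relation (as $\Mat{X}=\Mat{P}\Mat{X'}\iff\Mat{X}\Mat{W}=\Mat{P}\Mat{X'}\Mat{W}$). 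Passing to $\Mat{Y}=\Mat{X}\Mat{W}$, so that $\Mat{X}\Mat{w}_i$ becomes the $i$-th column $\Mat{y}_i$ of $\Mat{Y}$, it suffices to find $\Mat{Y}\not\sim\Mat{Y'}$ with $\Mat{y}_i\sim\Mat{y'}_i$ for every $i\in[D]$. Take $\Mat{Y}$ with rows $c_1\Mat{1},\dots,c_N\Mat{1}$ for distinct scalars $c_j$, so every column equals $(c_1,\dots,c_N)^\top$, and let $\Mat{Y'}$ be $\Mat{Y}$ with entries $1$ and $2$ of its second column transposed; then $\Mat{y'}_2\sim\Mat{y}_2$ and $\Mat{y'}_i=\Mat{y}_i$ for $i\neq2$, while, because $D\ge2$, the row $c_1\Mat{1}$ of $\Mat{Y}$ is not among the rows of $\Mat{Y'}$, so $\Mat{Y}\not\sim\Mat{Y'}$. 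Setting $\Mat{X}=\Mat{Y}\Mat{W}^{-1}$ and $\Mat{X'}=\Mat{Y'}\Mat{W}^{-1}$ completes this case.

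I expect this full-rank case to be the main obstacle: all the content of the lower bound is concentrated there, because once the weights span $\real^D$ a discrepancy cannot be hidden in a kernel direction and one must engineer a row/column misalignment that survives the separate per-channel permutations. This is exactly the point where Lemmas~\ref{lem:eigen_inclusion} and~\ref{lem:subspace_intersect} lighten the bookkeeping: they reduce unique recoverability under $\Mat{Q}$ to the containment $\Mat{Q}(\Set{V})\cap\Set{V}\subseteq\Set{E}_{\Mat{Q}\Mat{P}^\top,\lambda=1}$ for some $\Mat{P}\in\Mat{I}_K\otimes\Pi(N)$, and in the full-rank case $\Set{V}=\real^{NK}$, so it is enough to pick any $\Mat{Q}\in\Pi(N)^{\otimes K}$ outside $\Mat{I}_K\otimes\Pi(N)$ — e.g.\ the identity on all blocks but a single transposition on one block — for the containment to fail; the remaining work is then the routine check that the resulting matrices are genuinely inequivalent.
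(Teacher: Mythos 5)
Your proof is correct, and it takes a genuinely different route from the paper's. The paper argues by contrapositive through machinery borrowed from unlabeled sensing: it encodes $\Mat{X} \mapsto (\Mat{X}\Mat{w}_1,\ldots,\Mat{X}\Mat{w}_K)$ as a linear subspace $\Set{V} \subseteq \real^{NK}$, shows via Lemmas~\ref{lem:eigen_inclusion} and~\ref{lem:subspace_intersect} that unique recoverability (Definition~\ref{def:unique_recover}) under every $\Mat{Q} \in \Pi(N)^{\otimes K}$ forces $\Set{V}$ to avoid every non-unit eigenspace of the maps $\Mat{Q}\Mat{P}^{\top}$, and then concludes by the codimension count $ND = \dim\Set{V} \le NK-1$, hence $K \ge D+1$; it is non-constructive. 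Your argument instead splits on $r = \rank\begin{bmatrix}\Mat{w}_1 & \cdots & \Mat{w}_K\end{bmatrix}$ and exhibits explicit witnesses: a kernel direction disposes of $r<D$, and the invertible change of variables $\Mat{Y}=\Mat{X}\Mat{W}$ reduces the full-rank case $K=r=D$ exactly to the channel-misalignment phenomenon of Sec.~\ref{sec:curse_dim}, where your swapped-entries example is immediate to verify (the row $c_1\Mat{1}$ of $\Mat{Y}$ survives in no row of $\Mat{Y'}$ precisely because $D\ge 2$ and the $c_j$ are distinct). What each buys: yours is elementary and produces concrete counterexamples; the paper's abstract version is the one that interfaces with the unlabeled-sensing literature and with the upper-bound analysis. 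Two minor observations. First, your explicit restriction to $N\ge 2$ is not merely cosmetic: for $N=1$ with independent weights the claimed $\Mat{X'}$ does not exist, so you are making explicit an assumption the paper leaves implicit. Second, the theorem's phrasing ``Consider data matrices $\Mat{X}$'' must be read existentially in $\Mat{X}$ (i.e., as failure of injectivity), which is what both your construction and the paper's contrapositive establish; under a universal reading in $\Mat{X}$ the statement would be false (take all rows of $\Mat{X}$ equal and $\Mat{W}$ invertible). Your closing aside about deducing the full-rank case from Lemma~\ref{lem:eigen_inclusion} with $\Set{V}=\real^{NK}$ is consistent with the paper but is not needed, since your direct construction already closes that case.
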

\begin{proof}
Proved by contrapositive.
First notice that, $\forall \Mat{X}, \Mat{X'} \in \real^{N \times D}, \Mat{X}\Mat{w}_i \sim \Mat{X'}\Mat{w}_i, \forall i \in [K] \Rightarrow \Mat{X} \sim \Mat{X'}$ holds if and only if $\dim \Set{V} = ND$ and $\Set{V}$ is uniquely recoverable under all possible $\Mat{Q} \in \Pi(N)^{\otimes K}$.
By Lemma \ref{lem:eigen_inclusion}, for every $\Mat{Q} \in \Pi(N)^{\otimes K}$, there exists $\Mat{P} \in \Mat{I}_K \otimes \Pi(N)$ such that $\Mat{Q}(\Set{V}) \cap \Set{V} \subset \Set{E}_{\Mat{Q}\Mat{P}^{\top}, \lambda=1}$.
This is $\Mat{Q}(\Set{V}) \cap \Set{V} \cap \Set{E}_{\Mat{Q}\Mat{P}^{\top}, \lambda} = 0$ for all $\lambda \ne 1$.
By Lemma \ref{lem:subspace_intersect}, we have $\Set{V} \cap \Set{E}_{\Mat{Q}\Mat{P}^{\top}, \lambda} = 0$ for all $\lambda \ne 1$.
Then proof is concluded by discussing the dimension of ambient space $\real^{NK}$ such that an $ND$-dimensional subspace $\Set{V}$ can reside.
To ensure $\Set{V} \cap \Set{E}_{\Mat{Q}\Mat{P}^{\top}, \lambda} = 0$ for all $\lambda \ne 1$, it is necessary that $\dim \Set{V} \le \min_{\lambda \ne 1} \codim \Set{E}_{\Mat{Q}\Mat{P}^{\top}, \lambda}$ for every $\Mat{Q} \in \Pi(N)^{\otimes K}$ and its associated $\Mat{P} \in \Mat{I}_K \otimes \Pi(N)$.
Relaxing the dependence between $\Mat{Q}$ and $\Mat{P}$, we derive the inequality:
\begin{align}
ND = \dim \Set{V} \le \min_{\Mat{Q} \in \Pi(N)^{\otimes K}}\max_{\Mat{P} \in \Mat{I}_K \otimes \Pi(N)}\min_{\lambda \ne 1} \codim \Set{E}_{\Mat{Q}\Mat{P}^{\top}, \lambda} \le NK - 1,
\end{align}
where the last inequality considers the scenario where every non-one eigenspace is one-dimensional, which is achievable when $K \ge 2$.
Hence, we can bound $K \ge (ND + 1)/N$, i.e., $K \ge D + 1$.
\end{proof}

\section{Proofs for LE Embedding Layer}

In this section, we present the complete proof for the LE embedding layer following Sec. \ref{sec:inject_le}.

\subsection{Upper Bound for Injectivity}

To prove the upper bound, we construct an LE embedding layer with $L \le N^4D^2$ output neurons such that its induced sum-pooling is injective.
The main construction idea is to couple every channel and anchor with the real and imaginary components of complex numbers and invoke the injectiviy of sum-of-power mapping over the complex domain to show the invertibility.

With Lemma \ref{lem:complex_power_sum}, we can prove Lemma \ref{lem:mono_coupling} restated and proved as below:
\begin{lemma}\label{lem:mono_coupling_supp}
For any pair of vectors $\Mat{x}, \Mat{y} \in \real^{N}, \Mat{x'}, \Mat{y'} \in \real^{N}$, if $\sum_{i \in [N]} \Mat{x}_{i}^{l - k} \Mat{y}_{i}^{k} = \sum_{i \in [N]} \Mat{x'}_{i}^{l - k} \Mat{y'}_{i}^{k}$ for every $l, k \in [N]$ such that $0 \le k \le l$, then $\begin{bmatrix} \Mat{x} & \Mat{y} \end{bmatrix} \sim \begin{bmatrix} \Mat{x'} & \Mat{y'} \end{bmatrix}$.
\end{lemma}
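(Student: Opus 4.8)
The plan is to reduce the statement to the already-established injectivity of the complex sum-of-power mapping (Lemma~\ref{lem:complex_power_sum}) by packing each coordinate pair into a single complex number. Define $\Mat{z}, \Mat{z'} \in \complex^N$ entrywise by $\Mat{z}_n = \Mat{x}_n + \mathrm{i}\,\Mat{y}_n$ and $\Mat{z'}_n = \Mat{x'}_n + \mathrm{i}\,\Mat{y'}_n$ for $n \in [N]$. The binomial theorem gives, for every $l \in [N]$,
\begin{align}
\sum_{n \in [N]} \Mat{z}_n^{l} = \sum_{n \in [N]} \sum_{k=0}^{l} \binom{l}{k} \mathrm{i}^{k}\, \Mat{x}_n^{l-k}\Mat{y}_n^{k} = \sum_{k=0}^{l} \binom{l}{k} \mathrm{i}^{k} \Big( \sum_{n \in [N]} \Mat{x}_n^{l-k}\Mat{y}_n^{k} \Big),
\end{align}
and the analogous identity holds for $\Mat{z'}$. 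Since the scalars $\binom{l}{k}\mathrm{i}^{k}$ are independent of the data, the hypothesis $\sum_n \Mat{x}_n^{l-k}\Mat{y}_n^{k} = \sum_n \Mat{x'}_n^{l-k}\Mat{y'}_n^{k}$ over all $0 \le k \le l \le N$ forces $\sum_n \Mat{z}_n^{l} = \sum_n \Mat{z'}_n^{l}$ for every $l \in [N]$; that is, $\Psi_N(\Mat{z}) = \Psi_N(\Mat{z'})$ for the degree-$N$ sum-of-power mapping of Definition~\ref{def:power_sum}.

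By Lemma~\ref{lem:complex_power_sum}, $\Psi_N$ is injective on $\complex^{N}/\sim$, so $\Mat{z} \sim \Mat{z'}$; that is, there is $\Mat{P} \in \Pi(N)$ with $\Mat{P}\Mat{z} = \Mat{z'}$. Applying the entrywise real- and imaginary-part maps, which commute with multiplication by the real matrix $\Mat{P}$, yields $\Mat{P}\Mat{x} = \Mat{x'}$ and $\Mat{P}\Mat{y} = \Mat{y'}$ for the \emph{same} permutation $\Mat{P}$. Hence $\Mat{P}\begin{bmatrix}\Mat{x} & \Mat{y}\end{bmatrix} = \begin{bmatrix}\Mat{x'} & \Mat{y'}\end{bmatrix}$, which is precisely $\begin{bmatrix}\Mat{x} & \Mat{y}\end{bmatrix} \sim \begin{bmatrix}\Mat{x'} & \Mat{y'}\end{bmatrix}$, as claimed.

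I do not anticipate a genuine obstacle: the argument is a change of variables followed by a black-box use of Lemma~\ref{lem:complex_power_sum}. The only point demanding care is index bookkeeping --- one must check that expanding $\Mat{z}_n^{l}$ for $l = 1, \dots, N$ only ever calls on the monomials $\Mat{x}^{l-k}\Mat{y}^{k}$ with $0 \le k \le l \le N$ that the hypothesis supplies, so that no power sum of degree above $N$ is ever needed; this is exactly the regime in which Lemma~\ref{lem:complex_power_sum} certifies injectivity with $N$ points. An alternative that stays within $\real$ would instead invoke the Newton--Girard relations for the elementary symmetric functions of the $N$ planar points $(\Mat{x}_n, \Mat{y}_n)$, but the complex embedding is the shortest route.
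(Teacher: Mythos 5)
Your proof is correct and follows essentially the same route as the paper: encode the pair as complex numbers $\Mat{x}_n+\mathrm{i}\Mat{y}_n$, expand $\sum_n \Mat{z}_n^l$ by the binomial theorem to see that the hypothesis forces $\Psi_N(\Mat{z})=\Psi_N(\Mat{z'})$, and invoke the injectivity of the complex sum-of-power mapping (Lemma~\ref{lem:complex_power_sum}) before separating real and imaginary parts under the common permutation. (Your version even restores the binomial coefficients $\binom{l}{k}$ that the paper's displayed expansion omits; since they are data-independent constants this changes nothing, but your bookkeeping is the cleaner of the two.)
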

\begin{proof}
If for any pair of vectors $\Mat{x}, \Mat{y} \in \real^{N}, \Mat{x'}, \Mat{y'} \in \real^{N}$ such that $\sum_{i \in [N]} \Mat{x}_{i}^{l - k} \Mat{y}_{i}^{k} = \sum_{i \in [N]} \Mat{x'}_{i}^{l - k} \Mat{y'}_{i}^{k}$ for every $l, k \in [N]$, $0 \le k \le l$, then for $\forall l \in [N]$,
\begin{align}
& \sum_{i=1}^{N} \psi_N(\Mat{x}_{i} + \Mat{y}_{i} \sqrt{-1})_l = \sum_{i=1}^{N} (\Mat{x}_{i} + \Mat{y}_{i} \sqrt{-1})^l \\
& = \sum_{i=1}^{N} \sum_{k=0}^{l} (\sqrt{-1})^{k} \Mat{x}_{i}^{l - k} \Mat{y}_{i}^{k}
= \sum_{k=0}^{l} (\sqrt{-1})^{k} \left(\sum_{i=1}^{N} \Mat{x}_{i}^{l - k} \Mat{y}_{i}^{k} \right) \label{eqn:le_to_pow_sum} \\
&= \sum_{k=0}^{l} (\sqrt{-1})^{k} \left(\sum_{i=1}^{N} \Mat{x'}_{i}^{l - k} \Mat{y'}_{i}^{k} \right) = \sum_{i=1}^{N} \sum_{k=0}^{l} (\sqrt{-1})^{k} \Mat{x'}_{i}^{l - k} \Mat{y'}_{i}^{k} \label{eqn:le_to_pow_sum_prime} \\
&= \sum_{i=1}^{N} (\Mat{x'}_{i} + \Mat{y'}_{i} \sqrt{-1})^l = \sum_{i=1}^{N} \psi_N(\Mat{x'}_{i} + \Mat{y'}_{i} \sqrt{-1})_l, \label{eqn:mono_to_power_map}
\end{align}
in which we reorganize terms in the summation and apply the given condition to establish equality between Eq. \ref{eqn:le_to_pow_sum} and \ref{eqn:le_to_pow_sum_prime}. $\psi_{N}$ denotes the complex power mapping of degree $N$ (cf. Definition \ref{def:power_sum}).
Consider Eq. \ref{eqn:mono_to_power_map} for every $l \in [N]$, we can yield:
\begin{align}
\Psi_{N}\left(\Mat{x} + \Mat{y} \sqrt{-1}\right) = \Psi_{N}\left(\Mat{x'} + \Mat{y'} \sqrt{-1}\right), \nonumber
\end{align}
where $\Psi_N$ is the sum-of-power mapping (cf. Definition \ref{def:power_sum}). Then by Lemma \ref{lem:complex_power_sum}, we have $(\Mat{x} + \Mat{y} \sqrt{-1}) \sim (\Mat{x'} + \Mat{y'} \sqrt{-1})$, which is equivalent to the statement $\begin{bmatrix} \Mat{x} & \Mat{y} \end{bmatrix} \sim \begin{bmatrix} \Mat{x'} & \Mat{y'} \end{bmatrix}$.
\end{proof}

\begin{lemma}\label{lem:inject_equiv}
Suppose $f: \real \rightarrow \real$ is an injective function. We denote $f(\Mat{X})$ as applying $f$ element-wisely to entries in $\Mat{X}$, i.e., $f(\Mat{X})_{i,j} = f(\Mat{X}_{i,j}), \forall i \in [N], j \in [D]$. Then for any $\Mat{X}, \Mat{X'} \in \real^{N \times D}$, $f(\Mat{X}) \sim f(\Mat{X'})$ implies $\Mat{X} \sim \Mat{X'}$.
\end{lemma}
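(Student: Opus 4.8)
The plan is to reduce the statement to the elementary observation that applying a function entrywise commutes with permuting rows, combined with the injectivity of $f$ as a scalar map. First I would unfold the hypothesis $f(\Mat{X}) \sim f(\Mat{X'})$ using Definition \ref{def:equi_class}: there exists a permutation matrix $\Mat{P} \in \Pi(N)$ such that $\Mat{P} f(\Mat{X}) = f(\Mat{X'})$.

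Next I would establish the commutation identity $\Mat{P} f(\Mat{Z}) = f(\Mat{P}\Mat{Z})$ for every $\Mat{P} \in \Pi(N)$ and every $\Mat{Z} \in \real^{N \times D}$. Writing $\sigma$ for the permutation associated with $\Mat{P}$, this is immediate coordinatewise: $(\Mat{P} f(\Mat{Z}))_{i,j} = f(\Mat{Z})_{\sigma(i),j} = f(\Mat{Z}_{\sigma(i),j}) = f((\Mat{P}\Mat{Z})_{i,j})$. Applying this with $\Mat{Z} = \Mat{X}$ and substituting into the previous equality yields $f(\Mat{P}\Mat{X}) = f(\Mat{X'})$.

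Finally I would invoke injectivity of $f$ entrywise: from $f(\Mat{P}\Mat{X})_{i,j} = f(\Mat{X'})_{i,j}$ for all $i \in [N]$, $j \in [D]$ and the injectivity of $f$ on $\real$, we get $(\Mat{P}\Mat{X})_{i,j} = \Mat{X'}_{i,j}$ for all such $i,j$, i.e., $\Mat{P}\Mat{X} = \Mat{X'}$, which is exactly $\Mat{X} \sim \Mat{X'}$ by Definition \ref{def:equi_class}. I do not anticipate any genuine obstacle; the only point needing a moment's care is the coordinatewise verification of $\Mat{P} f(\Mat{Z}) = f(\Mat{P}\Mat{Z})$. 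Note that this lemma is exactly what is needed to pass from $\exp(\Mat{X}) \sim \exp(\Mat{X'})$ back to $\Mat{X} \sim \Mat{X'}$ in the injectivity proof of the LE layer, taking $f = \exp$, which is injective.
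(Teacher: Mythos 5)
Your proof is correct and follows essentially the same route as the paper's: extract a permutation matrix from the equivalence, use that elementwise application of $f$ commutes with row permutation, and then cancel $f$ by injectivity. The only cosmetic difference is that you cancel $f$ entrywise whereas the paper applies $f^{-1}$ to both sides; these are the same step.
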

\begin{proof}
Since $f(\Mat{X}) \sim f(\Mat{X'})$, there exists $\Mat{P} \in \Pi(N)$ such that $f(\Mat{X}) = \Mat{P} f(\Mat{X'})$.
Notice that element-wise functions are permutation-equivariant, then $f(\Mat{X}) = f(\Mat{P} \Mat{X'})$.
Since $f$ is injective, we conclude the proof by applying its inverse $f^{-1}$ to both sides.
\end{proof}

\begin{lemma}\label{lem:inject_anchor}
Suppose $f: \real \rightarrow \real$ is an injective function. We denote $f(\Mat{X})$ as applying $f$ element-wisely to entries in $\Mat{X}$, i.e., $f(\Mat{X})_{i,j} = f(\Mat{X}_{i,j}), \forall i \in [N], j \in [D]$. For any $\Mat{X} \in \real^{N \times D}$, if $\Mat{a}$ is an anchor of $\Mat{X}$ (cf. Definition~\ref{def:anchor}), then $f(\Mat{a})$ is also an anchor of $f(\Mat{X})$.
\end{lemma}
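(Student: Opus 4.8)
The plan is to verify the defining property of an anchor (Definition~\ref{def:anchor}) directly for the pair $\bigl(f(\Mat{a}), f(\Mat{X})\bigr)$. Fix indices $i, j \in [N]$ and suppose the $i$-th and $j$-th rows of $f(\Mat{X})$ differ, i.e.\ $f(\Mat{X})^{(i)} \ne f(\Mat{X})^{(j)}$; the goal is to deduce $f(\Mat{a})_i \ne f(\Mat{a})_j$, which is exactly what Definition~\ref{def:anchor} demands of $f(\Mat{a})$ relative to $f(\Mat{X})$.

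First I would argue that $\Mat{x}^{(i)} \ne \Mat{x}^{(j)}$. This is the contrapositive of an elementary fact about element-wise maps: if $\Mat{x}^{(i)} = \Mat{x}^{(j)}$, then applying $f$ coordinate by coordinate would yield $f(\Mat{X})^{(i)} = f(\Mat{X})^{(j)}$, contradicting the assumption. Hence the rows of $\Mat{X}$ indexed by $i$ and $j$ are distinct. Since $\Mat{a}$ is an anchor of $\Mat{X}$, Definition~\ref{def:anchor} gives $\Mat{a}_i \ne \Mat{a}_j$. Finally, because $f$ is injective, $f(\Mat{a}_i) \ne f(\Mat{a}_j)$, i.e.\ $f(\Mat{a})_i \ne f(\Mat{a})_j$. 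As $i, j$ were an arbitrary pair with $f(\Mat{X})^{(i)} \ne f(\Mat{X})^{(j)}$, it follows that $f(\Mat{a})$ is an anchor of $f(\Mat{X})$.

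There is essentially no obstacle here; the only point to be careful about is the direction in which each hypothesis is used: element-wise application of $f$ preserves equality of rows (so distinctness of $f(\Mat{X})$-rows forces distinctness of $\Mat{X}$-rows), whereas injectivity of $f$ is what pushes distinctness of the scalar anchor entries through $f$. Each property is invoked once, in the correct direction. This lemma is then meant to be combined with Lemma~\ref{lem:inject_equiv} — applied with the injective map $f = \exp$ — to carry the anchor and union-alignment machinery from $\exp(\Mat{X})$ back to $\Mat{X}$ in the injectivity proof for the LE embedding layer.
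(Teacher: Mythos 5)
Your proof is correct and is essentially the same argument as the paper's, which phrases it as a proof by contradiction rather than a direct verification of Definition~\ref{def:anchor}; both reduce row-distinctness of $f(\Mat{X})$ to row-distinctness of $\Mat{X}$ and then push the anchor entries through the injectivity of $f$. Your remark about which step uses mere functionality of $f$ versus genuine injectivity is accurate (the paper's wording is slightly looser on this point).
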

\begin{proof}
Proved by contradiction.
Suppose $f(\Mat{a})$ is not an anchor of $f(\Mat{X})$.
Then there exists $i, j \in [N]$, $f(\Mat{x}^{(i)}) \neq f(\Mat{x}^{(j)})$ while $f(\Mat{a}_i) = f(\Mat{a}_j)$.
Since $f$ is injective, then $f(\Mat{x}^{(i)}) \neq f(\Mat{x}^{(j)})$ implies  $\Mat{x}^{(i)} \neq \Mat{x}^{(j)}$, whereas $f(\Mat{a}_i) = f(\Mat{a}_j)$ induces $\Mat{a}_i = \Mat{a}_j$. This leads to a contradiction.
\end{proof}

Now we are ready to prove the injectiviy of the LE layer.

\begin{theorem} \label{thm:inject_le}
Suppose $\phi: \real^{D} \rightarrow \real^{L}$ admits the form of Eq.~\ref{eqn:arch_le}:
\begin{align} \label{eqn:arch_le_supp}
 \phi(\Mat{x}) = \begin{bmatrix}
    \exp(\Mat{v}_1^\top \Mat{x}) & \cdots & \exp(\Mat{v}_L^\top \Mat{x})
\end{bmatrix},
\end{align}
where $L = D K_1 N (N+3)/2 \le N^4D^2$, $\Mat{V} = \begin{bmatrix} \cdots & \Mat{v}_{i,j,p,q} & \cdots \end{bmatrix} \in \real^{D \times L}$, $i \in [D], j \in [K_1], p \in [N], q \in [p+1]$, are constructed as follows:
\begin{enumerate}[leftmargin=3mm]
\item Define a group of weights $\Mat{e}_1, \cdots, \Mat{e}_{D} \in \real^Dss$, where $\Mat{e}_i$ is the $i$-th canonical basis.
\item Choose another group of linear weights, $\Mat{\alpha}_{1}, \cdots, \Mat{\alpha}_{K_1} \in \real^D$ for $K_1$ as large as $N(N-1)(D-1)/2+1$, such that the conditions in Lemma \ref{lem:anchor_con_supp} are satisfied.
\item Design the weight matrix as $\Mat{v}_{i,j,p,q} \in \real^{D}$ for $i \in [D], j \in [K_1], p \in [N], q \in [p+1]$ such that $\Mat{v}_{i,j,p,q} = (q-1) \Mat{e}_i + (p - q + 1) \Mat{\alpha}_{j}$.
\end{enumerate}
Then $\sum_{i=1}^{N} \phi(\Mat{x}^{(i)})$ is injective (cf. Definition \ref{def:injectivity_supp}).
\end{theorem}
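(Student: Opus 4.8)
The plan is to follow the chain of implications sketched in Section~\ref{sec:inject_le}: strip off the exponential, reduce the resulting moment conditions to the Monomial Coupling lemma, and then invoke the anchor/union-alignment machinery already in place. First I would unfold each output coordinate. By the construction of $\Mat{v}_{i,j,p,q}$ we have $\Mat{v}_{i,j,p,q}^\top \Mat{x}^{(n)} = (q-1)\,\Mat{x}_i^{(n)} + (p-q+1)(\Mat{X}\Mat{w}^{(2)}_j)_n$, hence $\exp(\Mat{v}_{i,j,p,q}^\top \Mat{x}^{(n)}) = \exp(\Mat{x}_i)_n^{\,q-1}\,\exp(\Mat{X}\Mat{w}^{(2)}_j)_n^{\,p-q+1}$, and summing over $n$ gives
\[
\Bigl[\sum_{n=1}^N \phi(\Mat{x}^{(n)})\Bigr]_{i,j,p,q} = \sum_{n=1}^N \exp(\Mat{x}_i)_n^{\,q-1}\,\exp(\Mat{X}\Mat{w}^{(2)}_j)_n^{\,p-q+1}.
\]
Under the substitution $l=p$, $k=q-1$, the index set $p\in[N],\, q\in[p+1]$ is exactly $\{(l,k): l\in[N],\, 0\le k\le l\}$, so for each fixed pair $(i,j)$ the hypothesis $\sum_n\phi(\Mat{x}^{(n)})=\sum_n\phi(\Mat{x'}^{(n)})$ supplies precisely the premise of Lemma~\ref{lem:mono_coupling_supp}, applied with $\Mat{x}\leftarrow\exp(\Mat{X}\Mat{w}^{(2)}_j)$ and $\Mat{y}\leftarrow\exp(\Mat{x}_i)$. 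This yields $\bigl[\,\exp(\Mat{X}\Mat{w}^{(2)}_j)\ \ \exp(\Mat{x}_i)\,\bigr]\sim\bigl[\,\exp(\Mat{X'}\Mat{w}^{(2)}_j)\ \ \exp(\Mat{x'}_i)\,\bigr]$ for every $i\in[D]$ and $j\in[K_1]$.

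Next I would bring in an anchor. By the choice of $\{\Mat{w}^{(2)}_j\}$ and Lemma~\ref{lem:anchor_con_supp} there is some $j^*\in[K_1]$ with $\Mat{X}\Mat{w}^{(2)}_{j^*}$ an anchor of $\Mat{X}$; since $\exp$ is injective, Lemma~\ref{lem:inject_anchor} promotes this to: $\exp(\Mat{X}\Mat{w}^{(2)}_{j^*})$ is an anchor of $\exp(\Mat{X})$. Feeding the $D$ pairwise equivalences from the previous step (restricted to $j=j^*$), together with $\Mat{a}'=\exp(\Mat{X'}\Mat{w}^{(2)}_{j^*})$ as the arbitrary vector, into Union Alignment (Lemma~\ref{lem:union_align_supp}) gives $\exp(\Mat{X})\sim\exp(\Mat{X'})$. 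Finally, Lemma~\ref{lem:inject_equiv}, again using injectivity of $\exp$, upgrades this to $\Mat{X}\sim\Mat{X'}$, which is the claimed injectivity. The dimension bookkeeping then closes the theorem: a direct count of the index ranges gives $L=\sum_{i\in[D]}\sum_{j\in[K_1]}\sum_{p=1}^N (p+1)=D K_1 N(N+3)/2$, and plugging in $K_1=N(N-1)(D-1)/2+1$ yields $L\le N^4D^2$.

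I expect the only delicate point to be the bookkeeping in the first paragraph: verifying that the reindexing $(p,q)\mapsto$ (exponent of $\exp(\Mat{x}_i)$, exponent of $\exp(\Mat{X}\Mat{w}^{(2)}_j)$) covers the index set required by Lemma~\ref{lem:mono_coupling_supp} exactly once, and that the roles of $\Mat{x}$ and $\Mat{y}$ in that lemma are assigned consistently across all $(i,j)$. Everything downstream is a direct invocation of results already proved — anchor existence, preservation of anchors and of the relation $\sim$ under the element-wise injective map $\exp$, and the union-alignment step — so no new ideas are needed there; the substantive analytic content has been absorbed into Lemma~\ref{lem:mono_coupling_supp}, which in turn rests on the complex sum-of-power homeomorphism of Lemma~\ref{lem:complex_power_sum}.
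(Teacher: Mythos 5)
Your proposal is correct and follows essentially the same route as the paper's proof: expand each coordinate as $\exp(\Mat{x}_i)_n^{q-1}\exp(\Mat{X}\Mat{w}^{(2)}_j)_n^{p-q+1}$, reindex to match the hypothesis of the Monomial Coupling lemma, then apply anchor existence, anchor preservation under $\exp$, union alignment, and finally the element-wise injectivity of $\exp$, with the same dimension count $L=DK_1N(N+3)/2$. The only cosmetic difference is that you compute $\Mat{v}_{i,j,p,q}^\top\Mat{x}^{(n)}$ directly instead of routing through the paper's $\Mat{\Omega},\Mat{u}$ factorization and the $\exp\text{-}\log$ identity, which changes nothing of substance.
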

\begin{proof}
First of all, we count the number of weight vectors $\{\Mat{v}_{i,j,p,q}\}$ where $i \in [D], j \in [K_1], p \in [N], q \in [p+1]$: $L = D K_1 \sum_{p=1}^{N} (p+1) = DK_1(N+3)N/2 \le N^4D^2$, as desired.

Let $\Mat{\Omega} = \begin{bmatrix} \Mat{e}_1 & \cdots & \Mat{e}_D &  \Mat{\alpha}_1 & \cdots & \Mat{\alpha}_{K_1} \end{bmatrix} \in \real^{D \times (D + K_1)}$, and $\Mat{u}_{i,j,p,q} = (q-1) \Mat{e}_i + (p-q+1) \Mat{e}_{j+D} \in \real^{D + K_1}$, then we can rewrite $\Mat{v}_{i,j,p,q} = \Mat{\Omega} \Mat{u}_{i, j, p, q}$ for every $i \in [D], j \in [K_1], p \in [N], q \in [p+1]$.
Then, for $\Mat{x}\in \real^D$, we can cast Eq. \ref{eqn:arch_le_supp} into:
\begin{align} \label{eqn:decompose_le}
\phi(\Mat{x}) = \begin{bmatrix} \cdots & \exp(\Mat{u}_{i,j,p,q}^{\top} \Mat{\Omega}^{\top} \Mat{x}) & \cdots \end{bmatrix} = \begin{bmatrix} \cdots & \exp(\Mat{u}_{i,j,p,q}^{\top} \log(\exp( \Mat{\Omega}^{\top} \Mat{x}))) & \cdots \end{bmatrix},
\end{align}
where $\log(\cdot)$ and $\exp(\cdot)$ operate on vectors element-wisely.
By the arithmetic rule of exponential and logarithm, we can rewrite for $\forall i \in [D], j \in [K_1], p \in [N], q \in [p+1]$
\begin{align}
\phi(\Mat{x})_{i,j,p,q} &= \exp(\Mat{u}_{i,j,p,q}^{\top} \log(\exp( \Mat{\Omega}^{\top} \Mat{x})))
= \prod_{k=1}^{D+K_1} \left[\exp\left(\Mat{\Omega}^{\top}\Mat{x}\right)_k\right]^{\left(\Mat{u}_{i,j,p,q}\right)_{k}} \\
&= \exp(\Mat{e}^{\top}_i\Mat{x})^{q-1} \exp(\Mat{\alpha}^{\top}_j\Mat{x})^{p-q+1} = \exp(\Mat{x}_i)^{q-1} \exp(\Mat{\alpha}^{\top}_j\Mat{x})^{p-q+1}.
\end{align}
Then for $\Mat{X}, \Mat{X'} \in \real^{N \times D}$, we have:
\begin{align}
\sum_{n \in [N]} \phi\left(\Mat{x}^{(n)}\right) &= \sum_{n \in [N]} \phi\left(\Mat{x'}^{(n)}\right) \\
& \Updownarrow \\
\sum_{n \in [N]} \exp(\Mat{x}_i)_{n}^{q-1} \exp\left(\Mat{X}\Mat{\alpha}_j\right)_n^{p-q+1} &= \sum_{n \in [N]} \exp(\Mat{x'_i})_{n}^{q-1} \exp\left(\Mat{X'} \Mat{\alpha}_j \right)_n^{p-q+1}, \\ 
& \forall i \in [D], j \in [K_1], p \in [N], q \in [p+1]. \nonumber
\end{align}
By Lemma \ref{lem:mono_coupling_supp}, we obtain that $\begin{bmatrix} \exp(\Mat{X}\Mat{\alpha}_{j}) & \exp(\Mat{x}_{i}) \end{bmatrix} \sim \begin{bmatrix} \exp(\Mat{X'}\Mat{\alpha}_{j}) & \exp(\Mat{x'}_{i}) \end{bmatrix}$ for $\forall i \in [D], j \in [K_1]$.
By Lemma \ref{lem:anchor_con_supp}, there exists $j^* \in [K_1]$ such that $\Mat{X}\Mat{\alpha}_{j^*}$ is an anchor of $\Mat{X}$.
By Lemma \ref{lem:inject_anchor}, $\exp(\Mat{X}\Mat{\alpha}_{j^*})$ is also an anchor of $\exp(\Mat{X})$.
By union alignment (Lemma \ref{lem:union_align_supp}), we have:
\begin{align}
\begin{bmatrix} \exp\left(\Mat{X}\Mat{\alpha}_{j^*}\right) & \exp(\Mat{x}_{i}) \end{bmatrix} \sim \begin{bmatrix} \exp\left(\Mat{X'}\Mat{\alpha}_{j^*}\right) & \exp(\Mat{x'}_{i}) \end{bmatrix}, \forall i \in [D] \Rightarrow \exp(\Mat{X}) \sim \exp(\Mat{X'}).
\end{align}
Finally, we conclude the proof by Lemma \ref{lem:inject_equiv}:
\begin{align} \label{eqn:inject_exp_to_org}
\exp(\Mat{X}) \sim \exp(\Mat{X'}) \Rightarrow \Mat{X} \sim \Mat{X'}.
\end{align}
\end{proof}

\subsection{Continuity}

The proof idea of continuity for LE layer shares the similar spirit with the LP layer, but involves additional steps.
This is because we cannot directly achieve the end-to-end boundedness for condition \textbf{(c)} in Lemma \ref{lem:cont_lem} if decomposing the inverse map of the sum-pooling, following the proof idea of injectivity (cf. Theorem \ref{thm:inject_le}), since the last step (Eq. \ref{eqn:inject_exp_to_org}) requires taking logarithm over $\exp(\Mat{X})$ while logarithm does not preserve boundedness.

\begin{theorem} \label{thm:cont_le}
Suppose $\phi$ admits the form of Eq.~\ref{eqn:arch_le_supp} and follows the construction in Theorem \ref{thm:inject_le}, then the inverse of LE-induced sum-pooling $\sum_{n=1}^{N}\phi(\Mat{x}^{(n)})$ is continuous.
\end{theorem}
\begin{proof}
Our proof requires the following mathematical tool to help rewrite $\phi$.
First, following the construction in Theorem \ref{thm:inject_le} and by Eq. \ref{eqn:decompose_le}, we can rewrite $\phi(\Mat{x})$ as:
\begin{align}
\phi(\Mat{x}) = \begin{bmatrix} \cdots & \exp(\Mat{u}_{i,j,p,q}^{\top} \log(\exp( \Mat{\Omega}^{\top} \Mat{x}))) & \cdots \end{bmatrix},
\end{align}
where $\Mat{\Omega} = \begin{bmatrix} \Mat{I}_{D} & \Mat{A}\end{bmatrix}$, and $\Mat{A} = \begin{bmatrix} \Mat{\alpha}_1 & \cdots & \Mat{\alpha}_{K_1}\end{bmatrix}$.
Define $\widehat{\phi}: \real_{> 0}^{D} \rightarrow \real^L$ as below:
\begin{align} \label{eqn:le_hat}
\widehat{\phi}(\Mat{x}) = \begin{bmatrix} \cdots & \exp\left(\Mat{u}_{i,j,p,q}^{\top} \log\left(\begin{bmatrix}
\Mat{x} \\ \exp( \Mat{A}^{\top} \log(\Mat{x}))
\end{bmatrix}\right)\right) & \cdots \end{bmatrix}.
\end{align}
Notice that $\phi(\Mat{x}) = \widehat{\phi} \circ \exp(\Mat{x})$.
Recall $\Phi(\Mat{X}) = \sum_{i=1}^{N}\phi(\Mat{x}^{(i)})$ and define $\widehat{\Phi}:\real_{> 0}^{D} \rightarrow \real^L$ as $\widehat{\Phi}(\Mat{X}) = \sum_{i=1}^N \hat{\phi}(\Mat{x}^{(i)})$, and then $\Phi(\Mat{X}) = \widehat{\Phi}(\exp(\Mat{X}))$.
The proof can be concluded by two steps:
1) notice that $\widehat{\Phi}$ has a continuous inverse $\widehat{\Phi}^{-1}$ by Lemma \ref{lem:inject_le_hat} and \ref{lem:cont_le_hat}, and then 2) show that the continuous inverse of $\Phi$ exists by letting $\Phi^{-1}(\Mat{X}) = \log \circ \widehat{\Phi}^{-1}(\Mat{X})$.
\end{proof}

\begin{lemma} \label{lem:inject_le_hat}
Consider $\widehat{\phi}: \real_{> 0}^{D} \rightarrow \real^L$ as defined in Eq. \ref{eqn:le_hat}, Theorem \ref{thm:cont_le}, then $\widehat{\Phi}(\Mat{X}) = \sum_{i=1}^{N}\widehat{\phi}(\Mat{x}^{(i)})$ is injective.
\end{lemma}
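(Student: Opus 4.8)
The plan is to transfer the injectivity of $\widehat{\Phi}$ from that of $\Phi$ (Theorem~\ref{thm:inject_le}), using the factorization $\phi = \widehat{\phi}\circ\exp$ already noted in the proof of Theorem~\ref{thm:cont_le}. Concretely, I would first verify the pointwise identity $\phi(\Mat{x}) = \widehat{\phi}(\exp(\Mat{x}))$ for every $\Mat{x}\in\real^D$: substituting $\exp(\Mat{x})\in\real_{>0}^D$ into Eq.~\eqref{eqn:le_hat}, all entries to which $\log$ is applied are strictly positive (so $\widehat{\phi}$ is well defined on $\real_{>0}^D$), and using $\log\circ\exp=\mathrm{id}$ together with $\Mat{\Omega} = \begin{bmatrix}\Mat{W}^{(1)} & \Mat{W}^{(2)}\end{bmatrix}$, $\Mat{W}^{(1)}=\Mat{I}_D$, the argument of the outer exponential collapses to $\Mat{u}_{i,j,p,q}^\top\Mat{\Omega}^\top\Mat{x}$, which is exactly the expression in Eq.~\eqref{eqn:decompose_le}. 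Summing over the $N$ set elements then yields $\widehat{\Phi}(\exp(\Mat{X})) = \Phi(\Mat{X})$ for all $\Mat{X}\in\real^{N\times D}$, with $\exp$ acting entrywise.

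Next I would use that the entrywise exponential is a bijection from $\real^{N\times D}$ onto $\real_{>0}^{N\times D}$, with entrywise-$\log$ inverse, and that it is permutation-equivariant; in particular $\Mat{X}\sim\Mat{X}'$ implies $\exp(\Mat{X})\sim\exp(\Mat{X}')$. Then, given $\Mat{Y},\Mat{Y}'\in\real_{>0}^{N\times D}$ with $\widehat{\Phi}(\Mat{Y}) = \widehat{\Phi}(\Mat{Y}')$, set $\Mat{X} = \log\Mat{Y}$ and $\Mat{X}' = \log\Mat{Y}'$ so that $\Phi(\Mat{X}) = \widehat{\Phi}(\Mat{Y}) = \widehat{\Phi}(\Mat{Y}') = \Phi(\Mat{X}')$. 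By Theorem~\ref{thm:inject_le}, $\Mat{X}\sim\Mat{X}'$, hence $\Mat{Y} = \exp(\Mat{X})\sim\exp(\Mat{X}') = \Mat{Y}'$, which is precisely injectivity of $\widehat{\Phi}$. (Alternatively, one can stop the argument of Theorem~\ref{thm:inject_le} one step earlier, at $\exp(\Mat{X})\sim\exp(\Mat{X}')$, and read off $\Mat{Y}\sim\Mat{Y}'$ directly, bypassing Eq.~\eqref{eqn:inject_exp_to_org} and hence Lemma~\ref{lem:inject_equiv}.)

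I do not anticipate a genuine obstacle here: the only point demanding care is the first step — the domain bookkeeping showing $\widehat{\phi}$ is well defined on $\real_{>0}^D$ and that $\widehat{\phi}\circ\exp$ faithfully reproduces $\phi$ — after which injectivity is a one-line transfer across the bijection $\exp$. Everything else reuses results already established in the excerpt.
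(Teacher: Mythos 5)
Your proposal is correct. The key identity $\phi(\Mat{x}) = \widehat{\phi}\circ\exp(\Mat{x})$ does hold (substituting $\exp(\Mat{y})$ into Eq.~\eqref{eqn:le_hat} collapses, via $\log\circ\exp=\mathrm{id}$ and $\Mat{W}^{(1)}=\Mat{I}_D$, to the expression in Eq.~\eqref{eqn:decompose_le}), the entrywise exponential is a permutation-equivariant bijection from $\real^{N\times D}$ onto $\real_{>0}^{N\times D}$ with permutation-equivariant inverse, and so injectivity of $\widehat{\Phi}$ transfers in one line from Theorem~\ref{thm:inject_le}. The paper proceeds differently in execution: rather than conjugating by $\exp$, it re-runs the entire chain of Theorem~\ref{thm:inject_le} with $\log(\Mat{X})$ substituted for $\Mat{X}$ — expanding $\widehat{\Phi}(\Mat{X})=\widehat{\Phi}(\Mat{X}')$ into the monomial identities of Eq.~\eqref{eqn:expand_phi_hat}, applying Lemma~\ref{lem:mono_coupling_supp}, constructing an anchor of $\log(\Mat{X})$, pushing it through Lemma~\ref{lem:inject_anchor}, and concluding with union alignment (Eq.~\eqref{eqn:union_align_le_hat}), which lands on $\Mat{X}\sim\Mat{X}'$ directly without needing the final step Eq.~\eqref{eqn:inject_exp_to_org} — exactly the shortcut you note in your parenthetical. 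Your version is shorter and cleaner as a proof of the injectivity statement alone; the paper's explicit re-derivation is not redundant, however, because the intermediate objects it produces (the factorization of $\widehat{\Phi}^{-1}$ through the monomial identities and the map $\pi$ defined by Eq.~\eqref{eqn:union_align_le_hat}) are precisely what the subsequent continuity argument in Lemma~\ref{lem:cont_le_hat} decomposes and bounds. If you adopt the transfer argument for injectivity, you would still need to unpack that chain when proving continuity, since boundedness of the inverse cannot be transferred across $\log$ (logarithm does not preserve boundedness, as the paper remarks before Theorem~\ref{thm:cont_le}).
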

\begin{proof}
We use the fact that $\widehat{\phi}(\Mat{x}) = \phi \circ \log(\Mat{x})$, and borrow the same proof from Theorem \ref{thm:inject_le}:
\begin{align}
\sum_{n \in [N]} \widehat{\phi}\left(\Mat{x}^{(n)}\right) &= \sum_{n \in [N]} \widehat{\phi}\left(\Mat{x'}^{(n)}\right) \\
& \Updownarrow \nonumber \\
\label{eqn:expand_phi_hat} \sum_{n \in [N]} (\Mat{x}_i)_{n}^{q-1} \exp\left(\log(\Mat{X})\Mat{\alpha}_j\right)_n^{p-q+1} &= \sum_{n \in [N]} (\Mat{x'_i})_{n}^{q-1} \exp\left(
\log(\Mat{X'}) \Mat{\alpha}_j \right)_n^{p-q+1}, \\ 
& \forall i \in [D], j \in [K_1], p \in [N], q \in [p+1]. \nonumber
\end{align}
By Lemma \ref{lem:mono_coupling_supp}, we obtain that $\begin{bmatrix} \exp(\log(\Mat{X})\Mat{\alpha}_{j}) & \Mat{x}_{i} \end{bmatrix} \sim \begin{bmatrix} \exp(\log(\Mat{X'})\Mat{\alpha}_{j}) & \Mat{x'}_{i} \end{bmatrix}$ for $\forall i \in [D], j \in [K_1]$.
By Lemma \ref{lem:anchor_con_supp}, there exists $j^* \in [K_1]$ such that $\log(\Mat{X})\Mat{\alpha}_{j^*}$ is an anchor of $\log(\Mat{X})$.
By Lemma \ref{lem:inject_anchor}, $\exp(\log(\Mat{X})\Mat{\alpha}_{j^*})$ is also an anchor of $\Mat{X}$.
By union alignment (Lemma \ref{lem:union_align_supp}):
\begin{align} \label{eqn:union_align_le_hat}
\begin{bmatrix} \exp\left(\log(\Mat{X})\Mat{\alpha}_{j^*}\right) & \Mat{x}_{i} \end{bmatrix} \sim \begin{bmatrix} \exp\left(\log(\Mat{X'})\Mat{\alpha}_{j^*}\right) & \Mat{x'}_{i} \end{bmatrix}, \forall i \in [D] \Rightarrow \Mat{X} \sim \Mat{X'}.
\end{align}
\end{proof}

\begin{lemma} \label{lem:cont_le_hat}
Consider $\widehat{\phi}: \real_{> 0}^{D} \rightarrow \real^L$ and $\widehat{\Phi}(\Mat{X}) = \sum_{i=1}^{N}\widehat{\phi}(\Mat{x}^{(i)})$ as defined in Eq. \ref{eqn:le_hat}, Theorem \ref{thm:cont_le}. Let $\Set{Z}_{\widehat{\Phi}} = \{\widehat{\Phi}(\Mat{X}) : \Mat{X} \in \real_{>0}^{N \times D} \} \subseteq \real^{L}$. Note that $\Set{Z}_{\widehat{\Phi}}=\Set{Z}_{\Phi} (\triangleq  \{\Phi(\Mat{X}) : \Mat{X} \in \real^{N \times D} \})$.
Then $\widehat{\Phi}: \real_{> 0}^{N \times D} \rightarrow \Set{Z}_{\widehat{\Phi}}$ has inverse $\widehat{\Phi}^{-1}$, which is continuous.
\end{lemma}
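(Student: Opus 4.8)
The plan is to run Lemma~\ref{lem:cont_lem} once more, now with $f=\widehat{\Phi}$ regarded as a bijection from $(\real_{>0}^{N\times D}/\sim,\setdist)$ onto $(\Set{Z}_{\widehat{\Phi}},d_\infty)$ --- it is a bijection onto its range by Lemma~\ref{lem:inject_le_hat} --- and then to take $\widehat{\Phi}^{-1}$ to be the resulting continuous inverse. Condition~(b), continuity of $\widehat{\Phi}$, is immediate: each coordinate $\widehat{\phi}(\Mat{x})_{i,j,p,q}=\Mat{x}_i^{\,q-1}\exp(\Mat{w}^{(2)\top}_j\log\Mat{x})^{\,p-q+1}$ is continuous on $\real_{>0}^{D}$, and sum-pooling descends continuously to the quotient. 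Conditions~(a) and~(c) carry the work, and a genuinely new difficulty appears relative to the LP layer: the domain $\real_{>0}^{N\times D}$ is open, so to fit inside Lemma~\ref{lem:cont_lem} I must show that $\widehat{\Phi}^{-1}$ sends bounded subsets of $\Set{Z}_{\widehat{\Phi}}$ not merely to sets with bounded entries but to sets whose entries are uniformly bounded \emph{away from $0$} as well. That is the main obstacle.

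For condition~(c) I would reverse the injectivity argument of Lemma~\ref{lem:inject_le_hat} as a composition of three maps, in the same style as Sec.~\ref{sec:cont_lem}. Given $\Mat{z}=\widehat{\Phi}(\Mat{X})$, the computation in the proof of Lemma~\ref{lem:mono_coupling_supp} shows that, for each $i\in[D]$ and each anchor weight $\Mat{w}^{(2)}_j$, a fixed affine map of the block $(\Mat{z}_{i,j,p,q})_{p,q}$ of $\Mat{z}$ equals $\Psi_N\!\big(\exp(\log(\Mat{X})\Mat{w}^{(2)}_j)+\Mat{x}_i\sqrt{-1}\big)$; this map is affine (hence bounded-set preserving) into finitely many copies of the range of the complex sum-of-power mapping. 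Applying the channelwise complex inverse $\widehat{\Psi_N}^\dag$ --- continuous by Lemma~\ref{lem:complex_power_sum} and bounded-to-bounded by Corollary~\ref{cor:inv_channelwise_boundedness} over $\complex$ --- recovers the equivalence classes $\big[\exp(\log(\Mat{X})\Mat{w}^{(2)}_j)\,\big|\,\Mat{x}_i\big]$ for all $i,j$; and the union-alignment step (Lemma~\ref{lem:union_align_supp}, with the anchor $\exp(\log(\Mat{X})\Mat{w}^{(2)}_{j^*})$ produced by Lemmas~\ref{lem:anchor_con_supp} and~\ref{lem:inject_anchor}) reassembles these pairs into $\Mat{X}$ itself, which only permutes rows and so enlarges no bound.

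It remains to keep the recovered $\Mat{X}$ inside a compact box of $\real_{>0}^{N\times D}$, which I would secure by a $\poly(N,D)$-sized enlargement of the construction of Theorem~\ref{thm:inject_le}: since the anchor weights $\{\Mat{w}^{(2)}_j\}$ only need the generic linear-independence property of Lemma~\ref{lem:anchor_con_supp}, append to them the single vector $-\Mat{1}=-(\Mat{e}_1+\cdots+\Mat{e}_D)$ (this changes $L$ only by a $\poly(N,D)$ amount and can only help injectivity, since it merely adds coordinates). Then, on the one hand, the coordinates with $q=p+1$ read $\widehat{\phi}(\Mat{x})_{i,j,p,p+1}=\Mat{x}_i^{\,p}$, so a bound on $\Mat{z}=\widehat{\Phi}(\Mat{X})$ together with the coefficient-to-roots boundedness of Lemma~\ref{lem:inv_power_sum_boundedness} bounds the entries of every column $\Mat{x}_i$ from above; on the other hand, the coordinates with $q=1$ and $\Mat{w}^{(2)}_j=-\Mat{1}$ read $\exp(\log(\Mat{X})(-\Mat{1}))_n^{\,p}=\big(\prod_{k}\Mat{x}^{(n)}_k\big)^{-p}$, so the same lemma bounds $\big(\prod_k\Mat{x}^{(n)}_k\big)^{-1}$ from above, hence bounds $\prod_k\Mat{x}^{(n)}_k$ away from $0$; combining the two forces every entry $\Mat{x}^{(n)}_k$ into a fixed interval $[c,C]\subset(0,\infty)$. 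Thus $\widehat{\Phi}^{-1}$ maps each bounded set into a relatively compact subset of $\real_{>0}^{N\times D}/\sim$, which gives condition~(c); and condition~(a) then follows since the homeomorphism $\log$ carries these images to closed bounded subsets of $(\real^{N\times D}/\sim,\setdist)$, where Lemma~\ref{lem:compactness} applies --- equivalently, one may run Lemma~\ref{lem:cont_lem} on $\Phi=\widehat{\Phi}\circ\exp$ over $(\real^{N\times D}/\sim,\setdist)$ and read off $\Phi^{-1}=\log\circ\,\widehat{\Phi}^{-1}$ directly. Either way $\widehat{\Phi}^{-1}$ is continuous, which is the claim, and the continuous $\Phi^{-1}=\log\circ\,\widehat{\Phi}^{-1}$ needed in Theorem~\ref{thm:cont_le} drops out.
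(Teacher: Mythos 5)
Your proposal is correct and follows the paper's skeleton for the heart of the argument, condition \textbf{(c)}: the same three-stage factorization of $\widehat{\Phi}^{-1}$ through a linear map assembling the monomial sums into complex power sums $\Psi_N\bigl(\exp(\log(\Mat{X})\Mat{w}^{(2)}_j)+\Mat{x}_i\sqrt{-1}\bigr)$, the channelwise inverse $\widehat{\Psi_N}^{\dag}$ (bounded-to-bounded by Corollary \ref{cor:inv_channelwise_boundedness}), and the union-alignment map $\pi$, exploiting the fact that $\Mat{x}_i$ appears untransformed in the recovered pairs so that $\pi$ preserves boundedness. Where you genuinely depart from the paper is in handling the openness of the domain $\real_{>0}^{N\times D}$. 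The paper asserts that, by Lemma \ref{lem:compactness}, every closed and bounded subset of $(\real_{>0}^{N\times D}/\sim,\setdist)$ is compact, and verifies condition \textbf{(c)} only in the form of an upper bound on entries; strictly speaking Lemma \ref{lem:compactness} is proved for $\Set{K}^{N\times D}$ with $\Set{K}\in\{\real,\complex\}$, and a bounded, relatively closed subset of the open positive orthant whose entries accumulate at $0$ is not compact, so you are right that an extra ingredient is needed here. Your fix --- appending the weight $-\Mat{1}$ so that the $q=1$ block controls $\bigl(\prod_k\Mat{x}^{(n)}_k\bigr)^{-1}$ via Lemma \ref{lem:inv_power_sum_boundedness}, which together with the upper bounds from the $q=p+1$ block confines every entry to a fixed $[c,C]\subset(0,\infty)$ --- is sound, and it cleanly secures \textbf{(a)} and \textbf{(c)} (equivalently, condition \textbf{(c)} for $\Phi=\widehat{\Phi}\circ\exp$ on $\real^{N\times D}/\sim$, where Lemma \ref{lem:compactness} does apply). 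The cost is that you prove the lemma for an augmented construction rather than the one fixed in Theorem \ref{thm:inject_le}; since the augmentation only adds coordinates, injectivity and the $\poly(N,D)$ bound survive, so the downstream use in Theorems \ref{thm:cont_le} and \ref{thm:main} is unaffected. If you wanted the statement verbatim for the unmodified construction, you could instead argue locally: for $\Mat{z}_n\to\Mat{z}=\widehat{\Phi}(\Mat{X})$, the $q=p+1$ coordinates and the continuity of $\Psi_N^{-1}$ already force the columns of $\widehat{\Phi}^{-1}(\Mat{z}_n)$ to converge as multisets to those of $\Mat{X}$, hence to lie eventually in a compact box inside $\real_{>0}^{N\times D}$, after which the argument of Lemma \ref{lem:cont_lem} goes through.
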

\begin{proof}
Since we constrain the image of $\widehat{\Phi}$ to be the range, $\widehat{\Phi}$ becomes surjective.
Then invertibility is simply induced by injectivity (Lemma \ref{lem:inject_le_hat}).

Now it remains to show $\widehat{\Phi}^{-1}$ is continuous, which is done by verifying three conditions in Lemma \ref{lem:cont_lem}.
By Lemma \ref{lem:compactness}, any closed and bounded subset of $(\real_{>0}^{N \times D}/\sim, \setdist)$ is compact.
Obviously, $\widehat{\Phi}(\Mat{X})$ is continuous.
And for condition \textbf{(c)} in Lemma \ref{lem:cont_lem}, we will decompose $\widehat{\Phi}^{-1}$ into a series bounded mappings following the clue of proving its existence, similar to Theorem \ref{thm:cont_lp}. Recall each element in $\widehat{\Phi}(\Mat{X})$ has the form $\sum_{n \in [N]} (\Mat{x}_i)_{n}^{q-1} \exp\left(\log(\Mat{X})\Mat{\alpha}_j\right)_n^{p-q+1}$ for some $i \in [D], j \in [K_1], p \in [N], q \in [p+1]$ (cf. Eq. \ref{eqn:expand_phi_hat} in Lemma \ref{lem:inject_le_hat}). Hence, by Eq. \ref{eqn:le_to_pow_sum} shown in Lemma \ref{lem:mono_coupling_supp}, $\widehat{\Phi}$ can be transformed into a sum-of-power mapping (cf. Definition \ref{def:power_sum}) via a (complex-valued) linear mapping: $\Psi_N(\Mat{x}_i + \exp(\log(\Mat{X})\Mat{\alpha}_j) \sqrt{-1}) = \Mat{\Upsilon}_{i,j} \widehat{\Phi}(\Mat{X})$, where $\Mat{\Upsilon}_{i,j} \in \complex^{N \times L}$ for every $i \in [D], j \in [K_1]$. Let $K = DK_1$ and concatenate $\Mat{\Upsilon}_{i,j}$: $\Mat{\Upsilon} = \begin{bmatrix} \cdots & \Mat{\Upsilon}_{i,j} & \cdots \end{bmatrix} \in \complex^{NK \times L}$.
Recall $\Set{Z}_{\Psi_N} = \{ \Psi_N(\Mat{x}) : \Mat{x} \in \complex^{N} \} \subseteq \complex^{N}$ denotes the range of the sum-of-power mapping. %

Then we leverage the following decomposition to demonstrate the end-to-end boundedness:
\begin{align}
\begin{array}{ccccccc}
(\Set{Z}_{\widehat{\Phi}}, d_\infty)  & \xrightarrow{\Mat{\Upsilon}} & \Set{O} \subseteq (\Set{Z}_{\Psi_N} ^K, d_\infty) & \xrightarrow{\widehat{\Psi_N}^{\dag}} & \Set{R} \subseteq (\complex^N/\sim, \setdist)^{K} & \xrightarrow{\pi} & (\real_{>0}^{N \times D}/\sim, \setdist)
\end{array},\nonumber
\end{align}
where $\widehat{\Psi_N}^{\dag}$ is defined in Eq. \ref{eqn:phi_n_hat} (Corollary \ref{cor:inv_channelwise_boundedness}), $\Set{O}$, $\Set{R}$ are ranges of $\Mat{\Upsilon}$ and $\widehat{\Psi_N}^{\dag}$, respectively, and $\pi$ exists due to union alignment (cf. Eq. \ref{eqn:union_align_le_hat} and Lemma \ref{lem:union_align_supp}).
Therefore, for any $\Mat{Z}  \in \Set{R}$ , there exists $\Mat{X} \in (\real_{>0}^{N \times D}/\sim, \setdist)$ such that $\pi(\Mat{Z}) \sim \Mat{X}$. We denote $\Mat{Z} = \begin{bmatrix} \cdots & \Mat{z}_{i,j} & \cdots \end{bmatrix}, \forall i \in [D], j \in [K_1]$. In the meanwhile, according to our construction of $\widehat{\Psi_N}^\dag, \Mat{\Upsilon}, \widehat{\Phi}$, we demonstrate the relationship between $\Mat{Z}$ and $\pi(\Mat{Z}) \sim \Mat{X}$:
\begin{align}
\Mat{z}_{i,j} \sim \widehat{\Psi_N}^\dag\left[\Mat{\Upsilon} \widehat{\Phi}(\Mat{X})\right]_{i,j} \sim \Psi_N^{-1} \left[\Mat{\Upsilon}_{i,j} \widehat{\Phi}(\Mat{X})\right] \sim \left(\Mat{x}_i + \exp(\log(\Mat{X})\Mat{\alpha}_j) \sqrt{-1}\right) \\ \forall i \in [D], j \in [K_1] \nonumber,
\end{align}
With this relationship, now we consider $\forall \Mat{Z}, \Mat{Z'} \in \Set{R}$ such that $\setdist^{K}(\Mat{Z}, \Mat{Z'}) \le C$ for some constant $C > 0$ and the product metric $\setdist^K$ (cf. Definition \ref{def:prod_metric}), we have inequality:
\begin{align} \label{eqn:lle_perm_boundedness}
\setdist(\pi(\Mat{Z}), \pi(\Mat{Z'})) = \setdist(\pi(\Mat{Z})_{i^*}, \pi(\Mat{Z'})_{i^*}) \le \max_{j \in [K_1]} \setdist(\Mat{z}_{i^*, j}, \Mat{z'}_{i^*, j}) \le \setdist^{K}(\Mat{Z}, \Mat{Z'}) \le C, 
\end{align}
where $i^* \in [D]$ is the column which $\ell_{\infty, \infty}$-norm takes value at (cf. Definition \ref{def:set_metric}).
Eq. \ref{eqn:lle_perm_boundedness} implies $\pi$ maps every bounded set in $\Set{R}$ to a bounded set in $(\real^{N \times D}/\sim, \setdist)$.
Now we conclude the proof by the following chain of argument:
\begin{align}
\begin{array}{ccccc}
& \text{$\Set{S} \subseteq (\Set{Z}_{\widehat{\Phi}}, d_\infty)$ is bounded} & \xRightarrow{(*)} & \text{$\Mat{\Upsilon}(\Set{S})$ is bounded}    \xRightarrow{\text{Corollary \ref{cor:inv_channelwise_boundedness}}} \\ & \text{$\widehat{\Psi_N}^\dag\circ \Mat{\Upsilon}(\Set{S})$ is bounded} 
& \xRightarrow{\text{Eq.~\ref{eqn:lle_perm_boundedness}}} & \text{$\pi \circ \widehat{\Psi_N}^\dag\circ \Mat{\Upsilon}(\Set{S})$ is bounded},
\end{array}\nonumber
\end{align}
where $(*)$ holds due to that $\Mat{\Upsilon}$ is a finite-dimensional linear mapping.
\end{proof}

\section{Extension to Permutation Equivariance}

In this section, we prove Theorem \ref{thm:equiv}, the extension of Theorem \ref{thm:main} to equivariant functions, following the similar arguments with \citet{wang2022equivariant}:

\begin{lemma}[\citet{wang2022equivariant, sannai2019universal}] \label{lem:equivariant_form}
$f: \real^{N \times D} \rightarrow \real^{N}$ is a permutation-equivariant function if and only if there is a function $\tau: \real^{N \times D} \rightarrow \mathbb{R}$ that is permutation invariant to the last $N-1$ entries, such that $f(\Mat{Z})_i = \tau(\Mat{z}^{(i)}, \underbrace{\Mat{z}^{(i+1)}, \cdots, \Mat{z}^{(N)}, \cdots, \Mat{z}^{(i-1)}}_{N-1})$ for any $i \in [N]$.
\end{lemma}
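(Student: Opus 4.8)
The plan is to prove both implications directly, by unwinding the definitions and using the action of cyclic permutations on the row index. Throughout I would fix the convention that a permutation matrix $\Mat{P} \in \Pi(N)$ associated with a permutation $\sigma$ of $[N]$ acts by $(\Mat{P}\Mat{Z})^{(i)} = \Mat{z}^{(\sigma(i))}$ and $(\Mat{P}\Mat{v})_i = v_{\sigma(i)}$ for $\Mat{v} \in \real^N$; permutation equivariance of $f$ then reads $f(\Mat{P}\Mat{Z})_i = f(\Mat{Z})_{\sigma(i)}$ for all $i$ and all $\sigma$. For $i \in [N]$ let $\mathrm{cyc}_i(\Mat{Z})$ denote the tuple $(\Mat{z}^{(i)}, \Mat{z}^{(i+1)}, \dots, \Mat{z}^{(N)}, \Mat{z}^{(1)}, \dots, \Mat{z}^{(i-1)})$ appearing in the statement, and note it equals $\bigl((\Mat{C}_i\Mat{Z})^{(1)}, \dots, (\Mat{C}_i\Mat{Z})^{(N)}\bigr)$, where $\Mat{C}_i \in \Pi(N)$ is the cyclic shift sending index $1$ to index $i$.

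For the ``if'' direction, assume $\tau: \real^{N\times D} \to \real$ is invariant under permutations of its last $N-1$ arguments and that $f(\Mat{Z})_i = \tau(\mathrm{cyc}_i(\Mat{Z}))$. Given any $\Mat{P} \in \Pi(N)$ with underlying $\sigma$, I would compute $f(\Mat{P}\Mat{Z})_i = \tau\bigl(\Mat{z}^{(\sigma(i))}, \{\Mat{z}^{(\sigma(j))} : j \ne i\}\bigr)$ and compare with $f(\Mat{Z})_{\sigma(i)} = \tau\bigl(\Mat{z}^{(\sigma(i))}, \{\Mat{z}^{(k)} : k \ne \sigma(i)\}\bigr)$; the first arguments coincide and the remaining $N-1$ arguments agree as multisets, since $\{\Mat{z}^{(\sigma(j))} : j \ne i\} = \{\Mat{z}^{(k)} : k \ne \sigma(i)\}$, so the two values are equal by the assumed partial invariance of $\tau$. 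This is precisely $f(\Mat{P}\Mat{Z}) = \Mat{P}f(\Mat{Z})$.

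For the ``only if'' direction, assume $f$ is permutation-equivariant and define $\tau(\Mat{z}^{(1)}, \dots, \Mat{z}^{(N)}) := f(\Mat{Z})_1$, where $\Mat{Z}$ is the matrix with the given rows. To see $\tau$ is invariant under permuting its last $N-1$ arguments, take any $\sigma$ fixing $1$ with matrix $\Mat{P}$: then $\tau\bigl(\Mat{z}^{(1)}, \Mat{z}^{(\sigma(2))}, \dots, \Mat{z}^{(\sigma(N))}\bigr) = f(\Mat{P}\Mat{Z})_1 = f(\Mat{Z})_{\sigma(1)} = f(\Mat{Z})_1 = \tau(\Mat{z}^{(1)}, \dots, \Mat{z}^{(N)})$. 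For the cyclic formula, apply equivariance with $\Mat{P} = \Mat{C}_i$: $\tau(\mathrm{cyc}_i(\Mat{Z})) = f(\Mat{C}_i\Mat{Z})_1 = f(\Mat{Z})_{i}$, using that $\Mat{C}_i$ sends $1$ to $i$. This gives the claimed representation of $f$.

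The only thing that needs care is the bookkeeping with permutation conventions — in particular keeping straight whether $\sigma$ or $\sigma^{-1}$ appears when $\Mat{P}$ acts on rows versus on the output vector — and there is no genuine obstacle. I would close with the remark that the correspondence preserves continuity in both directions: $\tau$ is obtained from $f$ by precomposing with the continuous row-reindexing maps $\mathrm{cyc}_i$ and postcomposing with a coordinate projection, and conversely $f$ is reassembled from $\tau$ by the same kind of continuous operations, so the continuous permutation-equivariant functions in Theorem \ref{thm:equiv} correspond to continuous $\tau$, which is exactly what is needed to invoke Theorem \ref{thm:main} on $\tau$ (viewed as a continuous invariant function of its last $N-1$ arguments with the first argument carried along).
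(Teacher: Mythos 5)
Your proof is correct and follows essentially the same route as the paper's: the "if" direction checks equivariance directly using the partial invariance of $\tau$, and the "only if" direction defines $\tau$ as the first output coordinate, derives its invariance to the last $N-1$ arguments from permutations fixing index $1$, and recovers $f(\Mat{Z})_i$ via a permutation sending $1$ to $i$ (your cyclic shift $\Mat{C}_i$ plays the role of the paper's $\pi$ with $\pi(1)=i$). The added remark on continuity is a harmless bonus that the paper leaves implicit.
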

\begin{proof}
(Sufficiency)
Define $\pi: [N] \rightarrow [N]$ be an index mapping associated with the permutation matrix $\Mat{P} \in \Pi(N)$ such that $\Mat{P}\Mat{Z} = \begin{bmatrix} \Mat{z}^{(\pi(1))}, \cdots, \Mat{z}^{(\pi(N))} \end{bmatrix}^\top$.
Then we have:
\begin{align}
f\left(\Mat{z}^{(\pi(1))}, \cdots, \Mat{z}^{(\pi(N))}\right)_i = \tau\left(\Mat{z}^{(\pi(i))}, \Mat{z}^{(\pi(i+1))}, \cdots, \Mat{z}^{(\pi(N))}, \cdots, \Mat{z}^{(\pi(i-1))}\right).  \nonumber
\end{align}
Since $\tau(\cdot)$ is invariant to the last $N-1$ entries, it can shown that:
\begin{align}
f(\Mat{P} \Mat{Z})_i = \tau\left(\Mat{z}^{(\pi(i))}, \Mat{z}^{(\pi(i+1))}, \cdots, \Mat{z}^{(\pi(N))}, \cdots, \Mat{z}^{(\pi(i-1))}\right) = f(\Mat{Z})_{\pi(i)} \nonumber.
\end{align}

(Necessity)
Given a permutation-equivariant function $f: \real^{N \times D} \rightarrow \real^{N}$, we first expand it to the following form: $f(\Mat{Z})_i = \tau_i(\Mat{z}^{(1)}, \cdots, \Mat{z}^{(N)})$.
Permutation-equivariance means $\tau_{\pi(i)}(\Mat{z}^{(1)}, \cdots, \Mat{z}^{(N)}) = \tau_i(\Mat{z}^{\pi(1)}, \cdots, \Mat{z}^{\pi(N)})$ for any permutation mapping $\pi$.
Suppose given an index $i \in [N]$, consider any permutation $\pi: [N] \rightarrow [N]$ such that $\pi(i) = i$.
Then, we have:
\begin{align}
\tau_{i}\left(\Mat{z}^{(1)}, \cdots, \Mat{z}^{(i)}, \cdots, \Mat{z}^{(N)}\right) = \tau_{\pi(i)}\left(\Mat{z}^{(1)}, \cdots, \Mat{z}^{(i)}, \cdots, \Mat{z}^{(N)}\right) = \tau_i\left(\Mat{z}^{(\pi(1))}, \cdots, \Mat{z}_{i}, \cdots, \Mat{z}^{(\pi(N))}\right), \nonumber
\end{align}
which implies $\tau_i: \real^{N \times D} \rightarrow \real$ must be invariant to the $N-1$ elements other than the $i$-th element.
Now, consider a permutation $\pi$ where $\pi(1) = i$. Then we have:
\begin{align}
\tau_{i}\left(\Mat{z}^{(1)}, \Mat{z}^{(2)}, \cdots, \Mat{z}^{(N)}\right) &= \tau_{\pi(1)}
\left(\Mat{z}^{(1)}, \Mat{z}^{(2)}, \cdots, \Mat{z}^{(N)}\right)
= \tau_{1}\left(\Mat{z}^{(\pi(1))}, \Mat{z}^{(\pi(2))}, \cdots, \Mat{z}^{(\pi(N))}\right) \nonumber \\
&= \tau_{1}\left(\Mat{z}^{(i)}, \Mat{z}^{(i+1)}, \cdots, \Mat{z}^{(N)}, \cdots, \Mat{z}^{(i-1)}\right), \nonumber
\end{align}
where the last equality is due to the invariance to $N-1$ elements, stated beforehand.
This implies two results. First, for all $i$, $\tau_i(\Mat{z}^{(1)}, \Mat{z}^{(2)}, \cdots, \Mat{z}^{(i)}, \cdots, \Mat{z}^{(N)}), \forall i \in [N]$ should be written in terms of $\tau_{1}(\Mat{z}^{(i)}, \Mat{z}^{(i+1)}, \cdots, \Mat{z}^{(N)}, \cdots, \Mat{z}^{(i-1)})$. Moreover, $\tau_1$ is permutation invariant to its last $N-1$ entries. Therefore, we just need to set $\tau=\tau_1$ and broadcast it accordingly to all entries.  We conclude the proof.
\end{proof}

\begin{lemma} \label{lem:equivariant_form_cont}
Consider a continuous permutation-equivariant $f: (\real^{N \times D}, d_{\Pi}) \rightarrow (\real^{N}, d_{\infty})$ and an associated $\tau: (\real^D, d_\infty) \times (\real^{(N-1) \times D}, d_{\Pi}) \rightarrow \real$ as specified in Lemma \ref{lem:equivariant_form}, i.e., $f(\Mat{Z})_i = \tau(\Mat{z}^{(i)}, \Mat{z}^{(i+1)}, \cdots, \Mat{z}^{(N)}, \cdots, \Mat{z}^{(i-1)})$ for any $\Mat{Z} \in (\real^{N \times D}, d_{\Pi})$ and $i \in [N]$. Then $\tau$ is continuous.
\end{lemma}
\begin{proof}
Define $d_{\tau}(\Mat{Z}, \Mat{Z'}) = \max\{ d_\infty(\Mat{z}^{(1)}, \Mat{z'}^{(1)}), d_\Pi([\Mat{z}^{(2)} \cdots \Mat{z}^{(N)}]^\top, [\Mat{z'}^{(2)} \cdots \Mat{z'}^{(N)}]^\top) \}$ as the corresponding product metric of $(\real^D, d_\infty) \times (\real^{(N-1) \times D}, d_{\Pi})$.
Fix arbitrary $\Mat{Z} \in \real^{N \times D}$ and $\epsilon > 0$.
Since $f$ is continuous, by Definiton \ref{def:cont}, there exists $\delta > 0$ such that for every $\Mat{Z'} \in \real^{N \times D}$ satisfying $d_\Pi(\Mat{Z}, \Mat{Z'}) < \delta$, we have $d_{\infty}(f(\Mat{Z}), f(\Mat{Z'})) < \epsilon$.
Then for the same $\delta$, consider every $\Mat{Z'} \in \real^{N \times D}$, but under the $d_\tau$ metric, such that $d_{\tau}(\Mat{Z}, \Mat{Z'}) < \delta$. We note that:
\begin{align}
d_\Pi(\Mat{Z}, \Mat{Z'}) = \min_{\Mat{Q} \in \Pi(N)} \left\lVert \Mat{Q}\Mat{Z} - \Mat{Z'} \right\rVert_{\infty, \infty} \le \min_{\Mat{Q} \in \Pi(N-1)} \left\lVert \begin{bmatrix}1 & \\ & \Mat{Q} \end{bmatrix} \Mat{Z} - \Mat{Z'} \right\rVert_{\infty, \infty} = d_{\tau}(\Mat{Z}, \Mat{Z'}) < \delta. \nonumber
\end{align}
Therefore, using the fact $d_{\infty}(\tau(\Mat{Z}), \tau(\Mat{Z'})) \le d_{\infty}(f(\Mat{Z}), f(\Mat{Z'})) < \epsilon$, we conclude the proof.
\end{proof}

The following result, restated from Theorem \ref{thm:equiv}, can be derived from Theorem \ref{thm:main}, equipped with Lemma \ref{lem:equivariant_form} and \ref{lem:equivariant_form_cont}.

\begin{theorem}[Extension to Equivariance, Theorem \ref{thm:equiv}] \label{thm:equiv_supp}
For any permutation-equivariant function $f: \real^{N \times D} \rightarrow \real^N$, there exists continuous functions $\phi: \real^{D} \rightarrow \real^{L}$ and $\rho: \real^{D} \times \real^{L} \rightarrow \real$ such that $f(\Mat{X})_j = \rho\left(\Mat{x}^{(j)}, \sum_{i \in [N]} \phi(\Mat{x}^{(i)})\right)$ for every $j \in [N]$, where $L \in [N(D+1), N^5 D^2]$ when $\phi$ admits LP architecture, and $L \in [ND, N^4D^2]$ when $\phi$ admits LE architecture.
\end{theorem}
\begin{proof}
Sufficiency can be shown by verifying the equivariance.
We conclude the proof by showing the necessity with Lemma \ref{lem:equivariant_form}.
First we rewrite any permutation equivariant function $f(\Mat{x}^{(1)}, \cdots, \Mat{x}^{(N)}): \real^{N \times D} \rightarrow \real^{N}$ as:
\begin{align}
f\left(\Mat{x}^{(1)}, \cdots, \Mat{x}^{(N)}\right)_i = \tau\left(\Mat{x}^{(i)}, \Mat{x}^{(i+1)}, \cdots, \Mat{x}^{(N)}, \cdots, \Mat{x}^{(i-1)}\right),
\end{align}
where $\tau: \real^{N \times D} \rightarrow \real$ is invariant to the last $N-1$ elements, according to Lemma \ref{lem:equivariant_form}.
By Lemma \ref{lem:equivariant_form_cont}, the continuity of $f$ suggests $\tau$ is also continuous.
Given $\phi$ with either LP or LE architectures, $\Phi(\Mat{X}) = \sum_{i=1}^{N} \phi(\Mat{x}^{(i)})\in\real^L$ is injective and has continuous inverse if:
\begin{itemize}[leftmargin=3mm]
\item for some $L \in [N(D+1),N^5D^2]$ when $\phi$ admits LP architecture (by Theorem \ref{thm:inject_lp} and \ref{thm:cont_lp}). 
\item for some $L \in [ND,N^4D^2]$ when $\phi$ admits LE architecture (by Theorem \ref{thm:inject_le} and \ref{thm:cont_le}).
\end{itemize}
Let $\Set{Z}_{\Phi} = \{ \sum_{i} \phi(\Mat{x}^{(i)}) : \Mat{X} \in \mathbb{R}^{N \times D}\} \subseteq \mathbb{R}^L$ be the range of the sum-pooling $\Phi$, and define mapping $\rho: \real^D \times \Set{Z}_{\Phi} \rightarrow \real$ taking the form of $\rho(\Mat{x}, \Mat{z}) = \tau(\Mat{x}, \Phi^{-1}(\Mat{z} - \phi(\Mat{x})))$. It is straightforward to see that $\rho$ as a composition of continuous mappings, is also continuous.
Finally, we show that function $\tau$ can be written in terms of $\rho$ by its invariance to last $N-1$ elements, which concludes the proof:
\begin{align}
\tau\left(\Mat{x}^{(i)}, \Mat{x}^{(i+1)}, \cdots, \Mat{x}^{(N)}, \cdots, \Mat{x}^{(i-1)}\right) &= \tau\left(\Mat{x}^{(i)}, \Phi^{-1} \circ \Phi(\Mat{x}^{(i+1)}, \cdots, \Mat{x}^{(N)}, \cdots, \Mat{x}^{(i-1)})\right) \nonumber \\
&= \tau\left(\Mat{x}^{(i)}, \Phi^{-1}\left( \Phi\left(\Mat{x}^{(1)}, \cdots, \Mat{x}^{(N)}\right) - \phi(\Mat{x}^{(i)}) \right) \right) \nonumber \\
&= \rho \left( \Mat{x}^{(i)}, \sum_{i=1}^{N} \phi(\Mat{x}^{(i)}) \right) \nonumber
\end{align}
\end{proof}

\section{Extension to Complex Numbers}
\label{sec:ext_complex}

In this section, we formally introduce the nature extension of our Theorem \ref{thm:main} to the complex numbers: 
\begin{corollary}[Extension to Complex Domain] \label{cor:main_complex}
For any permutation-invariant function $f: \Set{\complex}^{N \times D} \rightarrow \complex$, there exists continuous functions $\phi: \complex^{D} \rightarrow \real^{L}$ and $\rho: \real^{L} \rightarrow \complex$ such that $f(\Mat{X}) = \rho\left(\sum_{i \in [N]} \phi(\Mat{x}^{(i)})\right)$ for every $j \in [N]$, where $L \in [2N(D+1), 4 N^5 D^2]$ when $\phi$ admits LP architecture, and $L \in [2ND, 4N^4 D^2]$ when $\phi$ admits LE architecture.
\end{corollary}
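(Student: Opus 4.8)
The plan is to \emph{realify} the complex input and reduce everything to Theorem~\ref{thm:main} over $\real^{N\times 2D}$. Define the rowwise map $\iota_{0}:\complex^{D}\to\real^{2D}$, $\iota_{0}(\Mat{x})=\begin{bmatrix}\Re(\Mat{x})^{\top}&\Im(\Mat{x})^{\top}\end{bmatrix}^{\top}$, and let $\iota:\complex^{N\times D}\to\real^{N\times 2D}$ apply $\iota_{0}$ to each row. Since $|z|$ and $\max\{|\Re z|,|\Im z|\}$ are equivalent norms on $\complex$, the map $\iota$ commutes with row permutations and descends to a bi-Lipschitz bijection $(\complex^{N\times D}/\!\sim,\setdist)\to(\real^{N\times 2D}/\!\sim,\setdist)$; in particular it is a homeomorphism and $\Mat{X}\sim\Mat{X}'$ over $\complex$ if and only if $\iota(\Mat{X})\sim\iota(\Mat{X}')$ over $\real$.

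First I would split the target as $f=f_{1}+\sqrt{-1}\,f_{2}$ with $f_{k}:\complex^{N\times D}\to\real$ real-valued, continuous and permutation-invariant, and set $g_{k}:=f_{k}\circ\iota^{-1}:\real^{N\times 2D}\to\real$, which inherit continuity and permutation invariance. Next I would invoke Theorem~\ref{thm:main} on the $\real^{2}$-valued map $(g_{1},g_{2})$ over $\real^{N\times 2D}$ — using the componentwise extension to multi-output functions noted in Section~\ref{sec:prelim}, together with the fact that the embeddings $\phi$ constructed in Section~\ref{sec:inject} do not depend on the target function, so a single $\phi:\real^{2D}\to\real^{L}$ serves both $g_{1}$ and $g_{2}$. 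This yields a continuous $\phi$ whose sum-pooling $\Phi(\Mat{Y})=\sum_{i}\phi(\Mat{y}^{(i)})$ is injective with continuous inverse, and continuous $\rho_{1},\rho_{2}:\real^{L}\to\real$ with $g_{k}=\rho_{k}\circ\Phi$. Finally I would take $\phi^{\complex}:=\phi\circ\iota_{0}:\complex^{D}\to\real^{L}$ and $\rho:=\rho_{1}+\sqrt{-1}\,\rho_{2}:\real^{L}\to\complex$, so that
\begin{align}
\rho\!\Big(\sum_{i\in[N]}\phi^{\complex}(\Mat{x}^{(i)})\Big)=\rho\big(\Phi(\iota(\Mat{X}))\big)=g_{1}(\iota(\Mat{X}))+\sqrt{-1}\,g_{2}(\iota(\Mat{X}))=f(\Mat{X}).
\end{align}

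For the dimension bounds I would instantiate the upper bounds of Theorem~\ref{thm:main} and the LP lower bound of Theorem~\ref{thm:lp_lower_bound} with feature dimension $2D$ in place of $D$: the LE architecture gives $L\in[2ND,\,N^{4}(2D)^{2}]=[2ND,\,4N^{4}D^{2}]$ directly, and the LP architecture gives $L=NK$ with $K\le N^{4}(2D)^{2}$, hence $L\le 4N^{5}D^{2}$; for the LP lower bound one keeps a complex linear layer and uses the complex sum-of-power inverse (Lemma~\ref{lem:complex_power_sum}), so the argument of Theorem~\ref{thm:lp_lower_bound} applied over $\complex^{D}$ forces $K\ge D+1$ and thus $L\ge 2N(D+1)$. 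The only nonroutine point is the first paragraph: verifying that $\iota$ is a homeomorphism of the two quotient metric spaces and respects $\sim$. Once that is in hand, injectivity of the sum-pooling, continuity of its inverse, and the construction of $\rho$ all transfer verbatim from Theorem~\ref{thm:main} in $\real^{N\times 2D}$, so I expect this topological bookkeeping — rather than any new idea — to be the main obstacle.
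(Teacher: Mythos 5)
Your proposal is correct and follows essentially the same route as the paper: realify each row via $\Re$ and $\Im$ into $\real^{2D}$, apply Theorem~\ref{thm:main} (via the injectivity and continuity results for LP/LE) with $D$ replaced by $2D$, and observe that the quadratic dependence on $D$ scales the upper bounds by $4$; your extra care in checking that the realification is a permutation-compatible homeomorphism of the quotient metric spaces, and in splitting $f$ into real and imaginary components, only makes explicit what the paper leaves implicit. One minor caveat: your derivation of the LP lower bound $2N(D+1)$ relies on transferring the argument of Theorem~\ref{thm:lp_lower_bound} to complex weights over $\complex^{D}$, which is plausible but not verified anywhere (the paper's own proof of the corollary addresses only the upper bounds), so that step should be flagged as asserted rather than proved.
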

\begin{proof}
We let $\phi$ first map complex features $\Mat{x}^{(i)} \in \complex^D, \forall i \in [N]$ to real features $\widetilde{\Mat{x}}^{(i)} = \begin{bmatrix} \Re(\Mat{x}^{(i)})^\top & \Im(\Mat{x}^{(i)})^\top\end{bmatrix} \in \real^{2D}, \forall i \in [N]$ by divide the real and imaginary parts into separate channels, then utilize either LP or LE embedding layer to map $\widetilde{\Mat{x}}^{(i)}$ to the latent space.
The upper bounds  of desired latent space dimension are scaled by $4$ for both architectures due to the quadratic dependence on $D$.
Then the same proofs of Theorems \ref{thm:inject_lp}, \ref{thm:cont_lp}, \ref{thm:inject_le}, and \ref{thm:cont_le} applies.
\end{proof}

It is also straightforward to extend this result to the permutation-equivariant case:

\begin{corollary}\label{cor:equiv_complex}
For any permutation-equivariant function $f: \Set{\complex}^{N \times D} \rightarrow \complex^N$, there exists continuous functions $\phi: \complex^{D} \rightarrow \real^{L}$ and $\rho: \real^{D} \times \real^{L} \rightarrow \complex$ such that $f(\Mat{X})_j = \rho\left(\Mat{x}^{(j)}, \sum_{i \in [N]} \phi(\Mat{x}^{(i)})\right)$ for every $j \in [N]$ for every $j \in [N]$, where $L \in [2N(D+1), 4 N^5 D^2]$ when $\phi$ admits LP architecture, and $L \in [2ND, 4N^4 D^2]$ when $\phi$ admits LE architecture.
\end{corollary}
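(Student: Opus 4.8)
The plan is to combine the two extension techniques already developed in the paper: the cyclic-shift reduction of permutation equivariance to permutation invariance (Lemma~\ref{lem:equivariant_form}), and the real-ification of complex features used in the proof of Corollary~\ref{cor:main_complex}. Sufficiency is immediate: for any fixed continuous $\phi$ the quantity $\sum_{i\in[N]}\phi(\Mat{x}^{(i)})$ is permutation-invariant, so the map $\Mat{X}\mapsto\rho(\Mat{x}^{(j)},\sum_{i}\phi(\Mat{x}^{(i)}))$ transforms equivariantly in the index $j$, matching Definition~\ref{def:permute_equivariance}; I would only spell out necessity.

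First I would observe that Lemma~\ref{lem:equivariant_form} is proved purely by manipulating permutation indices, so it holds verbatim with $\real$ replaced by $\complex$: any permutation-equivariant $f:\complex^{N\times D}\to\complex^N$ can be written as $f(\Mat{X})_i=\tau(\Mat{x}^{(i)},\Mat{x}^{(i+1)},\dots,\Mat{x}^{(N)},\dots,\Mat{x}^{(i-1)})$ for a continuous $\tau:\complex^{N\times D}\to\complex$ invariant to its last $N-1$ arguments. Next I would introduce the real-ification $g:\complex^{D}\to\real^{2D}$, $g(\Mat{x})=\begin{bmatrix}\Re(\Mat{x})^\top & \Im(\Mat{x})^\top\end{bmatrix}^\top$, which is a homeomorphism onto $\real^{2D}$ and commutes with row permutations (hence preserves both $\sim$ and $\setdist$), and split the output as $\tau=\tau^{\Re}+\sqrt{-1}\,\tau^{\Im}$. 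Transporting along $g$ and broadcasting over cyclic shifts (Lemma~\ref{lem:equivariant_form}) turns $\tau^{\Re}$ and $\tau^{\Im}$ into two $\real$-valued permutation-equivariant functions on $\real^{N\times 2D}$. Applying Theorem~\ref{thm:equiv_supp} with feature dimension $2D$ to each yields a single continuous embedding $\phi:\real^{2D}\to\real^L$ of LP (resp.\ LE) type --- the construction of $\phi$ in Theorems~\ref{thm:inject_lp} and~\ref{thm:inject_le} is target-agnostic, so the same $\phi$ serves both coordinates --- together with continuous $\rho^{\Re},\rho^{\Im}:\real^{2D}\times\real^L\to\real$. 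Precomposing $\phi$ with $g$ and setting $\rho=\rho^{\Re}+\sqrt{-1}\,\rho^{\Im}$ gives $f(\Mat{X})_j=\rho(\Mat{x}^{(j)},\sum_{i}\phi(\Mat{x}^{(i)}))$ over $\complex$.

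For the bounds I would reuse the bookkeeping from Corollary~\ref{cor:main_complex}: the feature dimension entering Theorems~\ref{thm:inject_lp}, \ref{thm:cont_lp}, \ref{thm:inject_le}, \ref{thm:cont_le} is $2D$, so the LP upper bound $N^5D^2$ becomes $N^5(2D)^2=4N^5D^2$ and the LE upper bound becomes $4N^4D^2$, while the LP necessity threshold $N(2D+1)\le 2N(D+1)$ and the trivial LE threshold $2ND$ give the claimed intervals $L\in[2N(D+1),4N^5D^2]$ and $L\in[2ND,4N^4D^2]$.

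I do not anticipate a genuine obstacle: every ingredient --- the index-combinatorial proof of Lemma~\ref{lem:equivariant_form}, the real/imaginary splitting of both domain and codomain, and the injectivity/continuity guarantees for LP and LE --- is already established. The only points needing minor care are checking that $g$ preserves the equivalence class and metric so that continuity of $\Phi^{-1}$ transports correctly, confirming that a single $\phi$ can be reused across the two output coordinates (true by target-agnosticity of the construction), and noting that reporting the lower threshold as $2N(D+1)$ rather than the sharper $N(2D+1)$ is harmless since the statement only asserts membership in an interval.
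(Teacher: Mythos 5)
Your proposal is correct and follows essentially the same route as the paper, whose proof of this corollary is literally "combine the proof of Corollary~\ref{cor:main_complex} with Theorem~\ref{thm:equiv_supp}"; you have simply filled in the details (complex-validity of Lemma~\ref{lem:equivariant_form}, real/imaginary splitting of $\tau$, reuse of a single target-agnostic $\phi$, and the $2D$ bookkeeping) that the paper leaves implicit. Your observation that the stated lower endpoint $2N(D+1)$ is slightly looser than the transported threshold $N(2D+1)$ but harmless for an existence-in-interval claim is also accurate.
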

\begin{proof}
Combine the proof of Corollary \ref{cor:main_complex} with Theorem \ref{thm:equiv_supp}.
\end{proof}

\section{Theoretical Connection to Unlabeled Sensing}
Unlabeled sensing \citep{unnikrishnan2018unlabeled}, also known as linear regression without correspondence \citep{hsu2017linear, tsakiris2019homomorphic, tsakiris2020algebraic, peng2020linear}, solves $\Mat{x} \in \real^N$ in the following linear system:
\begin{align}
\Mat{y} = \Mat{P}\Mat{A}\Mat{x} \quad \text{or} \quad \min_{\Mat{x}, \Mat{P}} \lVert \Mat{y} - \Mat{P}\Mat{A}\Mat{x} \rVert_2^2,
\end{align}
where $\Mat{A} \in \real^{M \times N}$ is a given measurement matrix, $\Mat{P} \in \Pi(M)$ is an unknown permutation, and $\Mat{y} \in \real^M$ is the measured data.
The results in \citet{unnikrishnan2018unlabeled, tsakiris2019homomorphic} show that as long as $\Mat{A}$ is over-determinant ($M \ge 2N$), such problem is well-posed (i.e., has a unique solution) for almost all cases.
Unlabeled sensing shares the similar structure with our LP embedding layers in which a linear layer lifts the feature space to a higher-dimensional ambient space, ensuring the solvability of alignment across each channel.
Specifically, as revealed in Theorem \ref{thm:inject_lp}, showing the injectivity of the LP layer is to establish the argument:
\begin{align}
\Mat{X}\Mat{w}_i \sim \Mat{X'}\Mat{w}_i, \forall i \in [K] \Rightarrow \Mat{X} \sim \Mat{X'},
\end{align}
for arbitrary $\Mat{X}, \Mat{X'} \in \real^{N \times D}$, constructed weights $\Mat{w}_i, i \in [K]$, and large enough $K$.
Whereas, to show the well-posedness of unlabeled sensing, it is to show the following statement \citep{tsakiris2019homomorphic}:
\begin{align}
\Mat{A} \Mat{x} \sim \Mat{A}\Mat{x'} \Rightarrow \Mat{x} = \Mat{x'},
\end{align}
for sufficiently many measurements $M$.
We note that our bijectivity is defined between the set and embedding spaces, which allows a change of order in the results and differs from exact recovery of unknown variables expected in unlabeled sensing.

In fact, the well-posedness of unlabeled PCA \citep{yao2021unlabeled}, studying low-rank matrix completion with shuffle perturbations, shares the identical definition with our set function injectivity. We rephrase it as below:
\begin{align}
\Mat{x}_i \sim \Mat{x'}_i, \forall i \in [N], \Mat{X}, \Mat{X'} \in \Set{M} \Rightarrow \Mat{X} \sim \Mat{X'},
\end{align}
where $\Set{M} = \{ \Mat{X} \in \real^{M \times N} : \rank(\Mat{X}) < r \}$ is a set of low-rank matrices, and $\Mat{x}_i$ denotes the $i$-th column of $\Mat{X}$.
Based on Theorem 1 in \citet{yao2021unlabeled}, we can obtain the following results:
\begin{lemma} \label{lem:unlabeled_pca}
Suppose $\Set{M} = \{\Mat{X} \in \real^{N \times K} : \rank(\Mat{X}) \le r \}$ with $r < \min\{N, K\}$. Then there exists an open dense set $\Set{U} \subset \Set{M}$ such that for every $\Mat{X}, \Mat{X'} \in \Set{U}$ such that $\Mat{x}_i \sim \Mat{x'}_i, \forall i \in [K]$, then $\Mat{X} \sim \Mat{X'}$.
\end{lemma}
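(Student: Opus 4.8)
The plan is to obtain Lemma~\ref{lem:unlabeled_pca} as a corollary of the well-posedness result for unlabeled PCA, Theorem~1 of~\cite{yao2021unlabeled}. I would first restate that theorem in the present notation: on the determinantal variety $\Set{M} = \{\Mat{X} \in \real^{N \times K} : \rank(\Mat{X}) \le r\}$ with $r < \min\{N,K\}$, there is a Zariski-open, hence Euclidean-open and dense, subset $\Set{U} \subseteq \Set{M}$ of ``generic'' rank-$\le r$ matrices such that if $\Mat{X}, \Mat{X'} \in \Set{U}$ and each column of $\Mat{X'}$ equals some permutation of the corresponding column of $\Mat{X}$, then in fact all columns are realigned by one common permutation. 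The low-rank constraint is exactly what supplies this rigidity: for a generic $\Mat{X}$, no tuple of per-column permutations other than a globally consistent one can send $\Mat{X}$ to another rank-$\le r$ matrix. Since rank-$r$ matrices are dense in $\Set{M}$ (the same fact already invoked in the Anchor Construction proposition), $\Set{U}$ is genuinely nonempty.

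Next I would check that, after unwinding Definition~\ref{def:equi_class}, this is verbatim our statement. The hypothesis ``$\Mat{x}_i \sim \Mat{x'}_i$ for every $i \in [K]$'' means there exist $\Mat{P}_1, \dots, \Mat{P}_K \in \Pi(N)$ with $\Mat{P}_i \Mat{x}_i = \Mat{x'}_i$, and the desired conclusion ``$\Mat{X} \sim \Mat{X'}$'' means there exists a single $\Mat{P} \in \Pi(N)$ with $\Mat{P}\Mat{X} = \Mat{X'}$; so the lemma follows once $\Set{U}$ is taken to be the generic set supplied by~\cite{yao2021unlabeled}. For completeness I would record why genericity survives the extraction: the failure set is $\bigcup_{(\Mat{P}_1,\dots,\Mat{P}_K) \in \Pi(N)^K} \{\Mat{X} \in \Set{M} : \exists\, \Mat{X'} \in \Set{M},\ \Mat{P}_i\Mat{x}_i = \Mat{x'}_i\ \forall i \in [K],\ \Mat{X} \not\sim \Mat{X'}\}$, a \emph{finite} union whose members the cited result places inside proper closed subvarieties of the irreducible variety $\Set{M}$; a finite union of proper closed subsets is again proper and closed, so its complement $\Set{U}$ is open and dense.

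The main obstacle I anticipate is not mathematical but one of faithful \emph{translation}: Theorem~1 of~\cite{yao2021unlabeled} is phrased in the language of low-rank matrix recovery under within-column shuffles — an identifiability statement about a generative/measurement model — rather than as the symmetric two-matrix implication used here. I must therefore verify that (i) its genericity hypotheses are symmetric in the two matrices being compared, so that a single open dense $\Set{U}$, rather than a pair of distinct generic sets, suffices; (ii) ``open and dense'' there is meant in the Euclidean topology of the variety $\Set{M}$, which is the notion needed wherever this lemma is applied; and (iii) the inequality $r < \min\{N,K\}$ matches the (possibly strict/non-strict) hypotheses of the reference. None of these is a genuine difficulty, but each must be stated carefully to keep the corollary rigorous.
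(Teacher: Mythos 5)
Your proposal is correct and takes essentially the same route as the paper: both obtain the lemma as a direct corollary of Theorem 1 in \cite{yao2021unlabeled}. The paper merely makes explicit the intermediate rigidity step you describe informally — for generic $\Mat{X} \in \Set{U}$, every column-wise shuffle $\begin{bmatrix} \Mat{P}_1\Mat{x}_1 & \cdots & \Mat{P}_K\Mat{x}_K \end{bmatrix}$ has rank at least $r$, with equality only when the $\Mat{P}_i$ coincide, so a shuffle that remains in $\Set{M}$ forces a common permutation.
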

\begin{proof}
According to Theorem 1 in \citet{yao2021unlabeled}, there exists a Zariski-open dense set $\Set{U} \subset \Set{M}$ such that: for every $\Mat{X} \in \Set{U}$ and $\Mat{P}_i \in \Pi(N), \forall i \in [K]$, $\rank(\begin{bmatrix} \Mat{P}_1\Mat{x}_1 & \cdots & \Mat{P}_K\Mat{x}_K \end{bmatrix}) \ge r$.
And moreover, $\rank(\begin{bmatrix} \Mat{P}_1\Mat{x}_1 & \cdots & \Mat{P}_K\Mat{x}_K \end{bmatrix}) = r$ if and only if $\Mat{P}_1 = \cdots = \Mat{P}_K = \Mat{P} \in \Pi(N)$.

For every $\Mat{X}, \Mat{X'} \in \Set{U}$ such that $\Mat{x}_i \sim \Mat{x'}_i, \forall i \in [K]$, if $\Mat{X} \not\sim \Mat{X'}$, then either $\rank(\Mat{X}) > r$ or $\rank(\Mat{X'}) > r$.
This contradicts the fact that $\Mat{X}, \Mat{X'} \in \Set{U} \subset \Set{M}$.
\end{proof}

As a result, we can establish the injectivity of an LP layer restricted to a dense set.
\begin{theorem} \label{thm:inject_unlabeled_pca}
Assume $D < N$. Suppose $\phi: \real^D \rightarrow \real^L$ takes the form of an LP embedding layer (Eq. \ref{eqn:arch_le}):
\begin{align}
   \phi(\Mat{x}) = \begin{bmatrix}
\psi_N(\Mat{w}_1^\top \Mat{x})^{\top} & \cdots & \psi_N(\Mat{w}_{K}^\top \Mat{x})^{\top}
\end{bmatrix},
\end{align}
where $K = D+1$, $L = NK = N(D+1)$, and $\Mat{W} = \begin{bmatrix} \Mat{e}_1 & \cdots & \Mat{e}_D & \Mat{w} \end{bmatrix} \in \real^{D \times K}$.
There exists a open dense subset $\Set{V} \subseteq \real^{N \times D}$ such that for any $\Mat{X}, \Mat{X'} \in \Set{V}$, $\sum_{n \in [N]} \phi(\Mat{x}^{(n)}) = \sum_{n \in [N]} \phi(\Mat{x'}^{(n)})$ implies $\Mat{X} \sim \Mat{X'}$.
\end{theorem}
\begin{proof}
Define $\Set{M} = \{\Mat{X} \in \real^{N \times K} : \rank(\Mat{X}) \le D \}$.
Since $\Mat{W}$ has full row rank, then $\tau(\Mat{X}) = \Mat{X}\Mat{W}: \real^{N \times D} \rightarrow \Set{M}$ is surjective.
Let $\Set{V} = \{\Mat{X} \in \real^{N \times D} : \tau(\Mat{X}) \in \Set{U} \}$ be the preimage of $\Set{U}$ under $\tau$.
Since $\Set{U}$ is open dense in $\Set{M}$, then $\Set{V}$ is open dense in $\real^{N \times D}$.
So far we have found an open dense set $\Set{V}$ such that for all $\Mat{X}, \Mat{X'} \in \Set{V}$, $\Mat{X}\Mat{W}, \Mat{X'}\Mat{W} \in \Set{U}$.
By Lemma \ref{lem:complex_power_sum}, $\sum_{n \in [N]} \phi(\Mat{x}^{(n)}) = \sum_{n \in [N]} \phi(\Mat{x'}^{(n)})$ implies $\Mat{X}\Mat{w}_i \sim \Mat{X'}\Mat{w}_i$ for ever $i \in [K]$.
By Lemma \ref{lem:unlabeled_pca}, it can induce $\Mat{X}\Mat{W} \sim \Mat{X'}\Mat{W}$, and namely $\Mat{X} \sim \Mat{X'}$.
\end{proof}

Theorem \ref{thm:inject_unlabeled_pca} gives a much tighter upper bound on the dimension of the embedding space, which is bilinear in $N$ and $D$.
However, it is noteworthy that this result is subject to the scenario where the input feature dimension is smaller than the set length, and the feature space is restricted to \textit{a dense subset of the ambient space}.
Moreover, it is intractable to establish the continuity over such dense set.
Our Theorem \ref{thm:inject_lp} dismisses this denseness condition, serving as a stronger results in considering \textit{all possible inputs}.
This indicates Theorem \ref{thm:inject_lp} could potentially bring new insights into the field of unlabeled sensing, which may be of an independent interest.

\section{Remark on An Error in
Proposition 3.10 in \citet{fereydounian2022functions}}

\citeauthor{fereydounian2022functions} examine the expressiveness of GNNs with a mathematical tool summarized in Proposition 3.10, which in turn seems to indicate a much tighter upper bound $ND^2$ for the size of the embedding space for set representation.
However, as we will show later, their proof might be deficient, or at least incomplete in the assumptions.

We rephrase their Proposition 3.10 in our language as below:
\begin{claim}[An incorrect claim] \label{claim:pairwise_align}
Suppose $\phi: \real^D \rightarrow \real^L$ where $L = N^2D$ takes the following form:
\begin{align}
\phi(\Mat{x})_{i, j, l} = \left\{\begin{array}{ll} \Re\left((\Mat{x}_i + \Mat{x}_j \sqrt{-1})^l\right), & i > j \\
\\
\Im\left((\Mat{x}_i + \Mat{x}_j \sqrt{-1})^l\right), & i \le j
\end{array}\right.,
\end{align}
for every $i, j \in [D], l \in [N]$.
Then $\sum_{n \in [N]} \phi(\Mat{x}^{(n)})$ is injective.
\end{claim}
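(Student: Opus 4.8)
The plan is to follow the only natural route — the pair-coupling idea already used for the LP and LE layers — and see where it stalls. Fix two channels $i\neq j$. Up to signs and a reindexing governed by $l\bmod 4$, the coordinates $\{\phi(\Mat{x})_{i,j,l}\}_{l\in[N]}$ and $\{\phi(\Mat{x})_{j,i,l}\}_{l\in[N]}$ are the real and imaginary parts of the powers $(\Mat{x}_i+\Mat{x}_j\sqrt{-1})^l$ (using $\Mat{x}_j+\Mat{x}_i\sqrt{-1}=\sqrt{-1}\,\overline{\Mat{x}_i+\Mat{x}_j\sqrt{-1}}$ entrywise), and the diagonal coordinate $(i,i)$ equals $(\Mat{x}_i)^l$ times a nonzero constant whenever $4\nmid l$. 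So from $\sum_n\phi(\Mat{x}^{(n)})=\sum_n\phi(\Mat{x'}^{(n)})$ one tries to read off $\sum_n(x_i^{(n)}+x_j^{(n)}\sqrt{-1})^l=\sum_n(x_i'^{(n)}+x_j'^{(n)}\sqrt{-1})^l$ for every $l\in[N]$, then invoke the injectivity of the complex sum-of-power mapping (Lemma~\ref{lem:complex_power_sum}) to conclude $\begin{bmatrix}\Mat{x}_i & \Mat{x}_j\end{bmatrix}\sim\begin{bmatrix}\Mat{x'}_i & \Mat{x'}_j\end{bmatrix}$, and similarly $\Mat{x}_i\sim\Mat{x'}_i$. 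One must already be careful here: for a single pair the construction exposes \emph{all} the values of one part but only the \emph{even-degree} values of the other, so recovering the full list of power sums $\{\sum_n w_n^l\}_{l\in[N]}$ forces one to combine the two orderings $(i,j)$, $(j,i)$ and the diagonals, and even then the odd-degree real parts are not obviously available.

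Granting the first step, the second and decisive step would be to pass from pairwise equivalence of all two-column projections to the global equivalence $\Mat{X}\sim\Mat{X'}$. \emph{This is precisely where the argument fails.} In contrast to the LP and LE constructions of this paper, the map $\phi$ here produces no anchor: nothing forces the permutation matching $\begin{bmatrix}\Mat{x}_i & \Mat{x}_j\end{bmatrix}$ to $\begin{bmatrix}\Mat{x'}_i & \Mat{x'}_j\end{bmatrix}$ to be the same across different pairs, so the union-alignment lemma (Lemma~\ref{lem:union_align}) cannot be applied. Pairwise alignment of all column pairs (and of all single columns) does not imply $\Mat{X}\sim\Mat{X'}$ once $N\geq 3$, and one makes this concrete by exhibiting $\Mat{X}\not\sim\Mat{X'}$ with $\begin{bmatrix}\Mat{x}_i & \Mat{x}_j\end{bmatrix}\sim\begin{bmatrix}\Mat{x'}_i & \Mat{x'}_j\end{bmatrix}$ for every $i,j$ and then verifying it against the explicit formula for $\phi$.

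Thus the honest conclusion of this plan is a refutation rather than a proof: Claim~\ref{claim:pairwise_align} is false as stated, and the gap in Proposition 3.10 of \cite{fereydounian2022functions} is exactly this missing global-consistency mechanism — a correct statement needs an extra hypothesis such as one channel being an anchor of $\Mat{X}$, which is what Lemma~\ref{lem:anchor_con} supplies at the cost of the additional channels that push $L$ from $N^2D$ up to the bounds of Theorem~\ref{thm:main}. The main obstacle is therefore not any computation but this structural point; the remaining work is only to pin down a clean counterexample and check it.
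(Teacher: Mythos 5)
Your diagnosis coincides with the paper's: the paper likewise accepts the reduction to $(\Mat{x}_i+\Mat{x}_j\sqrt{-1})\sim(\Mat{x'}_i+\Mat{x'}_j\sqrt{-1})$ for all $i,j$, hence to pairwise equivalence $\begin{bmatrix}\Mat{x}_i & \Mat{x}_j\end{bmatrix}\sim\begin{bmatrix}\Mat{x'}_i & \Mat{x'}_j\end{bmatrix}$, and then locates the error exactly where you do — in the final implication to $\Mat{X}\sim\Mat{X'}$, phrased there as: pairwise non-empty intersections of the sets $\Set{P}_i=\{\Mat{P}\in\Pi(N):\Mat{P}\Mat{x'}_i=\Mat{x}_i\}$ do not force $\bigcap_i\Set{P}_i\neq\emptyset$ unless some column is an anchor. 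So the structural point, and the remark that Lemma~\ref{lem:anchor_con} is what repairs it at the cost of the larger $L$ in Theorem~\ref{thm:main}, are both exactly right.

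The one genuine gap is that you stop at ``pin down a clean counterexample and check it.'' Without the counterexample you have only shown that \emph{this proof strategy} fails, not that Claim~\ref{claim:pairwise_align} is false; the refutation is the counterexample. The paper supplies one with $N=4$, $D=3$: take $\Mat{x}_1=\Mat{x'}_1=(1,1,2,2)^\top$, $\Mat{x}_2=\Mat{x'}_2=(1,2,1,2)^\top$, $\Mat{x}_3=(1,2,2,1)^\top$, $\Mat{x'}_3=(2,1,1,2)^\top$; explicit permutations align every two-column submatrix, yet $\Mat{X}\not\sim\Mat{X'}$. The check against $\phi$ is then immediate and needs none of your parity bookkeeping: each coordinate of $\sum_n\phi(\Mat{x}^{(n)})$ is a symmetric function of the multiset of pairs $\{(x_i^{(n)},x_j^{(n)})\}_{n\in[N]}$, so pairwise equivalence of all two-column submatrices already forces $\sum_n\phi(\Mat{x}^{(n)})=\sum_n\phi(\Mat{x'}^{(n)})$ regardless of whether the first step of the purported proof is a genuine equivalence. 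For the same reason your worry about the odd-degree real parts, while a fair observation about the forward reduction, is moot for the disproof. (Minor quibbles: the counterexample needs $D\ge 3$, not just $N\ge 3$, and your claimed threshold $N\ge 3$ is not substantiated — the paper's example uses $N=4$.)
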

The authors' proof technique can be illustrated via the following chain of arguments: for every $\Mat{X}, \Mat{X'} \in \real^{N \times D}$,
\begin{align}
& \sum_{n \in [N]} \phi(\Mat{x}^{(n)}) = \sum_{n \in [N]} \phi(\Mat{x'}^{(n)}) \xRightarrow{\text{Lemma \ref{lem:complex_power_sum}}} (\Mat{x}_i + \Mat{x}_j \sqrt{-1}) \sim (\Mat{x'}_i + \Mat{x'}_j \sqrt{-1}), \forall i,j \in [D] \\
& \xRightarrow{} \begin{bmatrix} \Mat{x}_i & \Mat{x}_j \end{bmatrix} \sim \begin{bmatrix} \Mat{x'}_i & \Mat{x'}_j \end{bmatrix}, \forall i,j \in [D] \xRightarrow{(*)} \Mat{X} \sim \Mat{X'}.
\end{align}
While the first two steps is correct, the last implication $(*)$ is not true in general unless one of $\Mat{x}_i, i \in [D]$ happens to be an anchor of $\Mat{X}$.
We formally disprove this argument below.

Consider $\Mat{X}, \Mat{X'} \in \real^{N \times D}$ and let $\Set{P}_{i} = \{\Mat{P} \in \Pi(N) : \Mat{P} \Mat{x'}_i = \Mat{x}_i \}$.
Then $\begin{bmatrix} \Mat{x}_i & \Mat{x}_j \end{bmatrix} \sim \begin{bmatrix} \Mat{x'}_i & \Mat{x'}_j \end{bmatrix}$ for every $i, j \in [D]$ is equivalent to saying $\Set{P}_{i} \cap \Set{P}_{j} \neq \emptyset, \forall i, j \in [D]$.
While $\Mat{X} \sim \Mat{X'}$ is identical to $\bigcap_{i \in [D]} \Set{P}_{i} \neq \emptyset$. 
It is well-known that intersection between each pair of sets is non-empty cannot necessarily imply the intersection among all sets is non-empty, i.e., $\Set{P}_{i} \cap \Set{P}_{j} \neq \emptyset, \forall i, j \in [D] \not\Rightarrow \bigcap_{i \in [D]} \Set{P}_{i} \neq \emptyset$, which disproves this result.

This also reveals the significance of the our defined anchor. Suppose $\Mat{x}_{i^*}$ for some $i^* \in [D]$ is an anchor of $\Mat{X}$. Then by Lemma \ref{lem:union_align_supp}, $\bigcap_{i \in [D]} \Set{P}_{i} = \Set{P}_{i^*}$.
Thus $\Set{P}_{i^*} \cap \Set{P}_{j} \neq \emptyset$ for every $j \in [D]$ implies $\Set{P}_{i^*} \neq \emptyset$, which essentially says $\bigcap_{i \in [D]} \Set{P}_{i} \neq \emptyset$.

Specifically, we can construct a counter-example.
Suppose $\Mat{X} = \begin{bmatrix} \Mat{x}_1 & \Mat{x}_2 & \Mat{x}_3 \end{bmatrix}, \Mat{X'} = \begin{bmatrix} \Mat{x'}_1 & \Mat{x'}_2 & \Mat{x'}_3 \end{bmatrix}$ take values as below,
\begin{align}
\Mat{x}_1 = \Mat{x'}_1 = \begin{bmatrix} 1 \\ 1 \\ 2 \\ 2 \end{bmatrix},
\Mat{x}_2 = \Mat{x'}_2 = \begin{bmatrix} 1 \\ 2 \\ 1 \\ 2 \end{bmatrix},
\Mat{x}_3 = \begin{bmatrix} 1 \\ 2 \\ 2 \\ 1 \end{bmatrix},
\Mat{x'}_3 = \begin{bmatrix} 2 \\ 1 \\ 1 \\ 2 \end{bmatrix},
\end{align}
and we can see:
\begin{align}
\begin{bmatrix} 1 & 1 \\ 1 & 2 \\ 2 & 1 \\ 2 & 2 \end{bmatrix} = \begin{bmatrix} 1 & 0 & 0 & 0 \\ 0 & 1 & 0 & 0 \\ 0 & 0 & 1 & 0 \\ 0 & 0 & 0 & 1 \end{bmatrix} \begin{bmatrix} 1 & 1 \\ 1 & 2 \\ 2 & 1 \\ 2 & 2 \end{bmatrix} \Rightarrow \begin{bmatrix} \Mat{x}_1 & \Mat{x}_2 \end{bmatrix} \sim \begin{bmatrix} \Mat{x'}_1 & \Mat{x'}_2 \end{bmatrix}, \\
\begin{bmatrix} 1 & 1 \\ 1 & 2 \\ 2 & 2 \\ 2 & 1 \end{bmatrix} = \begin{bmatrix} 0 & 1 & 0 & 0 \\ 1 & 0 & 0 & 0 \\ 0 & 0 & 0 & 1 \\ 0 & 0 & 1 & 0 \end{bmatrix} \begin{bmatrix} 1 & 2 \\ 1 & 1 \\ 2 & 1 \\ 2 & 2 \end{bmatrix} \Rightarrow \begin{bmatrix} \Mat{x}_1 & \Mat{x}_3 \end{bmatrix} \sim \begin{bmatrix} \Mat{x'}_1 & \Mat{x'}_3 \end{bmatrix}, \\
\begin{bmatrix} 1 & 1 \\ 2 & 2 \\ 1 & 2 \\ 2 & 1 \end{bmatrix} = \begin{bmatrix} 0 & 0 & 1 & 0 \\ 0 & 0 & 0 & 1 \\ 1 & 0 & 0 & 0 \\ 0 & 1 & 0 & 0 \end{bmatrix} \begin{bmatrix} 1 & 2 \\ 2 & 1 \\ 1 & 1 \\ 2 & 2 \end{bmatrix} \Rightarrow \begin{bmatrix} \Mat{x}_2 & \Mat{x}_3 \end{bmatrix} \sim \begin{bmatrix} \Mat{x'}_2 & \Mat{x'}_3 \end{bmatrix}.
\end{align}
However, notice that:
\begin{align}
\begin{bmatrix} 1 & 1 & 1 \\ 1 & 2 & 2 \\ 2 & 1 & 2 \\ 2 & 2 & 1 \end{bmatrix} \not\sim \begin{bmatrix} 1 & 1 & 2 \\ 1 & 2 & 1 \\ 2 & 1 & 1 \\ 2 & 2 & 2 \end{bmatrix} \Rightarrow \Mat{X} = \begin{bmatrix} \Mat{x}_1 & \Mat{x}_2 & \Mat{x}_3 \end{bmatrix} \not\sim \Mat{X'} = 
 \begin{bmatrix} \Mat{x'}_1 &  \Mat{x'}_2 & \Mat{x'}_3 \end{bmatrix},
\end{align}
which contradicts the implication $(*)$ in Claim \ref{claim:pairwise_align}.

\end{document}